\documentclass[opre,nonblindrev]{informs3aa} % current default for manuscript submission
\OneAndAHalfSpacedXI % current default line spacing

%%%%%%%%%%%%%%%%%%%%%%%%%%%%%%%%%%%%%%%%%%%%%%%%%%%%%%%%
\usepackage{endnotes}
\let\footnote=\endnote

%

%%%%%%%%%%%%%%%%%%%%%%%%%%%%%%%%%%%%%%%%%%%%%%%%%%%%%%%%
% Private macros here (check that there is no clash with the style)
% Natbib setup for author-year style
\usepackage{natbib}
 \bibpunct[, ]{(}{)}{,}{a}{}{,}%
 \def\bibfont{\small}%
 %
 %
 %
 %
%%%%%%%%%%%%%%%%%%%%%%%%%%%%%%%%%%%%%%%%%%%%%%%%%%%%%%%%
%% Hyperref setup
\usepackage[colorlinks=true,breaklinks=true,bookmarks=false,urlcolor=blue,
     citecolor=blue,linkcolor=blue,bookmarksopen=false,draft=false]{hyperref}

\def\EMAIL#1{\href{mailto:#1}{#1}}% When hyperref is used, otherwise outcomment
         % When hyperref is used, otherwise outcomment

%%%%%%%%%%%%%%%%%%%%%%%%%%%%%%%%%%%%%%%%%%%%%%%%%%%%%%%%
\usepackage{booktabs,multirow,graphics,latexsym,amsfonts,epsfig,verbatim,amsmath,algorithm,algorithmic,color,tabularx,mathtools,subcaption}
%%%%%%%%%%%%%%%%%%%%%%%%%%%%%%%%%%%%%%%%%%%%%%%%%%%%%%%%
\def\QED{\hfill \quad{\bf Q.E.D.}\medskip}

%%%%%%%%%%%%%%%%%%%%%%%%%%%%%%%%%%%%%%%%%%%%%%%%%%%%%%%%

%%%%%%%%%%%%%%%%%%%%%%%%%%%%%%%%%%%%%%%%%%%%%%%%%%%%%%%%
%% Setup of theorem styles. Outcomment only one.
%% Preferred default is the first option.
\TheoremsNumberedThrough     % Preferred (Theorem 1, Lemma 1, Theorem 2)
%\TheoremsNumberedByChapter  % (Theorem 1.1, Lema 1.1, Theorem 1.2)
\ECRepeatTheorems
%% Setup of the equation numbering system. Outcomment only one.
%% Preferred default is the first option.
\EquationsNumberedThrough    % Default: (1), (2), ...
%\EquationsNumberedBySection % (1.1), (1.2), ...

%%%%%%%%%%%%%%%%
\begin{document}
%%%%%%%%%%%%%%%%

\RUNAUTHOR{Deng and Jia}
\RUNTITLE{Decentralized Contextual Bandits with Network Adaptivity}
\TITLE{Decentralized Contextual Bandits with Network Adaptivity}

\ARTICLEAUTHORS{%
\AUTHOR{Chuyun Deng, Huiwen Jia}
\AFF{Department of Industrial Engineering and Operations Research, University of California, Berkeley\\ 
\{\EMAIL{chuyun\_deng@berkeley.edu, huiwenj@berkeley.edu}\}} %, 
} 

\ABSTRACT{
We consider contextual linear bandits over networks, a class of sequential decision-making problems where learning occurs simultaneously across multiple locations and the reward distributions share structural similarities while also exhibiting local differences.
While classical contextual bandits assume either fully centralized data or entirely isolated learners, much remains unexplored in networked environments when information is partially shared.
In this paper, we address this gap by developing two network-aware Upper Confidence Bound (UCB) algorithms, NetLinUCB and Net-SGD-UCB, which enable adaptive information sharing guided by dynamically updated network weights. Our approach decompose learning into global and local components and as a result allow agents to benefit from shared structure without full synchronization.
Both algorithms incur lighter communication costs compared to a fully centralized setting as agents only share computed summaries regarding the homogeneous features.
We establish regret bounds showing that our methods reduce the learning complexity associated with the shared structure from $O(N)$ to sublinear $O(\sqrt{N})$, where $N$ is the size of the network. 
The two algorithms reveal complementary strengths: NetLinUCB excels in low-noise regimes with fine-grained heterogeneity, while Net-SGD-UCB is robust to high-dimensional, high-variance contexts. 
We further demonstrate the effectiveness of our methods across simulated pricing environments compared to standard benchmarks.
}

\KEYWORDS{online learning, contextual bandits, distributed learning, federated learning}

\maketitle

\section{Introduction}
Dynamic decision-making over networked systems is central to many real-world applications, including city-level dynamic pricing \citep[e.g.,][]{qu2025decision}, distributed recommendation systems \citep[e.g.,][]{cai2024distributed}, and decentralized resource allocation \citep[e.g.,][]{asadpour2020online}. In this paper, we focus on a contextual multi-armed bandit (MAB) framework motivated by a dynamic pricing problem faced by online on-demand platforms such as Uber and DoorDash. Consider a scenario in which a platform must set prices across multiple locations for the same service  (e.g., a ride or food delivery). Each location may exhibit distinct demand patterns. For example, a tourist-heavy city may experience seasonal surges during holidays or festivals, while another may be more sensitive to commuter traffic or local supply constraints. At the same time, cities often share structural factors, such as population density, competitor presence, or event schedules, which influence demand in similar ways. To optimize revenue, the platform needs to learn effective pricing strategies over time by leveraging both local signals and transferable knowledge across cities. This setting naturally lends itself to a contextual multi-armed bandit (MAB) formulation, which has emerged as a core framework for sequential decision-making in personalized and adaptive environments \citep{abbasi-yadkori_improved_2011,li_contextual-bandit_2010,chu_contextual_2011,martinez-rubio_decentralized_2019,mahadik_fast_2020,jia_online_2022}.

A straightforward application of contextual MAB to this decision-making problem yields two benchmark approaches. Both formulate each discrete price level as an arm and treat the observed context, including both shared and city-specific features that affect demand, as a whole context. The first constructs an independent contextual MAB for each city, learning pricing strategies based solely on local observations. Although this respects local heterogeneity, it fails to leverage shared structure across cities. The second approach aggregates all data into a single centralized contextual MAB, maximizing information sharing but at the cost of increased communication and a loss of granularity in capturing local demand nuances. We further introduce the details of these two benchmark algorithms in Sections \ref{sec:benchmark}.

Our goal is to develop algorithms that balance the extremes of fully decentralized and fully centralized learning, enabling effective knowledge transfer across the network while preserving node-level autonomy and minimizing communication overhead. This introduces a new layer of complexity. At the inner layer, we face the classical exploration–exploitation dilemma inherent to online learning, typically measured by cumulative regret. At the outer layer, we must manage a distinct trade-off between communication cost and learning performance: greater communication can accelerate learning and reduce regret, but at the expense of efficiency and scalability. Designing algorithms that capture both levels of trade-off is central to our work.

Recent work has explored various directions to bridge this gap \citep[e.g.,][]{martinez-rubio_decentralized_2019,kolla_collaborative_2016}.
However, these approaches either rely on a central server or focus on a non-contextual setting. To address these limitations, we propose a new class of decentralized contextual bandit algorithms that enable principled information sharing across agents through dynamically learned network weights, while preserving local autonomy and operating without a central server. Our approach builds upon and extends previous work by incorporating real-time contextual updates, decentralized gradient-based confidence bounds, and variance-sensitive exploration strategies.

\subsection{Related Work}

We refrain from surveying the extensive bandit literature and instead focus on three relevant clusters of work.

\textbf{Upper Confidence Bound Methods.} To address the core exploration-exploitation trade-off, two main approaches prevail, UCB-based methods~\citep{chu_contextual_2011,abbasi-yadkori_improved_2011,garivier_upper-confidence_2011,wang_multi-modal_2019,zhou_neural_2020}, which construct optimistic confidence bounds for efficient exploration, and Thompson Sampling~\citep{agrawal_further_2012,agrawal_thompson_2013,ferreira_online_2018,neu_lifting_2022,xu_noise-adaptive_2023}, which leverages posterior sampling for probabilistic action selection. Beyond these, a growing body of alternative methods has emerged, including information-theoretic~\citep{zhou_learning_2019,marinov_pareto_2021}, PAC-based~\citep{li_instance-optimal_2022}, and neural strategies~\citep{sezener_online_2020,zhou_neural_2020,qi_robust_2024}. These methods have been successfully applied to real-world applications and deployed in industrial-scale systems \citep{li_contextual-bandit_2010,li_exploitation_2010,durand_contextual_2018,jia_online_2022-1,jia2024online}. In this paper, we propose two UCB-based algorithms achieving sublinear regret on the learning horizon and the network size.

\textbf{Interactive Bandits.} Recent advances in interactive bandits provide promising directions to address this challenge and opportunity arising from richer structure beyond classic bandit settings, including clustering bandits~\citep{korda_distributed_2016,ghosh_multi-agent_2022}, multi-task linear bandits~\citep{yang_impact_2021,hu_near-optimal_2021}, and bandits with graph feedback \citep{wen_stochastic_2024}.
Unlike our setting where we consider a network, i.e., graph, over contextual bandit problems, \citet{wen_stochastic_2024} considers a graph of arms and pulling one arm results in several reward feedback. \citet{szorenyi_gossip-based_2013} combined peer-to-peer communication networks with the $\epsilon$‑greedy strategy for distributed multi‑armed bandits and \citet{martinez-rubio_decentralized_2019} proposed a fully decentralized UCB model to address the classic MAB problem. We adopt the same problem formulation but extend it to the contextual setting. 
\citet{wang_distributed_2020} studied communication‑efficient distributed linear bandits where multiple agents interact with a central server by periodically exchanging information, under the assumption that all agents share the same bandit model. Building on this setup, \citet{do_multi-agent_2023} introduced an $\epsilon$‑similarity constraint on model parameters to allow for agent-level personalization, while still relying on centralized communication. Similarly, \citet{dubey_differentially-private_2020} enhanced privacy through differentially private updates and periodic aggregation, which also retains a centralized architecture. In contrast, our work avoids reliance on a central server and focuses on decentralized real-time updates. 

\textbf{Global and Local Learning.} A parallel line of research explores the decomposition of global and local learning. \citet{shi_federated_2021} introduced a combination of global and local exploration with multiple simultaneous pulls and employed trimming techniques to preserve personalization. \citet{huang_federated_2021} developed federated linear bandits that share global parameters while retaining local ones, but their updates are not performed in real time. \citet{li_asynchronous_2021} followed a similar pattern by jointly estimating global parameters for online learning, achieving an \(O(\sqrt{T} \log T)\) regret bound without explicitly designing weights. While their approach relies on global parameter updates, it does not incorporate fine-grained similarity across agents. Meanwhile, \citet{kolla_collaborative_2016} studied networked multi-armed bandits, where each node exploits neighbors’ arm-selection patterns to improve decisions. Inspired by this idea, we extend neighborhood collaboration to the contextual linear setting using a fully connected graph, where edge weights reflect both arm-selection frequency and context similarity. Our algorithms leverage this structure to perform real-time updates that balance global sharing with agent-specific learning.

Lastly, we would like to mention two works closely related to our second algorithm, Net-SGD-UCB. \citet{ba_doubly_2024} explored doubly optimal learning in monotone games, highlighting the role of gradient sharing in distributed settings. These models strongly motivate our Net-SGD-UCB algorithm, which unifies confidence-based exploration with decentralized gradient aggregation. In addition, \citet{zhou_neural_2020} inspires our design of a novel gradient-based confidence radius, transitioning from the paper's form  \(g^\top G g\) to a context-aware variant \(\mathbf{x}^\top G \mathbf{x}\), allowing gradient information to be integrated more directly with contextual structure and supporting decentralized, variance-sensitive exploration.

\subsection{Our Contributions}

To build a network-aware bandit framework, we develop two online algorithms, NetLinUCB and Net-SGD-UCB, that extend linear contextual bandits to decentralized, multi-agent settings with adaptive information sharing. 
\begin{itemize}
    \item NetLinUCB builds on the classical LinUCB framework~\citep{li_contextual-bandit_2010,chu_contextual_2011} and introduces a neighbor-weighted estimator. Each agent maintains its own local parameters but sequentially incorporates estimates from its neighbors to determine the point estimate and corresponding confidence radius, weighted according to exploration statistics and contextual similarity. 
    \item Net-SGD-UCB applies stochastic gradient descent (SGD) with momentum~\citep{bottou_stochastic_2012,kingma_adam_2017}, combined with adaptive learning rates inspired by AdaGrad~\citep{duchi_adaptive_2011}. It performs local updates for each agent while incorporating network aggregation to guide learning. This variant is particularly suited to high-dimensional contexts and streaming data scenarios, where memory and computation are limited. 
\end{itemize}
 
We prove that NetLinUCB achieves \(O( \sqrt{\frac{d_cT N}{c} \log (\frac{cTN}{d_c})} + \sum_{i=1}^{N} d_{i, s} \sqrt{T \log T})\) regret, where \(N\) is the number of nodes, \(d_c\) is the shared feature dimension, \(d_{i,s}\) is the node-specific feature dimension, and \( c \) is a context‑diversity constant, defined as a lower bound on the minimum eigenvalue of the empirical context covariance matrix, and it depends on the common feature dimension \(d_c\). Net-SGD-UCB achieves a regret of 
\(O( \frac{\sqrt{ N d_c T \log (\sigma^2 T})}{1 - \gamma} + \sum_{i=1}^N \frac{\sqrt{2d_{i,s}T \log (\sigma^2 T)}}{(1 - \gamma)\sigma} + \sigma N \sqrt{\frac{(1 - \mu)T\log T}{(1 + \mu)(1 - \gamma)}})\), where \(\gamma\) is the EMA smoothing factor, \(\mu\) is the momentum parameter, \(d_c\) is the shared feature dimension, \(d_{i,s}\) is the node-specific feature dimension, and \(\sigma\) is the variance of noise. We compare the proposed algorithms with two benchmark models in Section \ref{sec:insight}. In summary, our methods reduce the regret associated with the shared feature space from a linear dependence on the number of nodes $O(N)$ to sublinear $O(\sqrt{N})$, significantly improving sample efficiency.

We validate our models through extensive experiments simulating a distributed pricing system over a network of cities. Both NetLinUCB and Net-SGD-UCB consistently outperform benchmark algorithms, demonstrating faster convergence and significantly lower cumulative regret. The results also reveal several important managerial insights. First, when network connectivity increases, our models exhibit more efficient learning by leveraging information across cities. Second, NetLinUCB is especially effective when reward differences are small, leveraging fine-grained similarity between cities to guide arm selection. Third, Net-SGD-UCB demonstrates strong robustness in the presence of high-variance or noisy contextual features, benefiting from adaptive learning and variance-aware exploration.

\subsection{Structure of the Paper}
The remainder of this paper is organized as follows. In Section~\ref{sec:problem}, we introduce the decentralized contextual bandit problem and formalize the regret definition under networked decision-making. Section~\ref{sec:online-algo} presents the benchmark algorithms (Section~\ref{sec:benchmark}), Disjoint LinUCB and Shared LinUCB, and describes our proposed methods: NetLinUCB (Section~\ref{sec:netlinucb}) and Net-SGD-UCB (Section~\ref{sec:sgd-ucb}), which incorporate an adaptive network weight matrix (Section~\ref{sec:weight}). In Section~\ref{sec:regret-bound}, we provide theoretical guarantees for all algorithms, including detailed regret bounds and structural insights. Section~\ref{sec:numerical} reports numerical experiments across multiple instances, including convergence rates with respect to time steps \(T\) and number of nodes \(N\). In Section~\ref{sec:conclusion}, we conclude the paper and discuss future research directions. For clarity, all major notation is summarized in Appendix~\ref{app:notation}, and detailed algorithmic steps and proofs are provided in Appendices~\ref{app:disjoint_linucb} through~\ref{app:numerical}.

\section{Problem Formulation}\label{sec:problem}

We consider a network of \(N\) nodes. At each node, a decentralized agent is solving a linear contextual MAB instance over \(T\) synchronous rounds. The agents share a common problem structure but differ in their local model parameters to capture node-level heterogeneity. At each time step \(t\), the agent at node \(i \in [N]\) observes a context vector \(\mathbf{x}_{i,t} \in \mathbb{R}^{d_i}\), which consists of a shared component \(\mathbf{x}_{i,c,t} \in \mathbb{R}^{d_c}\) and a node-specific component \(\mathbf{x}_{i,s,t} \in \mathbb{R}^{d_{i,s}}\), such that \(\mathbf{x}_{i,t} = [\mathbf{x}_{i,c,t},\ \mathbf{x}_{i,s,t}]\) with \(d_i = d_c + d_{i,s}\). The agent then selects an arm \(a_{i,t} \in \mathcal{A} = \{a^{(1)}, \ldots, a^{(K)}\}\), receives the corresponding reward \(r^k_{i,t}\), and updates its local history \(\mathcal{D}_{i,t} = \{(\mathbf{x}_{i,\tau}, a_{i,\tau}, r_{i,\tau})\}_{\tau=1}^{t}\).

For each node \(i\) and arm \(a^{(k)}\), the reward is modeled as a linear function of the context, \(
r_{i,t}^k = \mathbf{x}_{i,t}^\top \theta_i^k + \epsilon_{i,t}^k\), where \( \epsilon_{i,t}^k \sim \mathcal{N}(0, \sigma_k^2) \) is zero-mean Gaussian noise. The expected reward is therefore 
\(\mathbb{E}[r_{i,t}^k \mid \mathbf{x}_{i,t}] = \mathbf{x}_{i,t}^\top \theta_i^k\). The linear reward structure is widely adopted in the contextual bandit literature
\citep{li_contextual-bandit_2010,abbasi-yadkori_improved_2011,dimakopoulou_estimation_2018,wang_multi-modal_2019,huang_federated_2021,xu_noise-adaptive_2023}.
Each arm \(a^{(k)}\) in node \(i\) is associated with an unknown coefficient vector, \(\mathbf{\theta}_i^k = [\mathbf{\theta}_c^k,\ \mathbf{\theta}_{i,s}^k] \in \mathbb{R}^{d_i}
\), where \( \mathbf{\theta}_c^k \in \mathbb{R}^{d_c} \) is the shared parameter across all nodes, and \( \mathbf{\theta}_{i,s}^k \in \mathbb{R}^{d_{i,s}} \) is the node-specific parameter for node \(i\). Under this decomposition, the expected reward for arm \( a^{(k)} \) in node \( i \) at time \( t \) is \(
\mathbb{E}[r_{i,t}^k \mid \mathbf{x}_{i,t}] = [\mathbf{x}_{i, c}, \mathbf{x}_{i,s}]^\top [\mathbf{\theta}_c^k, \mathbf{\theta}_{i,s}^k]
\).

\begin{remark} \textbf{Communication Network.}
In the above setting, we consider a collection of linear contextual bandit problems, where the shared feature space across all nodes has dimension at least one, i.e., $d_c \geq 1$. This condition implies that communication can be helpful between any two nodes. It is equivalent to initializing communication network as fully connected. Importantly, both our proposed algorithms dynamically construct an adaptive weight matrix $\Omega$ to guide inter-node information sharing, introduced in Section \ref{sec:weight}. As learning progresses, this mechanism allows the network structure to evolve from fully connected to partially connected or even disjoint structures—based on the informational relevance among agents.
\end{remark}

Let \(a_{i,t}^*\) denote the optimal arm, \(a_{i,t}^* = \arg\max_{k \in \mathcal{A}} \mathbf{x}_{i,t}^\top \theta_i^k\), and \(a_{i,t}\) denote the selected arm by the algorithm for node $i$ at time step $t$. The objective is to maximize cumulative expected reward across all nodes, \(
\max \mathbb{E}\left[ \sum_t \sum_i r_{i,t}^{a_{i,t}} \right]\). We adapt the notion of regret and per-trial payoff \citep[e.g.,][]{li_contextual-bandit_2010,li_unbiased_2011} as the performance metric.
%{\color{red} Add a definition for cumulative regret.\color{blue} Done.}
The \emph{instantaneous regret} quantifies the difference between the expected reward of the optimal arm and that of the arm selected by the algorithm at a single time step. The \emph{cumulative regret} aggregates this difference across all trials, measuring the gap between the optimal cumulative reward and the actual cumulative reward.

\begin{definition}[Cumulative Regret]\label{def:regret}
The cumulative regret over the network is defined as
\[
R(T) = 
\mathbb{E}\left[\sum_{t=1}^T \sum_{i=1}^N r_{i,t}^{a_{i,t}^*} \right]
- 
\mathbb{E}\left[\sum_{t=1}^T \sum_{i=1}^N r_{i,t}^{a_{i,t}} \right],
\]
where \(a_{i,t}^*\) is the optimal arm for node \(i\) at time step \(t\), and \(a_{i,t}\) is the arm chosen by the algorithm.
\end{definition}

\begin{assumption}[Bounded contexts and parameters]\label{assump:bounded}
The context vectors \(\mathbf{x}_{i,t}\) and arm-specific parameter vectors \(\theta_i^k\) are bounded such that \(\|\mathbf{x}_{i,t}\|_2 \leq 1\), \(\|\theta_i^k\|_2 \leq 1\) for all \(i\), \(k\), and \(t\).
\end{assumption}

\begin{remark}
Assumption~\ref{assump:bounded} is standard in linear bandit analysis. It guarantees that the expected rewards are uniformly bounded, which is essential for applying concentration inequalities and deriving regret bounds. In practice, such boundedness can be achieved via normalization of both features and parameters.
\end{remark}

\section{Online Algorithms}\label{sec:online-algo}
We present four algorithms in this section, two benchmarks and two newly proposed algorithms. 
In Section \ref{sec:benchmark}, we introduce two benchmark algorithms adapted from existing contextual bandits literature and tailored for our setting. In Sections \ref{sec:netlinucb} and \ref{sec:sgd-ucb}, we propose two new algorithms specifically designed to facilitate the learning with the consideration of the shared similarities across the network. We derive the theoretical performance guarantee for each of them in Section \ref{sec:regret-bound}, and then evaluate their numerical performance in Section \ref{sec:numerical}. Both proposed algorithms consider information sharing across the network. Specifically, \textbf{NetLinUCB} enables cross-node sharing via adaptive weights, while \textbf{Net-SGD-UCB} combines momentum SGD with UCB to enable efficient decentralized learning and adaptive confidence bound construction.

\begin{figure}[htbp]
    \centering
    \includegraphics[width=0.7\columnwidth]{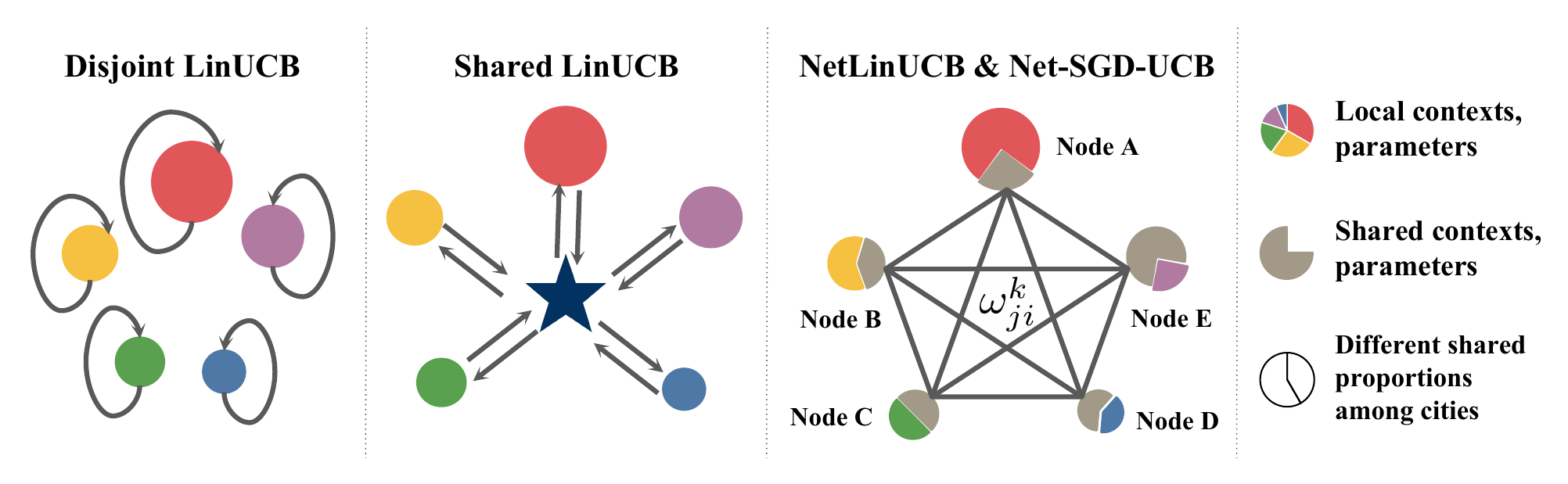}
    \caption{Information sharing structures.}
    \label{fig:model-comparison}
\end{figure}

\subsection{Benchmark Algorithms}\label{sec:benchmark}

We consider two benchmark algorithms, Disjoint LinUCB and Shared LinUCB, adapted from the well-established contextual bandit literature~\citep[][]{abbasi-yadkori_improved_2011,chu_contextual_2011}.
Disjoint LinUCB treats each node independently, ignoring any similarities across nodes. In contrast, Shared LinUCB aggregates all data and treats all nodes as a single environment with shared parameters. These two methods represent two extremes in modeling cross-node heterogeneity and serve as natural baselines to evaluate the benefits of network-aware learning.

In \textbf{Disjoint LinUCB}, each node $i \in [N]$ maintains an independent LinUCB model and estimates the parameters of all arms with its own local contexts only. At each round, the agent on each node selects the arm that maximizes the upper confidence bound, which is derived based on ridge regression and a context-aware exploration radius. The cumulative regret is \(O\left( \sum_{i=1}^N d_i \sqrt{T \log T} \right)\) as established in Theorem 3 of~\citet{das_linear_2024}. We provide detailed algorithmic steps and regret analysis in Appendix~\ref{sec:disjoint_formulation} and~\ref{sec:disjoint_analysis}.

In \textbf{Shared LinUCB}, which is a centralized variant, a central agent models all nodes within a unified parameter space by concatenating shared and node-specific features into a global context vector. For each node, the context corresponds to a distinct sub-block of the global vector, resulting in a block-sparse embedding. Shared LinUCB applies LinUCB over the aggregated context space, yielding a regret bound of \(O\left( (d_c + \sum_{i=1}^{N} d_{i,s}) \sqrt{T \log T} \right)\), extending the classical result of \citep{das_linear_2024} to a higher-dimensional parameter setting. Detailed algorithmic steps and proofs are deferred to Appendix~\ref{sec:whole_formulation} and~\ref{sec:whole_analysis}.

Disjoint LinUCB is computationally efficient but lacks information sharing, resulting in slow convergence. Shared LinUCB mitigates this by leveraging shared features, but incurs high computational cost due to its large parameter space. Our proposed algorithms, NetLinUCB and Net-SGD-UCB, address both limitations through structured information exchange and scalable, decentralized estimation. In Section~\ref{sec:numerical}, we demonstrate that our models outperform Disjoint LinUCB through cross-agent learning and achieve faster convergence without sacrificing local adaptivity. Moreover, our methods match or exceed the performance of Shared LinUCB while incurring significantly lower computational cost. These improvements highlight the benefits of network-aware estimation and structured exploration in decentralized contextual bandit settings.

\subsection{Adaptive Weight Matrix}\label{sec:weight}

Benchmark algorithms assume uniform influence across nodes, treating each node independently or equally when aggregating information. In particular, they do not adaptively assign importance to data from different nodes, which may limit performance in the presence of structural heterogeneity.

We first present an essential part of our two newly proposed algorithms, a dynamic weight matrix $\Omega^k$ defined for each arm $a^{(k)} \in \mathcal{A}$, which captures evolving similarities between nodes. Each entry $\omega_{ji}^k \in \Omega^k$ quantifies the directional influence from node $j$ to node $i$. This weight-aware design enables more accurate estimation of shared parameters by prioritizing relevant signals from similar nodes.

We initialize the weight matrix $\Omega^k=I$. As the algorithm proceeds, the weights $\omega_{ji}^k$ are dynamically updated over time to reflect evolving similarities between nodes; see Algorithm \ref{alg:weight_update} for the update procedure.
Two key components are used to compute a new weight matrix $\Omega_{\text{new}}^k$.
\begin{itemize}
\item \textit{Arm‑selection similarity}, measured by the product $n_j^k n_i^k$, captures how frequently and consistently two nodes select the same arm, indicating stronger mutual relevance.
\item \textit{Context similarity}, measured by the cosine similarity between contextual features, assigns higher weights to node pairs with more similar contexts.
\end{itemize} 
After computing these components, all weights are normalized within the network to ensure proper scaling. To ensure computation stability, a smoothing process is applied at every time step, which is
\[
\Omega^k_t = \rho\,\Omega^k_{t-1} + (1-\rho)\,\Omega^k_{\text{new}},
\]
where $\rho \in [0,1)$ is a smoothing coefficient, which controls the influence of newly computed weights relative to historical ones.

\begin{algorithm}[htbp]
\caption{Weight update}
\label{alg:weight_update}
\textbf{Input}: Time step \(t\), arm selection count \(n_i^k\), contexts \(\mathbf{x}\), network \(\mathcal{G}\), smoothing coef $\rho$.\\
\vspace{-8mm}
\begin{algorithmic}[1]
\FORALL{arms \(k \in \mathcal{A}\)}
    \FORALL{nodes \(i, j \in \mathcal{G}\)}
        \STATE Compute arm selection similarity: $\frac{n_i^k n_j^k}{\left(\sum_{q \in \mathcal{G}} n_q^k\right)^2}$.
        \STATE Compute context similarity: $\frac{\mathbf{x}_{i,c}^\top \mathbf{x}_{j,c}}{\|\mathbf{x}_{i,c}\| \cdot \|\mathbf{x}_{j,c}\|}$.
    \ENDFOR
    \STATE Combine similarities and normalize to get $\Omega_{\text{new}}^k$.
    \STATE Update weight matrix: $\Omega^k \gets \rho  \Omega_{\text{old}}^k + (1 - \rho) \Omega_{\text{new}}^k$.
\ENDFOR
\end{algorithmic}
\textbf{Output}: Updated weight matrices \(\{\Omega^k\}_{k=1}^K\).
\end{algorithm}

\subsection{NetLinUCB}\label{sec:netlinucb}

To systematically analyze cross-node influences in decentralized contextual bandits, we propose NetLinUCB, a dynamic learning algorithm that incorporates shared contextual effects across nodes via adaptive weighted aggregation. Unlike Disjoint LinUCB, which treats each node independently, and Shared LinUCB, which pools all data globally, NetLinUCB enables partial information sharing while preserving node-level personalization.

Each node decomposes its context into a shared and node-specific part and follows standard ridge regression and UCB-based confidence bounds for local estimation. NetLinUCB then refines these estimates using parameter vectors shared from other nodes, weighted adaptively based on context similarity and arm-selection patterns. The network is assumed to be fully connected, and we investigate how network size affects performance. Arm selection proceeds by maximizing the upper confidence bound as defined in Definition~\ref{def:ucb-netlin}, and the complete algorithm is provided in Algorithm~\ref{alg:netlinucb_full}.

\textbf{Point estimate.} The initial parameter estimates in the Disjoint LinUCB (Appendix~\ref{app:disjoint_linucb}) are obtained via the closed‑form solution to ridge regression, \(
\hat{\theta}_t = \left(I_d + D_t^\top D_t \right)^{-1} D_t^\top z_t = W_t^{-1} b_t\), where \(W_t = I_d + D_t^\top D_t\) is the design matrix and \(b_t = D_t^\top z_t\) is the response vector.  
Here, \(D_t = \bigl[\mathbf{x}_{\tau}^\top\bigr]_{\tau \in \Psi_t} \in \mathbb{R}^{|\Psi_t| \times d}\) denotes the context history,  
\(z_t = \bigl[r_{\tau}\bigr]_{\tau \in \Psi_t} \in \mathbb{R}^{|\Psi_t| \times 1}\) denotes the reward history,  
and \(\Psi_t\) is the set of time steps prior to \(t\) at which arm \(a^{(k)}\) was chosen.
We further assume that the context vector is ordered such that the first \(d_c\) dimensions correspond to the common part, followed by the node‑specific part. Consequently, the first $d_c$ coordinates form $W_{i,c}^k$ and $b_{i,c}^k$, while the remaining $d_{i,s}$ coordinates form $W_{i,s}^k$ and $b_{i,s}^k$. Building on this formulation, NetLinUCB introduces inter‑node weighted aggregation to dynamically estimate parameters, i.e.,
\[
\hat{\theta}_{i,c}^{k} = \sum_{j=1}^{N}\omega_{ji}^{k}(W_{j,c}^{k})^{-1}b_{j,c}^{k}, \,
\hat{\theta}_{i,s}^{k} = (W_{i,s}^{k})^{-1} b_{i,s}^{k},
\]
where weight \(\omega_{ji}^{k}\) is updated according to Algorithm \ref{alg:weight_update}.

\begin{figure}[htbp]
  \centering
  \includegraphics[width=0.9\linewidth]{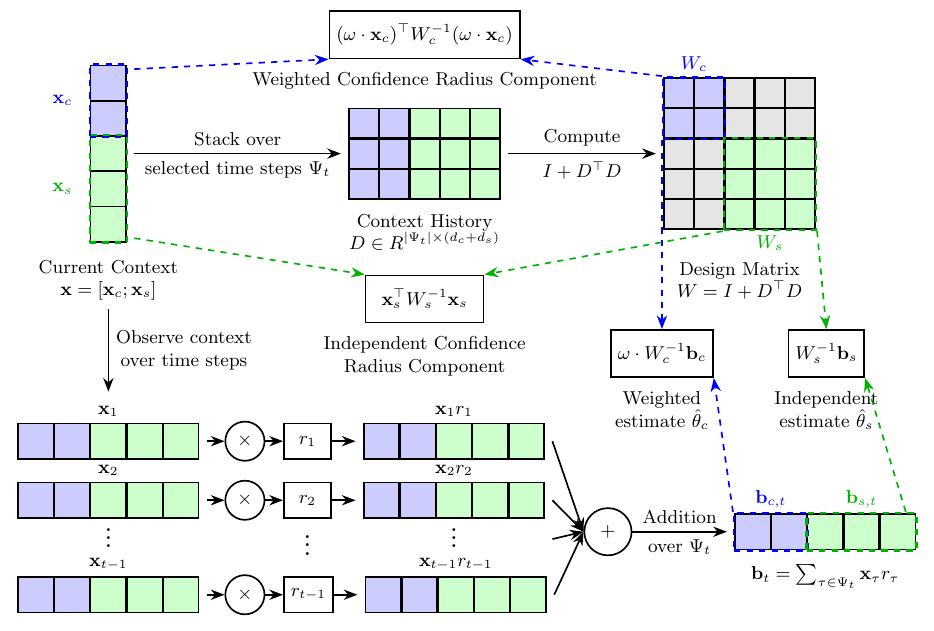} 
  \caption{Illustration of NetLinUCB estimation pipeline. The shared and specific context features \( \mathbf{x}_{c,t} \), \( \mathbf{x}_{s,t} \), and the historical design matrix \( D = [D_{c,t}, D_{s,t}] \) are aggregated to form a full design matrix \( W = I + D^\top D \). The resulting decomposition yields a weighted estimation of the shared parameter \( \hat{\theta}_c \) and a separate estimation of the specific component \( \hat{\theta}_s \), with associated confidence radius terms.}
  \label{fig:netlinucb_pipeline}
\end{figure}

\textbf{Confidence radius.} The confidence radius in Disjoint LinUCB is derived from the variance of the historical context, \(s_t = \sqrt{\mathbf{x}_t^\top W^{-1}\mathbf{x}_t}\). In NetLinUCB, we decompose \(W\) in the same manner as for \(\hat{\theta}\), and incorporate other nodes’ context histories. The confidence component becomes
\[
\sum_{j=1}^{N}(\omega_{ji}^{k})^2 \mathbf{x}_{i,c,t}^\top(W_{j,c}^{k})^{-1}\mathbf{x}_{i,c,t},
\]
where each weight \(\omega_{ji}^k\) is squared because the term 
\((W_{j,c}^{k})^{-1}\) represents a variance-level quantity; 
when aggregating variances, the weights contribute quadratically in contrast to the linear contribution in the point estimates. We also illustrate the estimation pipeline of NetLinUCB in Figure \ref{fig:netlinucb_pipeline}.

\begin{definition}
\label{def:ucb-netlin}
At each round $t$, for each arm $k$ and node $i$, NetLinUCB computes the upper confidence bound as
\[
\text{RIDGE-}\mathcal{U}_{i,t}^k = \mathbf{x}_{i,t}^\top [\hat{\theta}_{i,c}^{k}, \hat{\theta}_{i,s}^{k}] +
\alpha^{\text{ridge}} \cdot \sqrt{
\sum_{j=1}^{N}(\omega_{ji}^{k})^2\mathbf{x}_{i,c,t}^\top(W_{j,c}^{k})^{-1}\mathbf{x}_{i,c,t}
+ \mathbf{x}_{i,s,t}^\top(W_{i,s}^{k})^{-1}\mathbf{x}_{i,s,t}},
\] where $\alpha^{\text{ridge}}$ is the exploration parameter.
\end{definition}

Finally, the arm selection rule integrates both the shared and node-specific estimates along with their corresponding confidence radii, \(
a_{i,t} = \arg\max_{k\in\mathcal{A}} \mathrm{RIDGE}\text{-}\mathcal{U}_{i,t}^k\).

\begin{algorithm}[htbp]
\caption{NetLinUCB}
\label{alg:netlinucb_full}
\textbf{Input}: Number of agents $N$, arm set $\mathcal{A}$, context dimension $d_i = d_c + d_{i,s}$, exploration $\alpha^{\text{ridge}}$, network $\mathcal{G}$.\\
\textbf{Parameter}: Design matrix \( W_i^k \gets I_d \), response vector \( b_i^k \gets \mathbf{0}_d \), count matrix \( n \gets \mathbf{1} \in \mathbf{Z}_+^{N \times K}\), weight matrix \( \Omega^k\) for each arm $k$ and node $i$.
%\textbf{Output}: Arm selections $\{a_{i,t}\}_{t=1}^T$ and final estimates $\hat{\theta}_i^k = (W_i^k)^{-1} b_i^k$ for all $i,k$
\begin{algorithmic}[1]
\FOR{each round $t = 1$ to $T$}
    \STATE \textbf{Update weights} via Algorithm~\ref{alg:weight_update}.
    \FORALL{nodes $i \in \mathcal{G}$}
        \STATE Observe context $\mathbf{x}_{i,t} = [\mathbf{x}_{i,c,t}; \mathbf{x}_{i,s,t}]$.
        \STATE Compute estimates and confidence radii.
        \STATE Select arm $a_{i,t} = \arg\max_{k \in \mathcal{A}} \ \text{RIDGE-}\mathcal{U}_{i,t}^k$.
        \STATE Observe reward $r_{i,t}$ for the selected arm $k \gets a_{i,t}$.
        \STATE Update $n_i^k$, $W_i^{k}$, and $b_i^{k}$.
       % \begin{align*}
       % n_i^k &\gets n_i^k + 1\\
       % W_i^{k} &\gets W_i^{k} + \mathbf{x}_{i,t}\mathbf{x}_{i,t}^\top\\
       % \ b_i^{k} &\gets b_i^{k} + r_{i,t}\mathbf{x}_{i,t}
       % \end{align*}
    \ENDFOR
\ENDFOR
\end{algorithmic}
\end{algorithm}

\subsection{Net-SGD-UCB}\label{sec:sgd-ucb}

To address the scalability challenges of ridge regression with matrix inversion in high-dimensional settings, we propose Net-SGD-UCB. This method replaces closed-form updates with an online Stochastic Gradient Descent with Momentum (SGDM), combined with UCB exploration in a decentralized multi-node setting. Net-SGD-UCB selects the arm with the highest upper confidence bound as defined in Definition~\ref{def:ucb-netsgd} and we present the algorithmic details in Algorithm~\ref{alg:netsgducb_full}.

\textbf{Point estimate.} In classic momentum-based SGD, the gradient is computed using the standard squared loss function \(\mathcal{L}_{i,t}^k = \frac{1}{2}(r_{i,t}^k - \hat{r}_{i,t}^k)^2\). After calculating the gradient, parameter estimates \(\hat{\theta}\) are sequentially updated using momentum parameter $\mu$ and learning rate \(\eta^{\text{sgd}}\) as follows.
\begin{align*}
\nabla_{\theta}\mathcal{L}_{i,t}^k &= -(r_{i,t}^k - \mathbf{x}_{i,t}^\top \hat{\theta}_{i, t-1}^k)\mathbf{x}_{i,t}, \\
\mathbf{v}^{k}_{i,t} &= \mu \mathbf{v}^{k}_{i, t-1} + (1-\mu)\nabla_{\theta}\mathcal{L}_{i,t}^k, \\
\hat{\theta}^{k}_{i,t} &= \hat{\theta}^{k}_{i, t-1} - \eta^{\mathrm{sgd}} \mathbf{v}^{k}_{i,t}.
\end{align*}
Net-SGD-UCB then aggregates information from other nodes with weight matrix $\Omega^k$ to form common and node-specific components,
\[
\hat{\theta}_{i,c,t}^k = \sum_{j \in \mathcal{G}}\omega_{ji}^k \hat{\theta}_{j,c,t}^k, 
\,
\hat{\theta}_{i,s,t}^k = \hat{\theta}_{i,s,t}^k,
\]
where weight matrix $\Omega^k$ is updated based on Algorithm~\ref{alg:weight_update}.

\textbf{Confidence radius.} In standard SGD-UCB methods, e.g.,~\citet{zhou_neural_2020}, uncertainty is quantified by a geometry term $g^\top G g$, where $G$ accumulates historical gradient information. It captures the observed variation along different parameter directions. To adapt this idea to our context-aware network setting, we maintain only diagonal gradient accumulations. Specifically, we define $G_t = I + \sum_{\tau\in \Psi_t} \nabla_{\theta}\mathcal{L_\tau} (\nabla_{\theta}\mathcal{L_\tau})^\top$, where $\Psi_t$ denotes the set of time steps before time $t$ when the arm was selected. Because off-diagonal entries add limited value yet increase complexity when combined with the current context, we ignore them and update the diagonal elements,
\[
(G_t^k)_{hh} = \gamma (G_{t-1}^k)_{hh} + (1-\gamma)\left(\frac{\partial \mathcal{L}_{t}^k}{\partial \theta_h}\right)^2,
\]
where \( h \in \{1, \dots, d_c + d_s\} \), and \(\gamma\in [0, 1)\) is a smoothing parameter that balances historical and recent gradient information. Finally, we partition each diagonal matrix into a common component and a node‑specific component, forming $G_{j,c}^k$ and $G_{j,s}^k$, respectively.

\begin{definition}
\label{def:ucb-netsgd}
At round $t$, for arm $k$ and node $i$, Net-SGD-UCB computes the upper confidence bound as 
\[
\text{SGD-}\mathcal{U}_{i,t}^k = \mathbf{x}_{i,t}^\top [\hat{\theta}_{i,c}^{k}, \hat{\theta}_{i,s}^{k}] +
\alpha^{\text{sgd}} \cdot \sqrt{
\sum_{j=1}^{N}(\omega_{ji}^{k})^2\mathbf{x}_{i,c,t}^\top(G_{j,c,t}^{k})^{-1}\mathbf{x}_{i,c,t}
+ \mathbf{x}_{i,s,t}^\top(G_{i,s,t}^{k})^{-1}\mathbf{x}_{i,s,t}},
\] where $\alpha^{\text{sgd}}$ is the exploration parameter and $G_{j,c,t}^{k}$, $ G_{i,s,t}^{k}$ are EMA-smoothed diagonal gradient matrices.
\end{definition}

Finally, the upper confidence bound for arm selection balances exploitation with this adaptive uncertainty. Based on the criterion, we select the arm at time step $t$ as \(a_{i,t} = \arg\max_{k\in\mathcal{A}} \text{SGD-}\mathcal{U}_{i,t}^k\).

\begin{algorithm}[tb]
\caption{Net-SGD-UCB}
\label{alg:netsgducb_full}
\textbf{Input}: Number of agents $N$, arm set $\mathcal{A}$, context dimension $d_i = d_c + d_{i,s}$, learning rate $\eta^{\text{sgd}}$, momentum $\mu$, exploration parameter $\alpha^{\text{sgd}}$, EMA smoothing coefficient $\gamma$, network $\mathcal{G}$.\\
\textbf{Parameter}: Parameter estimates $\hat{\theta}_{i,t}^k$, momentum vectors $\mathbf{v}_{i,t}^k \gets \mathbf{0}$, weight matrix $\Omega^k$, and gradient accumulators $G_{i,t}^k \gets I_d$ for arm $k$ and node $i$.
\begin{algorithmic}[1]
\FOR{each round $t = 1$ to $T$}
    \STATE \textbf{Update weights} via Algorithm~\ref{alg:weight_update}.
    \FORALL{nodes $i \in \mathcal{G}$}
        \STATE Observe context $\mathbf{x}_{i,t} = [\mathbf{x}_{i,c,t}; \mathbf{x}_{i,s,t}]$.
        \STATE Compute estimates and confidence radii.
        \STATE Select arm $a_{i,t} = \arg\max_{k \in \mathcal{A}} \  \text{SGD-}\mathcal{U}_{i,t}^k$.
        \STATE Observe reward $r_{i,t}$ for the selected arm $k \gets a_{i,t}$.
        \STATE Update:  
            \begin{align*}
            \nabla_\theta \mathcal{L}_{i,t}^k
            &\gets -\bigl(r_{i,t}^k - \mathbf{x}_{i,t}^\top \hat{\theta}_{i,t-1}^k \bigr)\mathbf{x}_{i,t},\\
            \mathbf{v}_{i,t}^k
            &\gets \mu\,\mathbf{v}_{i,t-1}^k + (1-\mu)\,\nabla_\theta \mathcal{L}_{i,t}^k,\\
            \hat{\theta}_{i,t}^k
            &\gets \hat{\theta}_{i,t-1}^k - \eta^{\text{sgd}}\,\mathbf{v}_{i,t}^k,\\
            G_{i,t}^k &\gets \gamma G_{i,t-1}^k + (1-\gamma)\,\mathrm{diag}\bigl((\nabla_\theta \mathcal{L}_{i,t}^k)^{\odot 2}\bigr).
            \end{align*}
    \ENDFOR
\ENDFOR
\end{algorithmic}
\end{algorithm}

\section{Theoretical Performance Guarantee}\label{sec:regret-bound}

In this section, we derive theoretical performance guarantees (Definition~\ref{def:regret}) for the aforementioned algorithms, two newly proposed algorithms and two adapted benchmark algorithms. In Section \ref{sec:insight}, we further compare them in terms of computational complexity, communication cost, parameter sharing structure, and theoretical regret bounds.

\begin{figure}[t]
  \centering
  \includegraphics[width=0.85\textwidth]{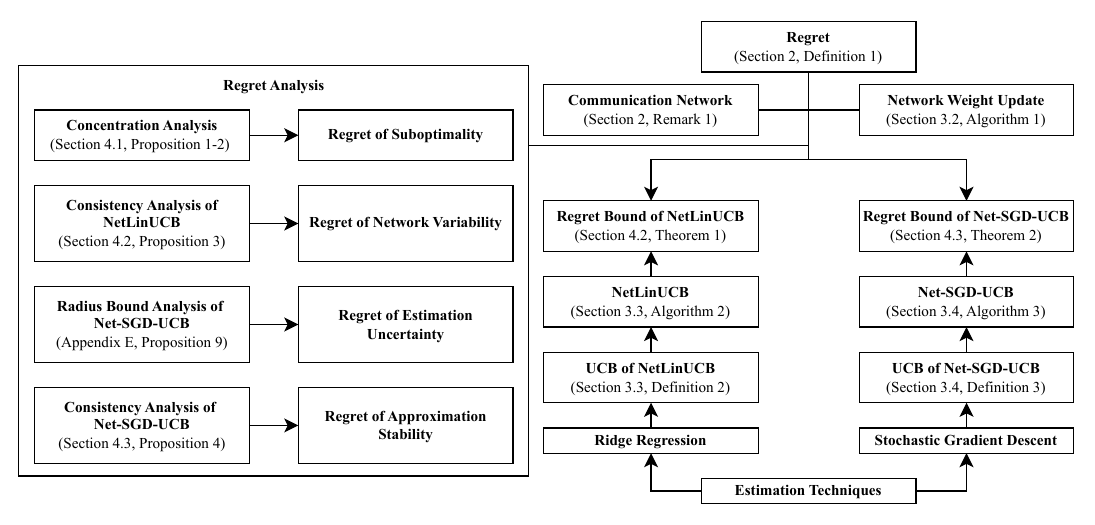}
  \caption{Roadmap of regret analysis.}
  \label{fig:roadmap}
\end{figure}

\subsection{Regret Analysis for Benchmarks}
We summarize the regret guarantees for the two benchmark algorithms. Specifically, we present their known bounds with respect to the time horizon $T$ and the network size $N$. We provide detailed proofs in Appendices~\ref{app:disjoint_linucb} and~\ref{app:whole_linucb}.

\begin{proposition}\label{prop:disjoint4} The total cumulative regret of Disjoint LinUCB over the network satisfies 
$$
R(T) = \sum_{i=1}^N R_i(T) = O\left( \sum_{i=1}^N d_i \sqrt{T \log T} \right).
$$
\end{proposition}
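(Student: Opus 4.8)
The plan is to exploit the defining feature of Disjoint LinUCB: since each node $i \in [N]$ maintains its own LinUCB instance driven solely by its local history $\mathcal{D}_{i,t}$, there is no coupling across nodes in either the point estimates or the confidence radii. Consequently, by Definition~\ref{def:regret} and linearity of expectation, the cumulative regret decomposes exactly as $R(T) = \sum_{i=1}^N R_i(T)$, where $R_i(T)$ is the single-agent regret of the node-$i$ LinUCB model running over $T$ rounds with context dimension $d_i$. The problem therefore reduces to establishing the per-node bound $R_i(T) = O(d_i \sqrt{T \log T})$ and summing.

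For a fixed node $i$, I would follow the standard optimism-based LinUCB argument. First, construct the \emph{good event} on which the true arm parameters lie in the confidence ellipsoids: for every arm $k$ and round $t$, $|\mathbf{x}_{i,t}^\top(\hat\theta_{i,t}^k - \theta_i^k)| \le \alpha^{\text{ridge}}\sqrt{\mathbf{x}_{i,t}^\top (W_{i,t}^k)^{-1}\mathbf{x}_{i,t}}$. This holds with probability at least $1-\delta$ by the self-normalized martingale concentration inequality of \citet{abbasi-yadkori_improved_2011} applied to the ridge design matrix $W_{i,t}^k = I_{d_i} + D_{i,t}^\top D_{i,t}$, which is valid uniformly over all $t$, together with a union bound over the $K$ arms. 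Under Assumption~\ref{assump:bounded} this calibrates $\alpha^{\text{ridge}}$ at order $\sqrt{d_i \log(T/\delta)}$.

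Conditioned on the good event, optimism yields the standard per-round control: since the selected arm maximizes the UCB while the UCB of $a_{i,t}^*$ upper-bounds its true expected reward, the instantaneous regret at round $t$ is at most twice the exploration bonus of the chosen arm, $2\alpha^{\text{ridge}}\sqrt{\mathbf{x}_{i,t}^\top (W_{i,t}^{a_{i,t}})^{-1}\mathbf{x}_{i,t}}$. Summing over $t$ and applying Cauchy--Schwarz reduces the task to bounding $\sum_{t=1}^T \mathbf{x}_{i,t}^\top (W_{i,t})^{-1}\mathbf{x}_{i,t}$, which the elliptical potential (log-determinant) lemma controls at order $d_i \log T$ through the telescoping bound $\log\det(W_{i,T}) - \log\det(I_{d_i})$ together with $\|\mathbf{x}_{i,t}\|_2 \le 1$. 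Combining the calibrated width with this potential bound gives the single-node regret $R_i(T) = O(d_i \sqrt{T \log T})$ stated in Theorem~3 of \citet{das_linear_2024}; the residual mass on the bad event contributes $O(1)$ after taking $\delta = 1/T$. Summing over the $N$ independent nodes proves the claim, and one may equally bypass the per-node derivation by invoking that theorem directly, leaving the decomposition as the only new step.

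The main obstacle is the per-node analysis rather than the decomposition, which is immediate from independence. Within the per-node argument, the delicate point is the uniform-in-time validity of the confidence ellipsoids: because $\hat\theta_{i,t}^k$ is built from adaptively collected data, its error must be controlled through a self-normalized martingale tail bound rather than a fixed-design concentration inequality, and $\alpha^{\text{ridge}}$ must be calibrated to match. Once the good event is secured, optimism and the elliptical potential summation are routine.
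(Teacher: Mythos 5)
Your proposal is correct and shares the paper's overall skeleton---decompose $R(T)=\sum_i R_i(T)$ by node-level independence, establish a per-round confidence bound, then combine optimism, Cauchy--Schwarz, and the elliptical potential lemma---but it diverges from the paper in the key concentration step. The paper (Proposition~\ref{prop:disjoint1}, following Lemma~1 of \citet{chu_contextual_2011}) bounds the prediction error by applying Hoeffding's inequality under the \emph{assumption} that the index set $\Psi_{i,t}$ is constructed so that the rewards entering the ridge regression are independent; this yields a dimension-free width $(\eta+1)s_{i,t}$ with $\eta=\sqrt{\log(2T)/2}$, but strictly speaking that independence only holds under a SupLinUCB-style phased data-collection scheme, which the paper does not implement. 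You instead invoke the self-normalized martingale bound of \citet{abbasi-yadkori_improved_2011}, which is valid for adaptively collected data without any such construction---so your route is, if anything, more rigorous for vanilla per-node LinUCB---at the price of a dimension-dependent width $\alpha^{\text{ridge}}\asymp\sqrt{d_i\log(T/\delta)}$. One consequence worth noting: tracking logarithms carefully, your calibration yields $R_i(T)=O(d_i\sqrt{T}\log T)$ rather than the stated $O(d_i\sqrt{T\log T})$, an extra $\sqrt{\log T}$ factor; the paper's own combination (width $\sqrt{\log T}$ times potential $\sqrt{Td_i\log T}$) likewise gives $O(\sqrt{d_i T}\log T)$, so neither derivation literally matches the displayed bound's log placement, and both are being absorbed into loose $O(\cdot)$ bookkeeping consistent with the cited Theorem~3 of \citet{das_linear_2024}. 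Your closing remarks---that the decomposition is the trivial step and that the uniform-in-time validity of the ellipsoids under adaptive data is the delicate point---are exactly right, and in fact identify the precise spot where the paper's Hoeffding-based argument is weakest.
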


\begin{proposition}\label{prop:wholelinucb}
The total cumulative regret of Shared LinUCB over the network satisfies 
$$
R(T) = O\left((d_c + \sum_{i=1}^{N} d_{i,s}) \sqrt{T \log T} \right).
$$
\end{proposition}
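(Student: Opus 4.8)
The plan is to reduce Shared LinUCB to a single standard LinUCB instance run over an enlarged, block-sparse parameter space, and then invoke the classical bound of \citet{das_linear_2024} at the enlarged dimension. Concretely, I would first define the global parameter $\tilde{\theta}^k = [\theta_c^k, \theta_{1,s}^k, \ldots, \theta_{N,s}^k] \in \mathbb{R}^D$ with $D = d_c + \sum_{i=1}^N d_{i,s}$, together with the block-sparse embedding $\tilde{\mathbf{x}}_{i,t} \in \mathbb{R}^D$ that copies $\mathbf{x}_{i,c,t}$ into the shared coordinates and $\mathbf{x}_{i,s,t}$ into the $i$-th node-specific block, placing zeros elsewhere. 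The first step is to check that this embedding is reward-preserving, i.e. $\tilde{\mathbf{x}}_{i,t}^\top \tilde{\theta}^k = \mathbf{x}_{i,c,t}^\top \theta_c^k + \mathbf{x}_{i,s,t}^\top \theta_{i,s}^k = \mathbb{E}[r_{i,t}^k \mid \mathbf{x}_{i,t}]$, and norm-preserving, so that Assumption~\ref{assump:bounded} transfers ($\|\tilde{\mathbf{x}}_{i,t}\|_2 = \|\mathbf{x}_{i,t}\|_2 \le 1$, with the parameter norm inflating only to $\|\tilde{\theta}^k\|_2 \le \sqrt{N+1}$, which feeds a lower-order regularization term in the confidence radius).

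Once the reduction is in place, Shared LinUCB is literally LinUCB on a $D$-dimensional contextual bandit, so the second step is to run the standard UCB argument on the global design matrix $V_t = \lambda I_D + \sum \tilde{\mathbf{x}} \tilde{\mathbf{x}}^\top$. I would (i) establish the self-normalized confidence ellipsoid so that $\tilde{\theta}^k$ lies within radius $\alpha = O(\sqrt{D \log(TN)})$ of the ridge estimate with high probability, (ii) bound the instantaneous regret at each $(i,t)$ by $2\alpha \|\tilde{\mathbf{x}}_{i,t}\|_{V^{-1}}$ via optimism, (iii) apply the elliptical-potential (determinant-trace) lemma to control $\sum \|\tilde{\mathbf{x}}\|_{V^{-1}}^2 \le 2 \log(\det V_{\mathrm{final}} / \det V_0)$, and (iv) combine with Cauchy-Schwarz. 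This is exactly the template behind Proposition~\ref{prop:disjoint4}, now instantiated at dimension $D$; the $(d_c + \sum_i d_{i,s})$ factor in the statement is precisely the effective dimension entering through the log-determinant and the confidence radius.

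The conceptual point driving the result is that the shared block $\theta_c^k$ appears \emph{once} in $\tilde{\theta}^k$ rather than being relearned at each node, so the log-determinant of the bordered/arrow-structured matrix $V$ -- whose shared block is fed by all nodes while each specific block $C_i$ is fed only by node $i$ -- splits, via a Schur-complement factorization $\det V = \bigl(\prod_i \det C_i\bigr)\det\bigl(A_c - \sum_i B_i C_i^{-1} B_i^\top\bigr)$, into a single $d_c$-dimensional shared contribution plus node-specific contributions summing to $\sum_i d_{i,s}$. This is what reduces the shared-feature cost from the $N d_c$ of Disjoint LinUCB (where each node's $d_i = d_c + d_{i,s}$ is paid separately) to a single $d_c$.

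The step I expect to be the main obstacle is controlling the dependence on the network size $N$ while still arriving at the stated $\sqrt{T \log T}$ horizon factor (rather than $\sqrt{NT \log(NT)}$): the centralized learner processes $NT$ decisions in total, and a naive Cauchy-Schwarz over all of them would introduce a spurious $\sqrt{N}$. The careful accounting must instead exploit the per-round (batched) block structure -- treating each time step as a batch of $N$ simultaneous pulls into disjoint specific blocks plus the common shared block -- so that the elliptical-potential sum and the determinant bound are organized by time step and the governing factor is the effective dimension $D$ rather than the raw count $NT$. Verifying that this batched block accounting yields the clean $(d_c + \sum_i d_{i,s})\sqrt{T \log T}$ form, and that the inflated parameter norm $\sqrt{N+1}$ enters only a lower-order term, is the delicate part of the argument.
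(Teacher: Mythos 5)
Your proposal is correct and follows essentially the same route as the paper: the paper's proof sketch of Proposition~\ref{prop:wholelinucb} likewise treats Shared LinUCB as Disjoint LinUCB run in the block-sparse global space of dimension $d_c + \sum_{i=1}^N d_{i,s}$, applies the Hoeffding-based prediction-error bound (its analogue of your confidence ellipsoid) together with the log-determinant/elliptical-potential lemma and Cauchy--Schwarz, and handles the issue you flagged as the main obstacle by organizing the potential sum per round with the $N$ contexts aggregated into a single global vector, so the effective dimension rather than the raw pull count $NT$ governs the bound. The Schur-complement factorization you sketch is a nice structural explanation but is not needed (or used) in the paper, which simply instantiates the determinant bound at the enlarged dimension.
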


These two benchmark algorithms represent two extremes. Shared LinUCB has a much lower regret by jointly exploiting all contextual information, but its effective dimension increases as \(d_c+\sum_i d_s\), resulting in costly matrix operations and high computational burden when the number of nodes \(N\) and the context dimensions are large. In contrast, Disjoint LinUCB is computationally efficient with lighter storage and simpler updates, but suffers from higher regret due to the lack of cross-node information sharing. Our goal is to strike a trade-off: achieving a cumulative regret of \(\tilde{\mathcal{O}}(\sqrt{T})\) while leveraging shared components to approach \(\sqrt{N}\) or \(\sqrt{d_c}\) scaling, without incurring the full computational burden of the shared model.

\subsection{Regret Analysis for NetLinUCB}\label{sec:netlinucb_regret}

The overall regret over the network can be naturally decomposed as the sum of the regret of each node in the network. We first provide high-probability bounds for each node and arm in Proposition \ref{prop:netlinucb-bound}. Then, based on these error bounds, we extend the single-node result to the entire network in Theorem \ref{thm:netlinucb-regret}, and yield one of the key theoretical results of this paper, the regret bound for NetLinUCB over the network.

We start with a specific node and a specific arm and analyze the confidence radius at time step $t \in [1, T]$. Without loss of generality, we replace the notation $\mathbf{x}_{i,c,t}$ with $\mathbf{x}_{c,t}$, $\theta_{i}$ with $\theta$, $\hat{\theta}_{i,t}$ with $\hat{\theta}_{t}$, and $\omega_{ji}^k$ with $\omega_{j}$ in Proposition \ref{prop:netlinucb-bound}. 

\begin{proposition}[Regret consistency under weighted confidence radius of NetLinUCB]\label{prop:netlinucb-bound} At each time step $t$, given historical information, independent contexts $\{\mathbf{x}_\tau\}_{\tau \in \Psi_t}$ and rewards $\{r_\tau\}_{\tau \in \Psi_t}$, we have
\begin{equation}
 \mathbb{P} \left( \left| \mathbf{x}_{c,t}^\top \tilde{\theta}_{t} - \mathbf{x}_{i,c,t}^\top \theta \right| \leq (\eta + 1) \tilde{s}_{t} + \zeta_t \right)   \geq  1 - \frac{1}{T},
\end{equation}
where \( \tilde{\theta}_{t} = \sum_{j=1}^N \omega_{j} \hat{\theta}_{j,t} \), \( \tilde{s}_{t} = \sqrt{\sum_{j=1}^N (\omega_{j} \mathbf{x}_{c,t})^\top W_{j,t}^{-1} (\omega_{j} \mathbf{x}_{c,t})} \), and $\zeta_t$ collects cross-term contributions with $\sum_{t=1}^T \zeta_t = O(1)$. Here, \(\eta = \sqrt{\log(2T)/2}\) ensures the stated high-probability guarantee via Hoeffding-type concentration.
\end{proposition}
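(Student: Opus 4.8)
The plan is to bound the aggregated error \(\mathbf{x}_{c,t}^\top\tilde{\theta}_t - \mathbf{x}_{c,t}^\top\theta\) by expanding each local ridge estimate and exploiting the normalization \(\sum_{j=1}^N\omega_j = 1\) of the network weights. Writing \(\hat{\theta}_{j,t} = W_{j,t}^{-1}b_{j,t}\) and substituting the reward model \(r_\tau = \mathbf{x}_{\tau,c}^\top\theta + \mathbf{x}_{\tau,s}^\top\theta_s + \epsilon_\tau\), I would decompose the per-node error \(\hat{\theta}_{j,t}-\theta\) into three pieces: a shrinkage (regularization) term \(-W_{j,t}^{-1}\theta\); a cross-feature coupling term \(W_{j,t}^{-1}\sum_\tau\mathbf{x}_{\tau,c}\mathbf{x}_{\tau,s}^\top\theta_s\), produced by using the block-diagonal design matrix \(W_{j,c}\) in place of the full matrix; and a noise term \(W_{j,t}^{-1}\sum_\tau\mathbf{x}_{\tau,c}\epsilon_\tau\). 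Projecting onto \(\mathbf{x}_{c,t}\) and taking the \(\omega_j\)-weighted sum, the normalization lets me write the target error as a weighted combination of these three contributions, so the proof reduces to bounding each one in terms of \(\tilde{s}_t\).

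The heart of the argument is the noise term \(\sum_j\omega_j\,\mathbf{x}_{c,t}^\top W_{j,t}^{-1}\sum_\tau\mathbf{x}_{\tau,c}\epsilon_\tau\). Conditioning on the history, so the contexts in \(\Psi_t\) are fixed and the noise is zero-mean and independent across both time and nodes, this becomes a weighted sum of independent sub-Gaussian variables. The crucial structural observation is that, because the per-node noise channels are independent, their variances add and the weight of node \(j\) enters quadratically: the variance proxy is at most \(\sigma^2\sum_j\omega_j^2\,\mathbf{x}_{c,t}^\top W_{j,t}^{-1}\mathbf{x}_{c,t} = \sigma^2\tilde{s}_t^2\), using \(W_{j,t}^{-1}(W_{j,t}-I)W_{j,t}^{-1}\preceq W_{j,t}^{-1}\) to discard the design factor. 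A Hoeffding-type tail bound then gives a deviation of order \(\eta\tilde{s}_t\) with failure probability at most \(2\exp(-2\eta^2)\), and the calibration \(\eta=\sqrt{\log(2T)/2}\) makes this exactly \(1/T\). This quadratic weighting is precisely what shrinks \(\tilde{s}_t\) relative to a linear aggregation and drives the \(O(\sqrt{N})\) rather than \(O(N)\) scaling of the shared component.

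For the two remaining pieces I would argue as follows. The regularization term \(\sum_j\omega_j\,\mathbf{x}_{c,t}^\top W_{j,t}^{-1}\theta\) is controlled by Cauchy--Schwarz in the \(M=\sum_j\omega_j W_{j,t}^{-1}\) norm together with \(\|\theta\|_2\le 1\) and \(W_{j,t}^{-1}\preceq I\) from Assumption~\ref{assump:bounded}, which yields a deterministic bound of the same order as \(\tilde{s}_t\) and accounts for the additive ``\(+1\)'' in the factor \((\eta+1)\). The cross-feature coupling term is absorbed into \(\zeta_t\), and the claim \(\sum_{t=1}^T\zeta_t=O(1)\) would be established by recognizing these contributions as increments of a bounded potential of log-determinant / elliptical-potential type, so that the cumulative sum telescopes, invoking the context-diversity constant \(c\) that lower-bounds the minimum eigenvalue of the empirical covariance to force the off-diagonal coupling to decay.

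I expect the main obstacle to be the treatment of \(\zeta_t\): obtaining a genuinely constant \(O(1)\) total for the cross-feature coupling, rather than the \(O(\log T)\) one typically gets from elliptical-potential arguments, requires carefully exploiting the separation between the shared and node-specific blocks and is the step most likely to demand an extra structural assumption. A secondary subtlety is reconciling the \emph{linear} weighting \(\sqrt{\sum_j\omega_j\,\mathbf{x}_{c,t}^\top W_{j,t}^{-1}\mathbf{x}_{c,t}}\) that naturally appears in the regularization bound with the \emph{quadratic} weighting that defines \(\tilde{s}_t\), and justifying the independence used in the Hoeffding step despite the adaptive context and arm selection in the bandit loop, which I would handle by conditioning on the filtration at each round so that the fixed-design concentration applies.
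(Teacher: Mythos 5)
Your variance step is essentially the paper's: conditioning on the history, treating the per-node noise channels as independent so the weights enter quadratically, and calibrating \(\eta=\sqrt{\log(2T)/2}\) to get failure probability \(1/T\) is exactly what the paper does (it phrases the aggregation across nodes as Azuma's inequality applied to the martingale sum \(\sum_j M_j\) with increments \(c_j=\omega_j\eta s_{j,t}\), so that \(\sum_j c_j^2=\eta^2\tilde{s}_t^2\), which is the same computation as your variance proxy). The genuine gap is in the bias term, and you have half-spotted it yourself. Cauchy--Schwarz in the norm \(M=\sum_j\omega_j W_{j,t}^{-1}\) bounds \(\bigl|\sum_j\omega_j\,\mathbf{x}_{c,t}^\top W_{j,t}^{-1}\theta\bigr|\) by the \emph{linearly} weighted radius \(\bigl(\sum_j\omega_j\,\mathbf{x}_{c,t}^\top W_{j,t}^{-1}\mathbf{x}_{c,t}\bigr)^{1/2}\), which is \emph{not} of the same order as \(\tilde{s}_t\): with uniform weights \(\omega_j=1/N\) and comparable per-node radii \(s_{j,t}\approx s\), the linear version is \(\approx s\) while \(\tilde{s}_t\approx s/\sqrt{N}\), a gap of \(\sqrt{N}\) that would destroy precisely the \(O(\sqrt{N})\) scaling the proposition is designed to deliver. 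The paper closes this gap with a specific device you are missing: it expands the squared bias \(\bigl\|\sum_j(\omega_j\mathbf{x}_{c,t})^\top W_{j,t}^{-1}\bigr\|^2\) into a double sum over node pairs, introduces the threshold \(\kappa_t=\bigl(1+\max_j\lambda_{\max}(D_{j,t}^\top D_{j,t})\bigr)^{-1}\), and for large-weight nodes (\(\omega_j\ge\kappa_t\)) uses the matrix inequality \(I_d\preceq\omega_j\bigl(I_d+D_{q,t}^\top D_{q,t}\bigr)\) to convert one linear weight into a quadratic one, bounding that block by \(\tilde{s}_t^2\). The residual cross-terms involving small-weight nodes are what the paper calls \(\zeta_t\); their total is argued to be \(O(1)\) because \(\kappa_t\to 0\) as \(\lambda_{\max}(D_{j,t}^\top D_{j,t})\) grows, so the small-weight set empties for large \(t\).

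This means you have also misidentified what \(\zeta_t\) is. You attribute it to the shared-versus-specific cross-feature coupling \(W_{j,t}^{-1}\sum_\tau\mathbf{x}_{\tau,c}\mathbf{x}_{\tau,s}^\top\theta_s\), but the paper's proof never generates such a term: it works with the full design matrix \(W_{j,t}\) and the zero-padded common context \(\mathbf{x}_{c,t}=[\mathbf{x}^0_{c,t},\mathbf{0}]\) (see its Proposition on the common component error bound), so the block-truncation error you are trying to control simply does not appear in its decomposition. Your instinct that this coupling is a real issue for the algorithm as implemented (which uses the blocks \(W_{j,c}^k,b_{j,c}^k\)) is fair, but as a proof of this proposition your route stalls exactly where you predicted: an elliptical-potential argument gives \(\sum_t\zeta_t=O(\log T)\), not \(O(1)\), and you offer no mechanism to improve it. So the proposal establishes the \(\eta\tilde{s}_t\) variance part correctly but proves neither the \((\eta+1)\tilde{s}_t\) bias bound nor the \(\sum_{t=1}^T\zeta_t=O(1)\) claim; the missing ingredient in both cases is the paper's \(\kappa_t\)-threshold decomposition of the weighted bias.
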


We decompose the prediction error into variance and bias terms. The variance is bounded using Azuma’s inequality applied to a martingale sum over nodes, while the bias is controlled via the maximal spectral norm of historical context matrices. The full proof is provided in Appendix~\ref{prop:netlinucb-bound-proof}.

\begin{theorem}[Cumulative regret of NetLinUCB]\label{thm:netlinucb-regret}
The cumulative regret of the common component in the network is \(O\left(\sqrt{\frac{d_cT N}{c} \log (\frac{cTN}{d_c})}\right)\), where \( c > 0\) is a context‑diversity constant, defined as a lower bound on the minimum eigenvalue of the normalized context covariance matrix, and it is related to the common component dimension $d_c$. The cumulative regret of the node-specific component is \(
O\left( \sum_{i=1}^{N} d_{i, s} \sqrt{T \log T} \right)\). Thus, the overall cumulative regret \(R(T)\) of the NetLinUCB algorithm is \(O\left( \sqrt{\frac{d_cT N}{c} \log (\frac{cTN}{d_c})} + \sum_{i=1}^{N} d_{i, s} \sqrt{T \log T} \right)\).
\end{theorem}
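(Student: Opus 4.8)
The plan is to bound $R(T)=\sum_{i=1}^N R_i(T)$ by splitting each node's instantaneous regret into a shared‑component contribution and a node‑specific contribution, exploiting the fact that both the point estimate and the confidence radius in Definition~\ref{def:ucb-netlin} decompose along the $\mathbf{x}_{i,c,t}$ / $\mathbf{x}_{i,s,t}$ split. By the standard optimism argument — the selected arm has upper confidence bound at least that of the optimal arm, while on the high‑probability event of Proposition~\ref{prop:netlinucb-bound} the true mean sits below its own bound — the instantaneous regret at node $i$ is at most twice the confidence radius of the chosen arm. Using $\sqrt{a+b}\le\sqrt a+\sqrt b$ to separate the radius, I would obtain $R(T)\le R_{\mathrm{common}}(T)+R_{\mathrm{specific}}(T)$, where $R_{\mathrm{common}}$ accumulates the terms $\tilde s_{i,t}$ and $R_{\mathrm{specific}}$ accumulates the local terms $\sqrt{\mathbf{x}_{i,s,t}^\top (W_{i,s}^k)^{-1}\mathbf{x}_{i,s,t}}$. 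The failure events (each of probability at most $1/T$ by Proposition~\ref{prop:netlinucb-bound}) would be absorbed by a union bound over rounds and nodes; since the instantaneous regret is bounded by a constant under Assumption~\ref{assump:bounded}, these contribute only a lower‑order additive term.

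The node‑specific part is the easier half. Since $\hat\theta_{i,s}^k=(W_{i,s}^k)^{-1}b_{i,s}^k$ is an ordinary local ridge estimate over a $d_{i,s}$‑dimensional design with no inter‑node coupling, each node's specific contribution is exactly a single‑agent LinUCB instance. I would therefore invoke the same self‑normalized / elliptical‑potential analysis used for the benchmark (Proposition~\ref{prop:disjoint4}, following \citet{das_linear_2024}) to obtain $O(d_{i,s}\sqrt{T\log T})$ per node, and sum over $i$ to get $R_{\mathrm{specific}}(T)=O\big(\sum_{i=1}^N d_{i,s}\sqrt{T\log T}\big)$.

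The shared part is the heart of the proof, where the $\sqrt N$ improvement must appear. After reducing $R_{\mathrm{common}}(T)$ to a constant multiple of $\sum_t\sum_i \tilde s_{i,t}$ (up to the $O(1)$ contribution of $\sum_t\zeta_t$ and a $\sqrt{\log T}$ factor from $\eta=\sqrt{\log(2T)/2}$, both from Proposition~\ref{prop:netlinucb-bound}), I must bound $\sum_t\sum_i\tilde s_{i,t}$, where $\tilde s_{i,t}^2=\sum_j(\omega_{ji}^k)^2\,\mathbf{x}_{i,c,t}^\top (W_{j,c,t}^k)^{-1}\mathbf{x}_{i,c,t}$. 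Three ingredients combine. First, weight normalization ($\omega_{ji}^k\ge 0$, $\sum_j\omega_{ji}^k=1$) forces $\sum_j(\omega_{ji}^k)^2$ to be small — of order $1/N$ when influence is spread across the network — and it is precisely the quadratic (variance‑level) weighting in Definition~\ref{def:ucb-netlin} that converts this into $\sqrt N$ rather than $N$ scaling once $\tilde s_{i,t}$ is summed over the $N$ nodes. Second, the context‑diversity constant $c$ gives $\lambda_{\min}(W_{j,c,t}^k)\ge 1+ct$, which lets me control the \emph{cross‑node} quadratic forms $\mathbf{x}_{i,c,t}^\top (W_{j,c,t}^k)^{-1}\mathbf{x}_{i,c,t}\le (1+ct)^{-1}$ (node $i$'s context measured in node $j$'s geometry) that fall outside the reach of the usual potential lemma. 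Third, a Cauchy–Schwarz step over the $(t,i)$ double sum together with the elliptical‑potential / log‑determinant telescoping supplies the $\sqrt{d_c\log(cTN/d_c)}$ factor. Assembling these yields $R_{\mathrm{common}}(T)=O\big(\sqrt{\tfrac{d_cTN}{c}\log(\tfrac{cTN}{d_c})}\big)$, and adding $R_{\mathrm{specific}}$ gives the claim.

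The main obstacle is exactly the cross‑node terms $\mathbf{x}_{i,c,t}^\top (W_{j,c,t}^k)^{-1}\mathbf{x}_{i,c,t}$ with $i\neq j$: the classical potential telescoping requires the context and the design matrix to arise from the same trajectory, so it does not apply directly. The diversity constant $c$ is what decouples them, but using its crude eigenvalue bound while \emph{simultaneously} retaining the $\sqrt N$ gain from the squared weights and the correct $d_c$ and logarithmic dependence demands careful bookkeeping, since a naive eigenvalue replacement discards the potential‑function structure. I expect the most delicate step to be balancing an early ``burn‑in'' regime, where the potential argument governs the accumulation, against the steady regime, where the diversity bound dominates, and reconciling the two so that the four dependences ($d_c$, $N$, $c$, and the logarithm) land simultaneously in the stated form.
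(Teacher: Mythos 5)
Your skeleton --- the optimism argument, splitting the radius via $\sqrt{a+b}\le\sqrt{a}+\sqrt{b}$, absorbing the $1/T$ failure events, and reducing the node-specific half to $N$ disjoint LinUCB instances summed to $O\bigl(\sum_i d_{i,s}\sqrt{T\log T}\bigr)$ --- matches the paper's proof. But the shared component, which you correctly identify as the heart of the argument, contains two genuine problems. First, the eigenvalue bound $\lambda_{\min}(W_{j,c,t}^k)\ge 1+ct$ is false: $W_{j,c,t}^k$ accumulates only the contexts at time steps in $\Psi_{j,t}^k$, i.e., the rounds at which node $j$ actually selected arm $k$, so the diversity constant yields $\lambda_{\min}(W_{j,c,t}^k)\ge 1+c\,n_{j,t}^k$ with $n_{j,t}^k=|\Psi_{j,t}^k|$, and hence $\mathbf{x}_{i,c,t}^\top (W_{j,c,t}^k)^{-1}\mathbf{x}_{i,c,t}\le d_c/(1+c\,n_{j,t}^k)$. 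The paper's counting is organized entirely around these per-arm pull counts: the shared regret is reindexed by pull count as $\sum_{k\in\mathcal{A}}\sum_{m=1}^{n_T^k}(1+\eta)\sqrt{d_c/(c\,m)}$ with $\sum_k n_T^k = NT$, and Cauchy--Schwarz against the harmonic sum supplies the $\log(cTN/d_c)$ factor. No elliptical-potential or log-determinant telescoping is used for the shared part at all, so the cross-node mismatch you flag as the main obstacle, and the burn-in/steady-regime reconciliation you anticipate, simply never arise in the paper's route.

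Second, and more fundamentally, your mechanism for the $\sqrt{N}$ gain --- ``$\sum_j(\omega_{ji}^k)^2$ is of order $1/N$ when influence is spread across the network'' --- is not the paper's mechanism and does not survive non-uniform weights: normalization only guarantees $\sum_j(\omega_{ji}^k)^2\in[1/N,1]$, and when the weights concentrate on a few nodes your decoupled product (small weight-square sum times a worst-case eigenvalue bound) degrades back toward the disjoint rate. The paper instead substitutes the explicit arm-selection form of the weights, $\omega_{ji}^k = n_{j,t}^k/n_t^k$ with $n_t^k=\sum_j n_{j,t}^k$ (arguing the context-similarity term affects only normalization), and exploits the \emph{correlation} between weights and counts: a node carries a large weight exactly when its design matrix is well conditioned, so
\begin{equation*}
\sum_{j=1}^{N}\Bigl(\frac{n_{j,t}^k}{n_t^k}\Bigr)^2\,\frac{d_c}{1+c\,n_{j,t}^k}
\;\le\; \frac{d_c}{c\,(n_t^k)^2}\sum_{j=1}^{N} n_{j,t}^k \;=\; \frac{d_c}{c\,n_t^k},
\end{equation*}
i.e., the radius contracts with the network-total pull count $n_t^k$ for \emph{any} count profile, uniform or not. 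This cancellation is the missing idea in your sketch; without it (or some substitute that couples the weights to the conditioning of the matrices they multiply), your three ingredients do not assemble into the stated bound outside the uniform-influence regime.
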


\proof{Proof sketch of Theorem~\ref{thm:netlinucb-regret}.}
We split the cumulative regret into shared and node-specific parts. 
\begin{itemize}
    \item For the shared term, Proposition~\ref{prop:netlinucb-bound} provides a high-probability error bound. We use the lower bound on the minimum eigenvalue of 
\( W_{j,t} = I + \sum_{\tau \in \Psi_{j,t}^k} \mathbf{x}_{j,\tau} \mathbf{x}_{j,\tau}^\top \). The contexts span the feature space and therefore there exists a constant $c > 0$ such that
\(\lambda_{\min}\left(\tfrac{1}{|\Psi_{j,t}^k|}\sum_{\tau \in \Psi_{j,t}^k} \mathbf{x}_{j,\tau} \mathbf{x}_{j,\tau}^\top \right) \geq c\), which implies \(
\mathbf{x}^\top (W_{j,t})^{-1} x \leq \tfrac{d_c}{1+c\,|\Psi_{j,t}^k|}\). Let $n_t^k = \sum_{i=1}^N |\Psi_{j,t}^k|$. Regret terms can then be grouped by the chosen arm $k$, whenever the arm $k$ is selected $m = 1,\dots,n_T^k$ times, the cumulative contribution behaves as \(\sum_{m=1}^{n_T^k}\sqrt{\tfrac{d_c}{c\,m}}\), which follows the same summation structure as in classical bandit analysis and is bounded by \( O\left(\sqrt{\tfrac{d_c}{c}\,n_T^k \log n_T^k}\right)\). Summing over all arms with $\sum_{k} n_T^k = NT$ yields \( O\left(\sqrt{\tfrac{d_c T N}{c}\log(\frac{cTN}{d_c})}\right)\) for the shared part. 
\item The node-specific part reduces to \(N\) disjoint problems, each with regret \(O(d_{i,s}\sqrt{T\log T})\). 
\end{itemize}
We provide a full proof in Appendix~\ref{thm:netlinucb-proof}.
\QED \endproof

\subsection{Regret Analysis for Net-SGD-UCB}

The proof roadmap in this section parallels that of NetLinUCB in Section \ref{sec:netlinucb_regret}. We first establish a high-probability regret bound for each node in Proposition \ref{prop:sgd-ucb-regret-bound} and then aggregate the node-level results to obtain the overall regret over the network in Theorem \ref{thm:sgd-ucb-network-regret}.
To support Proposition \ref{prop:sgd-ucb-regret-bound}, we first derive a SGD-based confidence radius with adaptive weights in Proposition \ref{prop:grad-bound}, based on an accumulation bound for non-negative sequences in Lemma \ref{lemma:log-smooth-harmonic}. 

\begin{lemma}[Accumulation bound,~\citet{duchi_adaptive_2011}, Lemma 4]\label{lemma:log-smooth-harmonic}
Let \( \{a_t\}_{t=1}^T \) be a non-negative sequence with cumulative sum \( A_t = \sum_{\tau=1}^t a_\tau \). Then, \(\sum_{t=1}^T (a_t/A_t) \leq \log\left(1 + A_T/A_1\right) + 1\).
\end{lemma}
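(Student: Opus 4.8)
The plan is to exploit the telescoping structure of the ratios $a_t/A_t$ together with a single elementary logarithmic inequality. First I would note that, since $a_t = A_t - A_{t-1}$ for $t \ge 2$ (with the convention $A_0 = 0$), each summand can be rewritten as
\[
\frac{a_t}{A_t} = \frac{A_t - A_{t-1}}{A_t} = 1 - \frac{A_{t-1}}{A_t},
\]
provided $A_t > 0$. This reduction converts a sum of ratios into a sum of quantities to which a concavity bound applies cleanly.

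The key step is the elementary inequality $\log x \le x - 1$ valid for all $x > 0$, equivalently $1 - u \le -\log u = \log(1/u)$ for $u > 0$. Applying this with $u = A_{t-1}/A_t \in (0,1]$ (well defined as soon as $A_1 > 0$, which I assume, since otherwise the right-hand side is meaningless) yields the per-term bound
\[
\frac{a_t}{A_t} = 1 - \frac{A_{t-1}}{A_t} \le \log\!\left(\frac{A_t}{A_{t-1}}\right), \qquad t \ge 2.
\]
Summing over $t = 2, \dots, T$, the right-hand side telescopes to $\log(A_T/A_1)$, while the $t=1$ term contributes exactly $a_1/A_1 = 1$. Combining the two pieces gives
\[
\sum_{t=1}^T \frac{a_t}{A_t} \le 1 + \log\!\left(\frac{A_T}{A_1}\right) \le 1 + \log\!\left(1 + \frac{A_T}{A_1}\right),
\]
which is exactly the claimed bound, the last step using monotonicity of $\log$.

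I expect the only delicate points to be bookkeeping at the boundary rather than genuine obstacles. First, one must ensure $A_1 = a_1 > 0$ so that every $A_t$ is strictly positive and the logarithms are defined; when the sequence begins with zeros, the statement should be read as starting from the first strictly positive index. Second, any index $t$ with $a_t = 0$ (so that $A_t = A_{t-1}$) contributes zero to both sides, leaving the per-term inequality and the telescoping intact. Since the statement coincides with Lemma~4 of \citet{duchi_adaptive_2011}, no heavier machinery is required: the entire argument rests on the concavity of $\log$ and a telescoping sum.
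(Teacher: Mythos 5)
Your proof is correct and takes essentially the same route as the paper's: the identical per-term bound $a_t/A_t \le \log(A_t/A_{t-1})$ (which the paper obtains by writing $\log A_t - \log A_{t-1} = \log\bigl(1 + a_t/A_{t-1}\bigr) \ge a_t/A_t$, the same concavity fact you invoke as $\log x \le x-1$), followed by telescoping over $t \ge 2$, adding the first term $a_1/A_1 \le 1$, and using monotonicity of $\log$ for the final $\log(1 + A_T/A_1)$ form. Your explicit handling of the boundary cases ($A_1 > 0$ and indices with $a_t = 0$) is a minor refinement that the paper leaves implicit.
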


We consider the bound of the constructed confidence radius term for each arm \(k\) on a per‑node basis. 
For notational simplicity, we replace the context \( \mathbf{x}_{i,t} \) by \( \mathbf{x}_t \), 
the gradient accumulation matrix \( G_{i,t}^k \) by \( G_t \), estimates \(\hat{\theta}^{k}_{i, t}\) by \(\hat{\theta}_{t}\), and the sub‑Gaussian noise in the reward function \( \epsilon_{i,t}^k \) by \( \epsilon_t \).

\begin{proposition}[Confidence radius bound] \label{prop:grad-bound}
Under the EMA smoothing scheme and the MSGD setting, where the reward is a linear function with sub‑Gaussian noise $\epsilon$, the cumulative term \( \sum_t \mathbf{x}_t^\top G^{-1} \mathbf{x}_t \) is bounded as
\[
 \mathbf{x}_t^\top G^{-1} \mathbf{x}_t \leq \sum_{t=1}^T \frac{2d\log (\sigma^2 T)}{(1-\gamma)^2\sigma^2}+ \frac{(1-\mu)\sigma^2\log T}{(1+\mu)(1-\gamma)},
\]
where $d$ is the dimension of the contexts, $\sigma^2$ is the variance of the noise of our reward function, $\mu$ is the momentum parameter, $\gamma \in [0,1]$ is the smoothing parameter.
\end{proposition}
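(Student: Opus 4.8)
The plan is to exploit the diagonal structure of the EMA accumulator to reduce the cumulative confidence term to a sum of one-dimensional potentials, and then bound each coordinate via the AdaGrad-style accumulation inequality of Lemma~\ref{lemma:log-smooth-harmonic}. Because $G_t$ is diagonal, $\mathbf{x}_t^\top G_t^{-1}\mathbf{x}_t = \sum_{h=1}^{d} x_{t,h}^2/(G_t)_{hh}$, so that $\sum_{t=1}^T \mathbf{x}_t^\top G_t^{-1}\mathbf{x}_t = \sum_{h=1}^{d}\sum_{t=1}^T x_{t,h}^2/(G_t)_{hh}$, and it suffices to control each coordinate-wise sum; the factor $d$ in the first term then appears by summing over the $d$ coordinates. (I read the stated inequality with the summation $\sum_{t=1}^T$ attached to the left-hand cumulative quantity, consistent with the surrounding text and with the $O(\sqrt{T})$ regret that Proposition~\ref{prop:grad-bound} is meant to feed.)

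First I would unroll the EMA recursion $(G_t)_{hh} = \gamma (G_{t-1})_{hh} + (1-\gamma)(\partial \mathcal{L}_t/\partial\theta_h)^2$ from Algorithm~\ref{alg:netsgducb_full} and substitute the gradient coordinate $\partial \mathcal{L}_t/\partial\theta_h = -e_t\, x_{t,h}$, where $e_t = r_t - \mathbf{x}_t^\top\hat\theta_{t-1}$ is the one-step residual. Using the linear reward model I would split $e_t = \epsilon_t + \mathbf{x}_t^\top(\theta - \hat\theta_{t-1})$ into a noise part and a bias (estimation-error) part. The noise part is what brings $\sigma^2$ into the denominator: writing $x_{t,h}^2/(G_t)_{hh} = \bigl(1/((1-\gamma)e_t^2)\bigr)\cdot\bigl((1-\gamma)e_t^2 x_{t,h}^2/(G_t)_{hh}\bigr)$ recasts the summand as the AdaGrad ratio of a squared gradient to its own accumulation, multiplied by the prefactor $1/((1-\gamma)e_t^2)$. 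Under the Gaussian noise assumption the aggregate residual energy concentrates, so with high probability $e_t^2$ is bounded below by a constant multiple of $\sigma^2$ on average, which caps the prefactor at order $1/((1-\gamma)\sigma^2)$. Two factors of $1/(1-\gamma)$ ultimately appear: one from the explicit increment weight already visible in the prefactor, and a second from the comparison between the exponentially weighted denominator and the cumulative surrogate required to apply Lemma~\ref{lemma:log-smooth-harmonic}, yielding the overall $1/((1-\gamma)^2\sigma^2)$.

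Next I would feed the AdaGrad ratio into Lemma~\ref{lemma:log-smooth-harmonic}, which produces $\log(1 + A_T/A_1) + 1$ with $A_T$ proportional to the accumulated gradient energy. Since $\|\mathbf{x}_t\|_2\le 1$ (Assumption~\ref{assump:bounded}) and the accumulated squared residuals scale like $\sigma^2 T$, the logarithm is of order $\log(\sigma^2 T)$; summing over the $d$ coordinates gives the first term $\tfrac{2d\log(\sigma^2 T)}{(1-\gamma)^2\sigma^2}$. The bias part of $e_t$ is then handled through the momentum dynamics: the estimation error is driven by the buffer $\mathbf{v}_t = \mu\mathbf{v}_{t-1} + (1-\mu)\nabla\mathcal{L}_t$, whose stationary variance amplifies the per-step gradient noise by $\tfrac{1-\mu}{1+\mu}$, via the geometric identity $(1-\mu)^2\sum_{k\ge 0}\mu^{2k} = \tfrac{1-\mu}{1+\mu}$. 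Carrying this variance through the same accumulation estimate, together with the remaining EMA factor $1/(1-\gamma)$ and a $\log T$ from a second application of the accumulation bound, produces the second term $\tfrac{(1-\mu)\sigma^2\log T}{(1+\mu)(1-\gamma)}$.

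The hard part will be the step that replaces the exponentially weighted $(G_t)_{hh}$ by a cumulative surrogate so that Lemma~\ref{lemma:log-smooth-harmonic}, which is stated for a monotone partial-sum sequence, can be invoked: the EMA discounts past gradients and is non-monotone, so this comparison must be justified uniformly in $t$ while tracking the exact $(1-\gamma)$ dependence, and it is the only route by which the logarithmic (rather than linear-in-$T$) scaling can emerge. Compounding this, the residuals $e_t$ are \emph{not} independent, since $\hat\theta_{t-1}$ depends on the entire past trajectory through the SGD updates; the concentration that lower-bounds the aggregate residual energy by a multiple of $\sigma^2$ must therefore be phrased as a high-probability bound on an adapted (martingale) sum of squared residuals rather than on i.i.d.\ terms, and one must rule out the degenerate case in which individual $e_t^2$ are near zero. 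Verifying that the momentum-induced bias stays small enough for this concentration to hold, while cleanly isolating the $\tfrac{1-\mu}{1+\mu}$ factor from the coupled noise-plus-bias analysis, is the technical crux.
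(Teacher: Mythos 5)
Your proposal follows essentially the same route as the paper's proof: diagonalize the quadratic form, unroll the EMA recursion so that $(G_t)_{hh}=(1-\gamma)\sum_{\tau\le t}\gamma^{t-\tau}\epsilon_\tau^2(\mathbf{x}_\tau)_h^2+\gamma^{t-1}$, recast each summand as an AdaGrad ratio with prefactor $1/((1-\gamma)\epsilon_t^2)$, pick up a second $1/(1-\gamma)$ by replacing the exponentially weighted denominator with the cumulative sum, invoke Lemma~\ref{lemma:log-smooth-harmonic} for the $\log(\sigma^2 T)$ factor, and extract $\tfrac{1-\mu}{1+\mu}$ from the geometric series of the momentum recursion exactly as the paper does. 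Your two flagged ``hard parts'' are in fact precisely where the paper itself is loose: the paper evaluates the gradient at the true $\theta$ so the residual is exactly $\epsilon_t$ (your noise-plus-bias split of $e_t$ is what the algorithm, which differentiates at $\hat\theta_{t-1}$, actually requires), it justifies the EMA-to-cumulative comparison only by asserting that $\gamma$ is chosen close to $1$, and it controls the $1/\epsilon_t^2$ prefactor by a conditional-expectation step labelled Jensen's inequality that, since $x\mapsto 1/x$ is convex, runs in the wrong direction as stated — so your high-probability lower bound on the adapted residual-energy sum is, if anything, the more defensible version of the same step.
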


\begin{proof}{Proof sketch of Proposition~\ref{prop:grad-bound}}
We analyze the cumulative quadratic form under EMA smoothing. 
At each step, the diagonal of the gradient–accumulation matrix satisfies 
\(
(G_t)_{hh} = (1-\gamma)\sum_{s=1}^t \gamma^{t-s}\epsilon_s^2 (\mathbf{x}_s)_h^2 + \gamma^{t-1},
\)
which effectively smooths the past squared gradients with an exponential kernel.  
Using this structure, the key term 
\(
\sum_{t=1}^T \mathbf{x}_t^\top G_t^{-1}\mathbf{x}_t
\)
can be decomposed as a sum over coordinates. For each coordinate, 
Lemma~\ref{lemma:log-smooth-harmonic} bounds the harmonic sum 
\(
\sum_t \frac{\epsilon_t^2(\mathbf{x}_t)_h^2}{\sum_{s=1}^t \epsilon_s^2(\mathbf{x}_s)_h^2}
\)
by a logarithmic term, leading to a contribution on the order of 
\(
\frac{2d\log(\sigma^2 T)}{(1-\gamma)^2\sigma^2}.
\)
In parallel, we incorporate the effect of momentum updates.  
The momentum recursion accumulates past gradients as 
\(
v_{i,k}^{(t)} = (1-\mu)\sum_{s=1}^{t}\mu^{t-s-1}\nabla\mathcal{L}^{(s)},
\)
and a geometric–series argument together with the variance bound for sub‑Gaussian noise yields an additional contribution 
\(
\frac{(1-\mu)\sigma^2\log T}{(1+\mu)(1-\gamma)}.
\)

Adding both parts gives the stated upper bound. A full step‑by‑step proof, including all intermediate inequalities, is provided in Appendix~\ref{prop:grad-bound-proof}.
\end{proof}

Once the bound of the constructed confidence radius is obtained, the subsequent proposition demonstrates how this radius leads to consistent regret guarantees, providing a high-probability regret bound for each node.

\begin{proposition}[Regret consistency via radius propagation of Net-SGD-UCB]
\label{prop:sgd-ucb-regret-bound}
Under the conditions of Proposition~\ref{prop:grad-bound}, bounded contexts \( \mathbf{x}_{t} \), and sub-Gaussian noise \( \epsilon_t \) with variance \( \sigma^2 \). With a probability of at least \(1 - \frac{1}{T}\), the cumulative regret of Net-SGD-UCB in a single node is bounded by
\[
R(T) \leq 2\alpha^{\text{sgd}}\left( \frac{\sqrt{2dT \log (\sigma^2 T)}}{(1 - \gamma)\sigma} + \sigma \sqrt{\frac{(1 - \mu)T\log T}{(1 + \mu)(1 - \gamma)}} \right),
\]
where $\alpha^{\text{sgd}} \propto 1 + \sigma^2$ reflects the sensitivity to noise.
\end{proposition}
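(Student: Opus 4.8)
The plan is to follow the standard optimism-based regret decomposition for UCB-type algorithms, with the smoothed accumulator $G_t$ playing the role that the ridge design matrix plays in LinUCB. Fixing the single node, I would first establish the governing high-probability event: that for every round the estimated mean reward tracks the true mean reward up to the confidence radius $s_t := \sqrt{\mathbf{x}_t^\top G_t^{-1}\mathbf{x}_t}$, i.e.
\[
\bigl|\mathbf{x}_t^\top\hat{\theta}_t - \mathbf{x}_t^\top\theta\bigr| \le \alpha^{\text{sgd}}\, s_t
\quad\text{for all } t,
\]
holding with probability at least $1-1/T$ after calibrating the per-round tail so that a union bound over the $T$ rounds yields this guarantee. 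Conditioned on this event, the usual optimism argument applies: since $a_{i,t}$ maximizes the SGD-UCB index and the optimal arm's true mean is dominated by its own upper confidence bound, the instantaneous regret at round $t$ is at most twice the exploration term, $2\alpha^{\text{sgd}} s_t$, so that $R(T) \le 2\alpha^{\text{sgd}}\sum_{t=1}^T s_t$.

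The core technical content lies in this confidence-validity step, which is where the momentum SGD and EMA-smoothing structure enters. Here I would expand the prediction error $\mathbf{x}_t^\top(\hat{\theta}_t-\theta)$ through the momentum recursion, so that it becomes a weighted accumulation of gradient increments, each carrying the residual $r_s - \mathbf{x}_s^\top\hat{\theta}_{s-1} = \mathbf{x}_s^\top(\theta-\hat{\theta}_{s-1}) + \epsilon_s$, and normalize by $G_t$, which accumulates exactly the squared gradient signal. Because $\epsilon_s$ is sub-Gaussian with variance $\sigma^2$, the noise-driven component is a martingale whose self-normalized magnitude in the $G_t^{-1}$ geometry concentrates; choosing $\alpha^{\text{sgd}} \propto 1+\sigma^2$ absorbs both the $\sigma^2$ inflation of the gradient variance and the $O(1)$ deterministic bias from the $\gamma^{t-1}$ initialization term in $G_t$. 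I expect this to be the main obstacle: unlike ridge regression, the SGDM iterate is not a closed-form linear functional of the observed rewards, so the martingale used for self-normalization must be constructed from the momentum and EMA recursions, and its fluctuations must be controlled against the smoothed accumulator $G_t$ rather than an exact Gram matrix.

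With the good event secured, the remaining steps are routine. I would bound the sum of radii by Cauchy--Schwarz, $\sum_{t=1}^T s_t \le \sqrt{T\sum_{t=1}^T \mathbf{x}_t^\top G_t^{-1}\mathbf{x}_t}$, then invoke Proposition~\ref{prop:grad-bound} to replace the inner cumulative quadratic form by $\tfrac{2d\log(\sigma^2 T)}{(1-\gamma)^2\sigma^2} + \tfrac{(1-\mu)\sigma^2\log T}{(1+\mu)(1-\gamma)}$, and finally apply subadditivity of the square root, $\sqrt{a+b}\le\sqrt{a}+\sqrt{b}$, to separate the result into the two advertised terms $\tfrac{\sqrt{2dT\log(\sigma^2 T)}}{(1-\gamma)\sigma}$ and $\sigma\sqrt{\tfrac{(1-\mu)T\log T}{(1+\mu)(1-\gamma)}}$. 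Multiplying through by the $2\alpha^{\text{sgd}}$ prefactor from the optimism step then reproduces the claimed bound.
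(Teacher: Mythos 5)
Your proposal is correct and follows essentially the same route as the paper's proof: a high-probability confidence event, the optimism argument giving instantaneous regret at most \(2\alpha^{\text{sgd}}\sqrt{\mathbf{x}_t^\top G_t^{-1}\mathbf{x}_t}\), Cauchy--Schwarz over rounds, substitution of Proposition~\ref{prop:grad-bound}, and \(\sqrt{a+b}\le\sqrt{a}+\sqrt{b}\) to split the two terms. The one place you diverge is in how the good event is certified: the paper bounds the aggregate normalized noise \(\bigl|\sum_{h=1}^d\sum_{t=1}^T (\mathbf{x}_t)_h\epsilon_t/\sqrt{(G_t)_{hh}}\bigr|\) by \(\sigma\sqrt{2dT\log(2dT)}\) via a coordinate-wise sub-Gaussian tail bound with a union bound over the \(d\) coordinates, and then asserts that this deviation is of the same order as the radius and is absorbed by taking \(\alpha^{\text{sgd}}\propto 1+\sigma^2\); you instead propose a per-round self-normalized martingale built by unrolling the momentum and EMA recursions, with a union bound over the \(T\) rounds. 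Your version is, if anything, the more standard and more careful formulation, and the obstacle you flag---that the SGDM iterate is not a closed-form linear functional of the rewards, so the self-normalized martingale must be constructed from the recursion rather than read off a ridge solution---is precisely the step the paper glosses over with its order-matching assertion; neither argument carries that link out in full detail, so your sketch matches the paper's level of rigor while locating the weak point more honestly.
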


By combining the per-node regret guarantees with the inter-node weight update mechanism, we proceed to derive the cumulative regret bound over the entire network.

\begin{theorem}[Cumulative regret bound of Net-SGD-UCB]\label{thm:sgd-ucb-network-regret} In the Net-SGD-UCB framework, the cumulative regret decomposes into a shared and a node-specific component. The shared component scales as \(R_1(T) = O\left( \frac{\sqrt{ N d_c T \log (\sigma^2 T})}{1 - \gamma} \right)\). The node-specific regret satisfies \(R_2(T) = O \left( \sum_{i=1}^N \frac{\sqrt{2d_{i,s}T \log (\sigma^2 T)}}{(1 - \gamma)\sigma} + \sigma N \sqrt{\frac{(1 - \mu)T\log T}{(1 + \mu)(1 - \gamma)}} \right)\).
\end{theorem}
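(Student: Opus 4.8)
The plan is to follow the same two-stage template as Theorem~\ref{thm:netlinucb-regret} --- a high-probability per-node bound aggregated to the network --- but to drive the shared-component analysis through the SGD confidence radius of Proposition~\ref{prop:grad-bound} instead of the ridge design matrix. Starting from the decomposed bound in Definition~\ref{def:ucb-netsgd}, I would first split the per-round radius by subadditivity, \(\sqrt{\tilde{s}_{i,c,t}^2 + s_{i,s,t}^2} \le \tilde{s}_{i,c,t} + s_{i,s,t}\), where \(\tilde{s}_{i,c,t}^2 = \sum_{j=1}^N (\omega_{ji}^k)^2 \mathbf{x}_{i,c,t}^\top (G_{j,c,t}^k)^{-1}\mathbf{x}_{i,c,t}\) is the shared radius and \(s_{i,s,t}^2 = \mathbf{x}_{i,s,t}^\top (G_{i,s,t}^k)^{-1}\mathbf{x}_{i,s,t}\) is the local one. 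Conditioning on the high-probability event of Proposition~\ref{prop:sgd-ucb-regret-bound}, the instantaneous regret at each node is at most \(2\alpha^{\text{sgd}}(\tilde{s}_{i,c,t} + s_{i,s,t})\), so that \(R(T) \le R_1(T) + R_2(T)\) with \(R_1\) collecting the shared radii and \(R_2\) the node-specific ones.

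The node-specific regret \(R_2\) is essentially bookkeeping: each node's specific block is a standalone SGD-UCB instance of dimension \(d_{i,s}\), decoupled from the network because \(G_{i,s,t}^k\) is purely local. Applying Proposition~\ref{prop:sgd-ucb-regret-bound} at node \(i\) with \(d = d_{i,s}\) and summing over the \(N\) nodes yields the first term of \(R_2\); the momentum contribution, being dimension-free, is attributed entirely to \(R_2\) and, appearing once per node, aggregates to \(\sigma N \sqrt{(1-\mu)T\log T/((1+\mu)(1-\gamma))}\).

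The substantive work is the shared regret \(R_1\), where the target is to extract \(\sqrt{N}\) rather than \(N\) scaling, and the mechanism is the quadratic appearance of the normalized aggregation weights. Summing \(\tilde{s}_{i,c,t}^2\) over nodes and rounds, I would use the network normalization of \(\Omega^k\) (so that \(\sum_j (\omega_{ji}^k)^2 = O(1/N)\) in the near-uniform regime) to show that aggregating the \(N\) comparable accumulators contracts each node's shared radius by a factor of order \(1/N\). Applying the EMA accumulation bound of Proposition~\ref{prop:grad-bound} coordinatewise on the pooled shared block of dimension \(d_c\), this contraction cancels the \(N\) nodes in the outer sum and leaves \(\sum_{i,t}\tilde{s}_{i,c,t}^2 = O(d_c\log(\sigma^2 T)/((1-\gamma)^2\sigma^2))\), independent of \(N\). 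A final Cauchy--Schwarz step over the \(NT\) rounds, \(\sum_{i,t}\tilde{s}_{i,c,t} \le \sqrt{NT}\,\sqrt{\sum_{i,t}\tilde{s}_{i,c,t}^2}\), then yields \(R_1(T) = O(\sqrt{N d_c T \log(\sigma^2 T)}/(1-\gamma))\).

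The step I expect to be the main obstacle is making the \(1/N\) contraction rigorous under online, quadratic weights. Unlike the single-sequence settings of Proposition~\ref{prop:grad-bound} and Lemma~\ref{lemma:log-smooth-harmonic}, the shared radius mixes \(N\) distinct EMA accumulators \(\{G_{j,c,t}^k\}_j\) through the weights \((\omega_{ji}^k)^2\), which are themselves updated online by Algorithm~\ref{alg:weight_update}. The contraction \(\sum_j(\omega_{ji}^k)^2 = O(1/N)\) holds only when the weights are sufficiently spread out, so I would use the normalization to guarantee \(\sum_j(\omega_{ji}^k)^2 \le 1\) unconditionally --- recovering the disjoint \(O(N)\) scaling as a safe fallback --- and invoke near-uniformity in the fully connected, structurally homogeneous regime to secure the \(1/N\) factor. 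A second subtlety is that the aggregation produces cross-node quadratic forms \(\mathbf{x}_{i,c,t}^\top (G_{j,c,t}^k)^{-1}\mathbf{x}_{i,c,t}\) with \(j \ne i\), evaluating one node's context against another's accumulator; bounding these requires showing the shared-block accumulators are comparable across nodes up to constants, after which Proposition~\ref{prop:grad-bound} applies uniformly. Finally, the per-node guarantee is lifted to the network by a union bound over the \(N\) nodes and \(K\) arms, and the full argument is deferred to the appendix.
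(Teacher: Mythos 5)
Your skeleton matches the paper's proof — split the radius into shared and node-specific parts, attribute the momentum term once per node, and obtain $R_2$ by applying Proposition~\ref{prop:sgd-ucb-regret-bound} with $d = d_{i,s}$ at each node and summing — but the mechanism you use for the crucial $\sqrt{N}$ in $R_1$ has a genuine gap. Your argument hinges on $\sum_{j}(\omega_{ji}^k)^2 = O(1/N)$, which you yourself flag as holding only in a ``near-uniform regime.'' Nothing in Algorithm~\ref{alg:weight_update} guarantees this: the weights are adaptive, and when arm-selection counts concentrate on a few nodes, $\sum_j (\omega_{ji}^k)^2 = \Theta(1)$. Your unconditional fallback $\sum_j(\omega_{ji}^k)^2 \le 1$ then gives $\sum_{i,t}\tilde{s}_{i,c,t}^2 = O\bigl(N d_c \log(\sigma^2 T)/((1-\gamma)^2\sigma^2)\bigr)$, and after Cauchy--Schwarz over the $NT$ rounds you land at $R_1 = O\bigl(N\sqrt{d_c T\log(\sigma^2 T)}/(1-\gamma)\bigr)$ — linear in $N$, i.e., the disjoint rate, not the theorem. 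So as written you have proved the stated bound only under an unstated hypothesis on the weights. The paper reaches $\sqrt{N}$ by a different route that avoids this assumption: it substitutes the arm-count form $\omega_{ji}^k \approx n_{j,t}^k/n_t^k$, applies Cauchy--Schwarz \emph{first} to extract $\sqrt{NT}$, then collapses the double sum over $(i,j)$ by moving the pooled context $\sum_{i} \mathbf{x}_{i,c,t}$ inside a single quadratic form, bounds the weight factor by $\sum_j (n_{j,t}^k/n_t^k)^2 \le 1$ unconditionally, and invokes Proposition~\ref{prop:grad-bound} once on the pooled sequence, absorbing the residual cross-node inner products into $R_2$. Your worry about cross-terms $\mathbf{x}_{i,c,t}^\top (G_{j,c,t}^k)^{-1}\mathbf{x}_{i,c,t}$ with $j\neq i$ is exactly the point where the paper leans on the claim that each $G_{j,c,t}^k$ effectively aggregates network-wide information — so your instinct identifies the right pressure point, but your resolution (near-uniformity) is strictly weaker than what the theorem asserts.

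A second omission: before summing any radii you need the upper confidence bound to be \emph{valid} for the aggregated estimator $\hat{\theta}_{i,c,t}^k = \sum_j \omega_{ji}^k \hat{\theta}_{j,c,t}^k$, and ``conditioning on the high-probability event of Proposition~\ref{prop:sgd-ucb-regret-bound}'' does not deliver this — that event concerns each node's own local estimator, whereas the aggregate's deviation mixes all nodes' noise through the consensus dynamics. The paper devotes the first half of its proof to precisely this step: it writes the error recursion $\Delta_{i,c,t} = \sum_{j}\omega_{ji}^{a_t}\Delta_{j,c,t-1} + \eta^{\text{sgd}}\nabla\mathcal{L}_{i,t}$, bounds the variance via decentralized-SGD/EMA arguments as $O\bigl(\sigma^2/((1-\gamma)n_t^k)\bigr)$ with a negligible cumulative contribution, and — the key structural observation — notes that the bias vanishes identically because all nodes share the same true $\theta_c^{a_t}$. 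Without this (or an equivalent argument), the per-round inequality $\Delta_t \le 2\alpha^{\text{sgd}}(\tilde{s}_{i,c,t}+s_{i,s,t})$ that launches your $R_1/R_2$ split is unjustified for the shared block; your closing union bound over $N$ nodes and $K$ arms does not repair it, since it only aggregates single-node events.
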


\proof{Proof sketch of Theorem~\ref{thm:sgd-ucb-network-regret}.}
We decompose the total regret into two parts: the shared regret from estimating the global parameter \(\theta_c\), and the node-specific regret associated with estimating local parameters \(\theta_{i,s}\). 
\begin{itemize}
    \item For the shared component, we analyze the estimation error dynamics of each node's shared parameter under weighted neighbor aggregation and stochastic gradient updates. The error consists of variance and bias terms. The bias vanishes since all nodes share the same true parameter \(\theta_c\), and the variance term is bounded by standard decentralized SGD analysis with EMA smoothing. As the number of rounds \(T\) grows, this variance diminishes. Therefore, the dominant contribution to regret arises from the weighted confidence radius, which we bound accordingly. 
    \item The node-specific component corresponds to running independent SGD-UCB instances in each node. By applying Proposition~\ref{prop:sgd-ucb-regret-bound} and summing the regret across all nodes, we obtain the final bound for this part. 
\end{itemize}

Combining both components yields the overall regret bound. The complete derivation is provided in Appendix~\ref{thm:sgd-ucb-network-regret-proof}.
\QED \endproof

\subsection{Comparisons and Insights}\label{sec:insight}

We compare our two proposed algorithms with two benchmark algorithms in Table~\ref{tab:model-comparison}.

\begin{table*}[htbp]
\centering
\setlength{\tabcolsep}{3pt}
\renewcommand{\arraystretch}{1.15} 
\small

\begin{tabular}{lcccc}
\toprule
\textbf{Algorithm} & \textbf{NetLinUCB} & \textbf{Net-SGD-UCB} & \textbf{Disjoint LinUCB} & \textbf{Shared LinUCB} \\
\midrule
\textbf{Update rule} & Weighted ridge & Momentum SGD & Per-node ridge & Global ridge \\
\textbf{Computation} & \( \mathcal{O}(NK(d_c+d_s)^3) \) & \( \mathcal{O}(NK(d_c+d_s)) \) & \( \mathcal{O}(NK(d_c+d_s)^3) \) & \( \mathcal{O}(K(d_c+Nd_s)^3) \) \\
\textbf{Memory} & \( \mathcal{O}(NK(d_c+d_s)^2) \) & \( \mathcal{O}(NK(d_c+d_s)) \) & \( \mathcal{O}(NK(d_c+d_s)^2) \) & \( \mathcal{O}(K(d_c+Nd_s)^2) \) \\
\textbf{Communication cost} & \( \mathcal{O}(N^2Kd_c) \) & \( \mathcal{O}(N^2Kd_c) \) & \( 0 \) & \( \mathcal{O}(N^2K(d_c+d_s)) \) \\
\textbf{Exploration} & Confidence radius & Adaptive covariant & Confidence radius & Confidence radius \\
\textbf{Shared parameters} & Weight matrix & Weight matrix & None & Full sharing \\
\textbf{Best use case} & Decentralized & High-dimensional & Static local & Centralized \\
\textbf{Regret bound} &
\shortstack[l]{%
\(\tilde{\mathcal{O}}(\sqrt{(d_cNT)/c} \)\\
\( + Nd_s\sqrt{T}) \)
} &
\shortstack[l]{%
\(\tilde{\mathcal{O}}(\sqrt{(d_c NT)/(1-\gamma)} \)\\
\( +N\sqrt{(d_{s} T)/(1-\gamma)})\)
} &
\shortstack[l]{%
\(\tilde{\mathcal{O}}(Nd_c\sqrt{T} \)\\
\( +Nd_s\sqrt{T})\)
} &
\shortstack[l]{%
\(\tilde{\mathcal{O}}(d_c \sqrt{T} \)\\
\( + N d_{s}\sqrt{T})\)
}\\
\bottomrule
\end{tabular}
\caption{Comparison of four contextual bandit models in a networked pricing.}
\label{tab:model-comparison}
\end{table*}

The computation and memory costs of Shared LinUCB scale poorly with the global context size, \(d_c+Nd_s\), due to the need for full matrix inversion. In contrast, NetLinUCB and Disjoint LinUCB incur only per-node cost. Net-SGD-UCB avoids matrix inversion entirely, yielding the most efficient complexity of \(\mathcal{O}(NK(d_c+d_s))\). Even in low-dimensional settings, the per-round runtime of Shared LinUCB remains several times higher than that of the other algorithms.

We also quantify the communication cost per round. Disjoint LinUCB requires no interaction, while NetLinUCB and Net-SGD-UCB exchange only shared parameter estimates, incurring a communication cost of \(\mathcal{O}(N^2 K d_c)\) per round across all arms. In contrast, Shared LinUCB synchronizes the full parameter vector, resulting in \(\mathcal{O}(N^2 K (d_c + d_s))\) communication per round.

NetLinUCB is suitable for decentralized networks because its regret improves with connectivity while preserving node-specific adaptation (Theorem~\ref{thm:netlinucb-regret}). Net-SGD-UCB performs best in high-dimensional or streaming environments due to its low computational complexity with SGD and diagonal variance (Theorem~\ref{thm:sgd-ucb-network-regret}). Disjoint LinUCB is effective when agents operate independently and model sizes are small, benefiting from isolated estimation and minimal memory demands (Proposition~\ref{prop:disjoint4}). Shared LinUCB excels in centralized environments with few nodes, where global coordination justifies the increased computational and memory cost (Proposition~\ref{prop:wholelinucb}).

Regret bound reveals that Disjoint LinUCB suffers from redundant shared learning, \(\tilde{\mathcal{O}}(Nd_c\sqrt{T})\), while Shared LinUCB reduces this to \(\tilde{\mathcal{O}}(d_c\sqrt{T})\). NetLinUCB further improves to \(\tilde{\mathcal{O}}(\sqrt{NT/c})\) through adaptive network averaging, and Net-SGD-UCB achieves similar gains via variance-aware updates. Both proposed methods better exploit structure for improved regret in decentralized systems. We conducted numerical experiments in Section~\ref{sec:numerical}, and the results align with our theoretical insights: models with structured information sharing achieve lower regret and faster convergence rates.

\section{Numerical Experiments}\label{sec:numerical}

We simulate a networked contextual bandit environment with synthetic instances generated as follows. Model parameters \( \theta_i^k \) are randomly initialized for each node. At each time step \( t \), context vectors are sampled from multivariate Gaussian distributions, with node-specific components drawn from local distributions to reflect heterogeneity.
As our interest lies in how regret scales with the learning horizon \( T \) and network size \( N \), we report two key metrics: the average per-round regret over time, \( R(t)/t \), and the average per-round per-node regret, \( R(T)/(NT) \).

\begin{figure}[htbp]
  \centering
  \includegraphics[width=0.7\columnwidth]{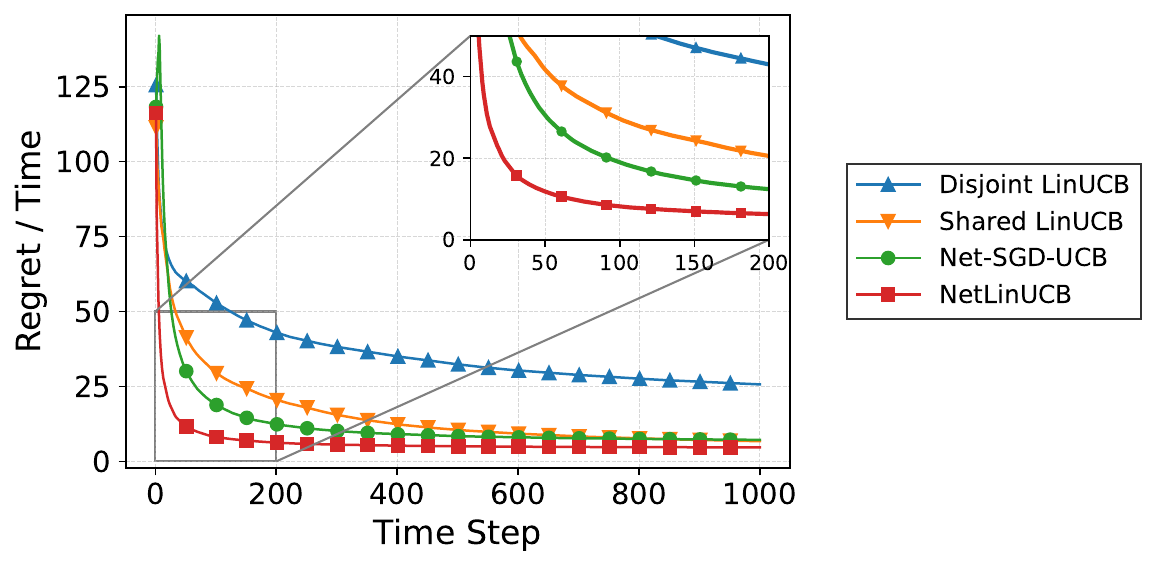}
  \caption{Time-average regret with \( T = 1000 \) and \( N = 12 \).}
\label{fig:exp1}
\end{figure}

\textbf{Sublinear regret in $T$.}  
Figure~\ref{fig:exp1} shows the per-round regret over time, $R(t)/t$. As the curves decrease towards zero, the two proposed algorithms achieve sublinear regret in $T$, matching the centralized benchmark while operating in a decentralized setting. Additional results across multiple instances are provided in Appendix~\ref{app:numerical}. Across all instances, network-based models outperform Disjoint LinUCB by effectively leveraging inter-city information. Shared LinUCB performs best when the proportion of shared context is extremely high and the reward gap between the optimal and suboptimal arms is large. This is because (i) a high proportion of shared context induces strong homogeneity across cities, making shared learning highly effective; and (ii) a large arm-reward gap enables the algorithm to quickly identify optimal arms using shared knowledge, thereby accelerating convergence. Despite its strong empirical performance in these regimes, Shared LinUCB incurs substantially higher per-round computational cost compared to the other methods. NetLinUCB benefits from strong connectivity and performs best when the proportion of node-specific context is extremely high, and arm-reward gaps are small, because it efficiently exploits shared structure to resolve fine-grained differences. Notably, Net-SGD-UCB demonstrates superior robustness in high-variance settings and scales well with larger action spaces due to its adaptive updates.

\begin{figure}[t]
  \centering
  \includegraphics[width=0.5\columnwidth]{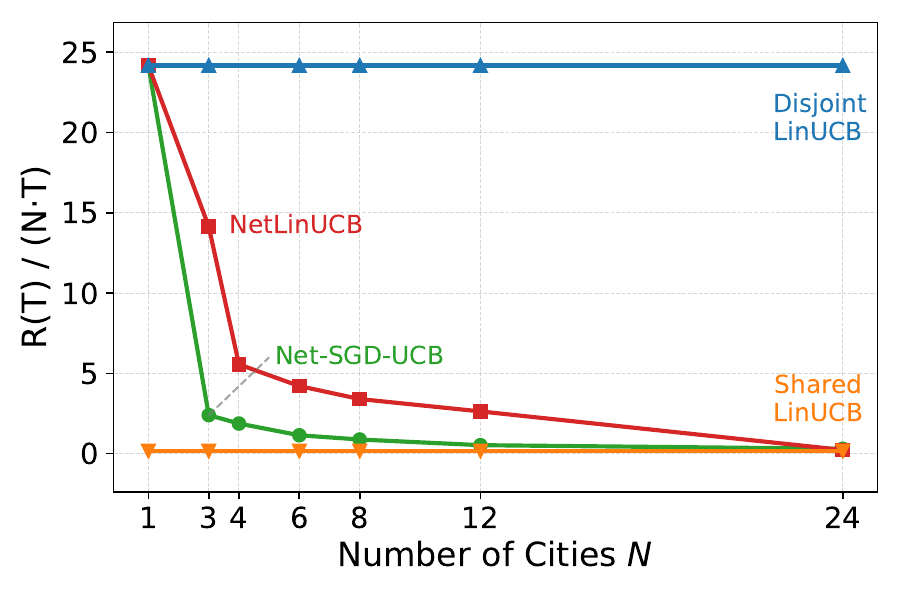}
  \caption{Per-node average regret under different networks.}
  \label{fig:exp6}
\end{figure}

\textbf{Sublinear regret on network size $N$.} 
We present the per-node regret over time \( R(t)/(NT)\)  of multiple instances in Figure \ref{fig:exp6}.
The results show that both NetLinUCB and Net-SGD-UCB achieve a per-node regret as low as the Shared LinUCB, demonstrating sublinear regret scaling in $N$. This highlights the scalability of network-aware methods in decentralized settings and validates the analysis in Section~\ref{sec:online-algo}.

\section{Conclusion}\label{sec:conclusion}

This work advances decentralized learning for structured decision-making, with broad applications in dynamic pricing, resource allocation, and distributed recommendation systems. We proposed two new UCB-based learning algorithms that achieve sublinear regret and reduce the regret associated with the shared feature space from linear scaling in the number of nodes $O(N)$ to sublinear $O(\sqrt{N})$. A key innovation of our approach lies in its use of adaptive weighting schemes to regulate inter-agent communication, allowing the network to transition between fully connected, partially connected, and even disjoint configurations depending on the agents’ informational relevance. Empirically, we demonstrate that increased network connectivity leads to lower regret, validating the benefits of inter-agent information sharing. 
These findings highlight the potential to improve efficiency in distributed markets while reducing reliance on centralized data, supporting scalability, privacy, and applicability in low-resource settings.

\vspace{.1in}
\def\bibfont{\scriptsize}
\bibliographystyle{informs2014}
\bibliography{references}

\begin{thebibliography}{49}
\providecommand{\natexlab}[1]{#1}
\providecommand{\url}[1]{\texttt{#1}}
\providecommand{\urlprefix}{URL }

\bibitem[{Abbasi-yadkori et~al.(2011)Abbasi-yadkori, Pál, \protect\BIBand{} Szepesvári}]{abbasi-yadkori_improved_2011}
Abbasi-yadkori Y, Pál D, Szepesvári C (2011) Improved {Algorithms} for {Linear} {Stochastic} {Bandits}. \emph{Advances in {Neural} {Information} {Processing} {Systems}}, volume~24 (Curran Associates, Inc.), \urlprefix\url{https://papers.nips.cc/paper_files/paper/2011/hash/e1d5be1c7f2f456670de3d53c7b54f4a-Abstract.html}.

\bibitem[{Agrawal \protect\BIBand{} Goyal(2012)}]{agrawal_further_2012}
Agrawal S, Goyal N (2012) Further {Optimal} {Regret} {Bounds} for {Thompson} {Sampling}. \urlprefix\url{http://dx.doi.org/10.48550/arXiv.1209.3353}, arXiv:1209.3353 [cs].

\bibitem[{Agrawal \protect\BIBand{} Goyal(2013)}]{agrawal_thompson_2013}
Agrawal S, Goyal N (2013) Thompson {Sampling} for {Contextual} {Bandits} with {Linear} {Payoffs}. \emph{Proceedings of the 30th {International} {Conference} on {Machine} {Learning}}, 127--135 (PMLR), \urlprefix\url{https://proceedings.mlr.press/v28/agrawal13.html}, iSSN: 1938-7228.

\bibitem[{Asadpour et~al.(2020)Asadpour, Wang, \protect\BIBand{} Zhang}]{asadpour2020online}
Asadpour A, Wang X, Zhang J (2020) Online resource allocation with limited flexibility. \emph{Management Science} 66(2):642--666.

\bibitem[{Auer et~al.(2002)Auer, Cesa-Bianchi, \protect\BIBand{} Fischer}]{auer_finite-time_2002}
Auer P, Cesa-Bianchi N, Fischer P (2002) Finite-time {Analysis} of the {Multiarmed} {Bandit} {Problem}. \emph{Machine Learning} 47(2):235--256, ISSN 1573-0565, \urlprefix\url{http://dx.doi.org/10.1023/A:1013689704352}.

\bibitem[{Ba et~al.(2024)Ba, Lin, Zhang, \protect\BIBand{} Zhou}]{ba_doubly_2024}
Ba W, Lin T, Zhang J, Zhou Z (2024) Doubly {Optimal} {No}-{Regret} {Online} {Learning} in {Strongly} {Monotone} {Games} with {Bandit} {Feedback}. \urlprefix\url{http://dx.doi.org/10.48550/arXiv.2112.02856}, arXiv:2112.02856 [cs].

\bibitem[{Bottou(2012)}]{bottou_stochastic_2012}
Bottou L (2012) Stochastic {Gradient} {Descent} {Tricks}. Montavon G, Orr GB, Müller KR, eds., \emph{Neural {Networks}: {Tricks} of the {Trade}: {Second} {Edition}}, 421--436 (Berlin, Heidelberg: Springer), ISBN 978-3-642-35289-8, \urlprefix\url{http://dx.doi.org/10.1007/978-3-642-35289-8_25}.

\bibitem[{Cai et~al.(2024)Cai, Cao, Xu, \protect\BIBand{} Zhu}]{cai2024distributed}
Cai Q, Cao J, Xu G, Zhu N (2024) Distributed recommendation systems: Survey and research directions. \emph{ACM Transactions on Information Systems} 43(1):1--38.

\bibitem[{Chu et~al.(2011)Chu, Li, Reyzin, \protect\BIBand{} Schapire}]{chu_contextual_2011}
Chu W, Li L, Reyzin L, Schapire R (2011) Contextual {Bandits} with {Linear} {Payoff} {Functions}. \emph{Proceedings of the {Fourteenth} {International} {Conference} on {Artificial} {Intelligence} and {Statistics}}, 208--214 (JMLR Workshop and Conference Proceedings), \urlprefix\url{https://proceedings.mlr.press/v15/chu11a.html}, iSSN: 1938-7228.

\bibitem[{Das \protect\BIBand{} Sinha(2024)}]{das_linear_2024}
Das N, Sinha G (2024) Linear {Contextual} {Bandits} with {Hybrid} {Payoff}: {Revisited}. Bifet A, Davis J, Krilavičius T, Kull M, Ntoutsi E, Žliobaitė I, eds., \emph{Machine {Learning} and {Knowledge} {Discovery} in {Databases}. {Research} {Track}}, 441--455 (Cham: Springer Nature Switzerland), ISBN 978-3-031-70365-2, \urlprefix\url{http://dx.doi.org/10.1007/978-3-031-70365-2_26}.

\bibitem[{Dimakopoulou et~al.(2018)Dimakopoulou, Zhou, Athey, \protect\BIBand{} Imbens}]{dimakopoulou_estimation_2018}
Dimakopoulou M, Zhou Z, Athey S, Imbens G (2018) Estimation {Considerations} in {Contextual} {Bandits}. \urlprefix\url{http://dx.doi.org/10.48550/arXiv.1711.07077}, arXiv:1711.07077 [stat].

\bibitem[{Do et~al.(2023)Do, Nguyen-Tang, \protect\BIBand{} Arora}]{do_multi-agent_2023}
Do A, Nguyen-Tang T, Arora R (2023) Multi-{Agent} {Learning} with {Heterogeneous} {Linear} {Contextual} {Bandits}. \emph{Advances in Neural Information Processing Systems} 36:78768--78790, \urlprefix\url{https://proceedings.neurips.cc/paper_files/paper/2023/hash/f8d39584f87944e5dbe46ec76f19e20a-Abstract-Conference.html}.

\bibitem[{Dubey \protect\BIBand{} Pentland(2020)}]{dubey_differentially-private_2020}
Dubey A, Pentland A{\textbackslash}S (2020) Differentially-{Private} {Federated} {Linear} {Bandits}. \emph{Advances in {Neural} {Information} {Processing} {Systems}}, volume~33, 6003--6014 (Curran Associates, Inc.), \urlprefix\url{https://proceedings.neurips.cc/paper/2020/hash/4311359ed4969e8401880e3c1836fbe1-Abstract.html}.

\bibitem[{Duchi et~al.(2011)Duchi, Hazan, \protect\BIBand{} Singer}]{duchi_adaptive_2011}
Duchi J, Hazan E, Singer Y (2011) Adaptive {Subgradient} {Methods} for {Online} {Learning} and {Stochastic} {Optimization}. \emph{Journal of Machine Learning Research} 12(61):2121--2159, ISSN 1533-7928, \urlprefix\url{http://jmlr.org/papers/v12/duchi11a.html}.

\bibitem[{Durand et~al.(2018)Durand, Achilleos, Iacovides, Strati, Mitsis, \protect\BIBand{} Pineau}]{durand_contextual_2018}
Durand A, Achilleos C, Iacovides D, Strati K, Mitsis GD, Pineau J (2018) Contextual {Bandits} for {Adapting} {Treatment} in a {Mouse} {Model} of de {Novo} {Carcinogenesis}. \emph{Proceedings of the 3rd {Machine} {Learning} for {Healthcare} {Conference}}, 67--82 (PMLR), \urlprefix\url{https://proceedings.mlr.press/v85/durand18a.html}, iSSN: 2640-3498.

\bibitem[{Ferreira et~al.(2018)Ferreira, Simchi-Levi, \protect\BIBand{} Wang}]{ferreira_online_2018}
Ferreira KJ, Simchi-Levi D, Wang H (2018) Online {Network} {Revenue} {Management} {Using} {Thompson} {Sampling}. \emph{Operations Research} 66(6):1586--1602, ISSN 0030-364X, \urlprefix\url{http://dx.doi.org/10.1287/opre.2018.1755}, publisher: INFORMS.

\bibitem[{Garivier \protect\BIBand{} Moulines(2011)}]{garivier_upper-confidence_2011}
Garivier A, Moulines E (2011) On {Upper}-{Confidence} {Bound} {Policies} for {Switching} {Bandit} {Problems}. Kivinen J, Szepesvári C, Ukkonen E, Zeugmann T, eds., \emph{Algorithmic {Learning} {Theory}}, 174--188 (Berlin, Heidelberg: Springer), ISBN 978-3-642-24412-4, \urlprefix\url{http://dx.doi.org/10.1007/978-3-642-24412-4_16}.

\bibitem[{Ghosh et~al.(2022)Ghosh, Sankararaman, \protect\BIBand{} Ramchandran}]{ghosh_multi-agent_2022}
Ghosh A, Sankararaman A, Ramchandran K (2022) Multi-agent {Heterogeneous} {Stochastic} {Linear} {Bandits}. \emph{Machine {Learning} and {Knowledge} {Discovery} in {Databases}: {European} {Conference}, {ECML} {PKDD} 2022, {Grenoble}, {France}, {September} 19–23, 2022, {Proceedings}, {Part} {IV}}, 300--316 (Berlin, Heidelberg: Springer-Verlag), ISBN 978-3-031-26411-5, \urlprefix\url{http://dx.doi.org/10.1007/978-3-031-26412-2_19}.

\bibitem[{Hu et~al.(2021)Hu, Chen, Jin, Li, \protect\BIBand{} Wang}]{hu_near-optimal_2021}
Hu J, Chen X, Jin C, Li L, Wang L (2021) Near-{Optimal} {Representation} {Learning} for {Linear} {Bandits} and {Linear} {RL}. \emph{Proceedings of the 38th {International} {Conference} on {Machine} {Learning}}, 4349--4358 (PMLR), \urlprefix\url{https://proceedings.mlr.press/v139/hu21a.html}, iSSN: 2640-3498.

\bibitem[{Huang et~al.(2021)Huang, Wu, Yang, \protect\BIBand{} Shen}]{huang_federated_2021}
Huang R, Wu W, Yang J, Shen C (2021) Federated {Linear} {Contextual} {Bandits}. \emph{Advances in {Neural} {Information} {Processing} {Systems}}, volume~34, 27057--27068 (Curran Associates, Inc.), \urlprefix\url{https://proceedings.neurips.cc/paper_files/paper/2021/hash/e347c51419ffb23ca3fd5050202f9c3d-Abstract.html}.

\bibitem[{Jia et~al.(2022{\natexlab{a}})Jia, Shi, \protect\BIBand{} Shen}]{jia_online_2022-1}
Jia H, Shi C, Shen S (2022{\natexlab{a}}) Online {Learning} and {Pricing} for {Network} {Revenue} {Management} with {Reusable} {Resources}. \emph{Advances in Neural Information Processing Systems} 35:4830--4842, \urlprefix\url{https://proceedings.neurips.cc/paper_files/paper/2022/hash/1f01cdfd07f0ec78124627cf32d0d83c-Abstract-Conference.html}.

\bibitem[{Jia et~al.(2022{\natexlab{b}})Jia, Shi, \protect\BIBand{} Shen}]{jia_online_2022}
Jia H, Shi C, Shen S (2022{\natexlab{b}}) Online {Learning} and {Pricing} with {Reusable} {Resources}: {Linear} {Bandits} with {Sub}-{Exponential} {Rewards}. \emph{Proceedings of the 39th {International} {Conference} on {Machine} {Learning}}, 10135--10160 (PMLR), \urlprefix\url{https://proceedings.mlr.press/v162/jia22c.html}, iSSN: 2640-3498.

\bibitem[{Jia et~al.(2024)Jia, Shi, \protect\BIBand{} Shen}]{jia2024online}
Jia H, Shi C, Shen S (2024) Online learning and pricing for service systems with reusable resources. \emph{Operations Research} 72(3):1203--1241.

\bibitem[{Kingma \protect\BIBand{} Ba(2017)}]{kingma_adam_2017}
Kingma DP, Ba J (2017) Adam: {A} {Method} for {Stochastic} {Optimization}. \urlprefix\url{http://dx.doi.org/10.48550/arXiv.1412.6980}, arXiv:1412.6980 [cs].

\bibitem[{Kolla et~al.(2016)Kolla, Jagannathan, \protect\BIBand{} Gopalan}]{kolla_collaborative_2016}
Kolla RK, Jagannathan K, Gopalan A (2016) Collaborative learning of stochastic bandits over a social network. \emph{2016 54th {Annual} {Allerton} {Conference} on {Communication}, {Control}, and {Computing} ({Allerton})}, 1228--1235, \urlprefix\url{http://dx.doi.org/10.1109/ALLERTON.2016.7852375}.

\bibitem[{Korda et~al.(2016)Korda, Szorenyi, \protect\BIBand{} Li}]{korda_distributed_2016}
Korda N, Szorenyi B, Li S (2016) Distributed {Clustering} of {Linear} {Bandits} in {Peer} to {Peer} {Networks}. \emph{Proceedings of {The} 33rd {International} {Conference} on {Machine} {Learning}}, 1301--1309 (PMLR), \urlprefix\url{https://proceedings.mlr.press/v48/korda16.html}, iSSN: 1938-7228.

\bibitem[{Li \protect\BIBand{} Wang(2021)}]{li_asynchronous_2021}
Li C, Wang H (2021) Asynchronous {Upper} {Confidence} {Bound} {Algorithms} for {Federated} {Linear} {Bandits}. \urlprefix\url{http://dx.doi.org/10.48550/arXiv.2110.01463}, arXiv:2110.01463 [cs].

\bibitem[{Li et~al.(2010{\natexlab{a}})Li, Chu, Langford, \protect\BIBand{} Schapire}]{li_contextual-bandit_2010}
Li L, Chu W, Langford J, Schapire RE (2010{\natexlab{a}}) A {Contextual}-{Bandit} {Approach} to {Personalized} {News} {Article} {Recommendation}. 661--670, \urlprefix\url{http://dx.doi.org/10.1145/1772690.1772758}, arXiv:1003.0146 [cs].

\bibitem[{Li et~al.(2011)Li, Chu, Langford, \protect\BIBand{} Wang}]{li_unbiased_2011}
Li L, Chu W, Langford J, Wang X (2011) Unbiased {Offline} {Evaluation} of {Contextual}-bandit-based {News} {Article} {Recommendation} {Algorithms}. 297--306, \urlprefix\url{http://dx.doi.org/10.1145/1935826.1935878}, arXiv:1003.5956 [cs].

\bibitem[{Li et~al.(2010{\natexlab{b}})Li, Wang, Zhang, Cui, Mao, \protect\BIBand{} Jin}]{li_exploitation_2010}
Li W, Wang X, Zhang R, Cui Y, Mao J, Jin R (2010{\natexlab{b}}) Exploitation and exploration in a performance based contextual advertising system. \emph{Proceedings of the 16th {ACM} {SIGKDD} international conference on {Knowledge} discovery and data mining}, 27--36 (Washington DC USA: ACM), \urlprefix\url{http://dx.doi.org/10.1145/1835804.1835811}.

\bibitem[{Li et~al.(2022)Li, Ratliff, Nassif, Jamieson, \protect\BIBand{} Jain}]{li_instance-optimal_2022}
Li Z, Ratliff L, Nassif H, Jamieson KG, Jain L (2022) Instance-optimal {PAC} {Algorithms} for {Contextual} {Bandits}. \emph{Advances in Neural Information Processing Systems} 35:37590--37603, \urlprefix\url{https://proceedings.neurips.cc/paper_files/paper/2022/hash/f4821075019a058700f6e6738eea1365-Abstract-Conference.html}.

\bibitem[{Lian et~al.(2017)Lian, Zhang, Zhang, Hsieh, Zhang, \protect\BIBand{} Liu}]{lian_can_2017}
Lian X, Zhang C, Zhang H, Hsieh CJ, Zhang W, Liu J (2017) Can {Decentralized} {Algorithms} {Outperform} {Centralized} {Algorithms}? {A} {Case} {Study} for {Decentralized} {Parallel} {Stochastic} {Gradient} {Descent}. \emph{Advances in {Neural} {Information} {Processing} {Systems}}, volume~30 (Curran Associates, Inc.), \urlprefix\url{https://proceedings.neurips.cc/paper_files/paper/2017/hash/f75526659f31040afeb61cb7133e4e6d-Abstract.html}.

\bibitem[{Mahadik et~al.(2020)Mahadik, Wu, Li, \protect\BIBand{} Sabne}]{mahadik_fast_2020}
Mahadik K, Wu Q, Li S, Sabne A (2020) Fast distributed bandits for online recommendation systems. \emph{Proceedings of the 34th {ACM} {International} {Conference} on {Supercomputing}}, 1--13, {ICS} '20 (New York, NY, USA: Association for Computing Machinery), ISBN 978-1-4503-7983-0, \urlprefix\url{http://dx.doi.org/10.1145/3392717.3392748}.

\bibitem[{Marinov \protect\BIBand{} Zimmert(2021)}]{marinov_pareto_2021}
Marinov TV, Zimmert J (2021) The {Pareto} {Frontier} of model selection for general {Contextual} {Bandits}. \emph{Advances in {Neural} {Information} {Processing} {Systems}}, volume~34, 17956--17967 (Curran Associates, Inc.), \urlprefix\url{https://proceedings.neurips.cc/paper_files/paper/2021/hash/9570efef719d705326f0ff817ef084e6-Abstract.html}.

\bibitem[{Martínez-Rubio et~al.(2019)Martínez-Rubio, Kanade, \protect\BIBand{} Rebeschini}]{martinez-rubio_decentralized_2019}
Martínez-Rubio D, Kanade V, Rebeschini P (2019) Decentralized {Cooperative} {Stochastic} {Bandits}. \urlprefix\url{http://dx.doi.org/10.48550/arXiv.1810.04468}, arXiv:1810.04468 [cs].

\bibitem[{Neu et~al.(2022)Neu, Olkhovskaia, Papini, \protect\BIBand{} Schwartz}]{neu_lifting_2022}
Neu G, Olkhovskaia I, Papini M, Schwartz L (2022) Lifting the {Information} {Ratio}: {An} {Information}-{Theoretic} {Analysis} of {Thompson} {Sampling} for {Contextual} {Bandits}. \emph{Advances in Neural Information Processing Systems} 35:9486--9498, \urlprefix\url{https://proceedings.neurips.cc/paper_files/paper/2022/hash/3d84d9b523e6e82916d496e58761002e-Abstract-Conference.html}.

\bibitem[{Qi et~al.(2024)Qi, Ban, Banerjee, \protect\BIBand{} He}]{qi_robust_2024}
Qi Y, Ban Y, Banerjee A, He J (2024) Robust {Neural} {Contextual} {Bandit} against {Adversarial} {Corruptions}. \emph{Advances in Neural Information Processing Systems} 37:19378--19446, \urlprefix\url{https://proceedings.neurips.cc/paper_files/paper/2024/hash/22862040c1781356c8c3df4d00e5811b-Abstract-Conference.html}.

\bibitem[{Qu et~al.(2025)Qu, Jia, \protect\BIBand{} You}]{qu2025decision}
Qu C, Jia H, You P (2025) Decision-dependent distributionally robust optimization with application to dynamic pricing. \emph{arXiv preprint arXiv:2508.06965} .

\bibitem[{Reddi et~al.(2019)Reddi, Kale, \protect\BIBand{} Kumar}]{reddi_convergence_2019}
Reddi SJ, Kale S, Kumar S (2019) On the {Convergence} of {Adam} and {Beyond}. \urlprefix\url{http://dx.doi.org/10.48550/arXiv.1904.09237}, arXiv:1904.09237 [cs].

\bibitem[{Sezener et~al.(2020)Sezener, Hutter, Budden, Wang, \protect\BIBand{} Veness}]{sezener_online_2020}
Sezener E, Hutter M, Budden D, Wang J, Veness J (2020) Online {Learning} in {Contextual} {Bandits} using {Gated} {Linear} {Networks}. \emph{Advances in {Neural} {Information} {Processing} {Systems}}, volume~33, 19467--19477 (Curran Associates, Inc.), \urlprefix\url{https://proceedings.neurips.cc/paper_files/paper/2020/hash/e287f0b2e730059c55d97fa92649f4f2-Abstract.html}.

\bibitem[{Shi et~al.(2021)Shi, Shen, \protect\BIBand{} Yang}]{shi_federated_2021}
Shi C, Shen C, Yang J (2021) Federated {Multi}-armed {Bandits} with {Personalization}. \emph{Proceedings of {The} 24th {International} {Conference} on {Artificial} {Intelligence} and {Statistics}}, 2917--2925 (PMLR), \urlprefix\url{https://proceedings.mlr.press/v130/shi21c.html}, iSSN: 2640-3498.

\bibitem[{Szorenyi et~al.(2013)Szorenyi, Busa-Fekete, Hegedus, Ormandi, Jelasity, \protect\BIBand{} Kegl}]{szorenyi_gossip-based_2013}
Szorenyi B, Busa-Fekete R, Hegedus I, Ormandi R, Jelasity M, Kegl B (2013) Gossip-based distributed stochastic bandit algorithms. \emph{Proceedings of the 30th {International} {Conference} on {Machine} {Learning}}, 19--27 (PMLR), \urlprefix\url{https://proceedings.mlr.press/v28/szorenyi13.html}, iSSN: 1938-7228.

\bibitem[{Wang et~al.(2019)Wang, Chen, \protect\BIBand{} Simchi-Levi}]{wang_multi-modal_2019}
Wang Y, Chen B, Simchi-Levi D (2019) Multi-{Modal} {Dynamic} {Pricing}. \urlprefix\url{http://dx.doi.org/10.2139/ssrn.3489355}.

\bibitem[{Wang et~al.(2020)Wang, Hu, Chen, \protect\BIBand{} Wang}]{wang_distributed_2020}
Wang Y, Hu J, Chen X, Wang L (2020) Distributed {Bandit} {Learning}: {Near}-{Optimal} {Regret} with {Efficient} {Communication}. \urlprefix\url{https://iclr.cc/virtual_2020/poster_SJxZnR4YvB.html}.

\bibitem[{Wen et~al.(2024)Wen, Han, \protect\BIBand{} Zhou}]{wen_stochastic_2024}
Wen Y, Han Y, Zhou Z (2024) Stochastic contextual bandits with graph feedback: from independence number to {MAS} number. \urlprefix\url{http://dx.doi.org/10.48550/arXiv.2402.18591}, arXiv:2402.18591 [cs].

\bibitem[{Xu et~al.(2023)Xu, Min, \protect\BIBand{} Wang}]{xu_noise-adaptive_2023}
Xu R, Min Y, Wang T (2023) Noise-{Adaptive} {Thompson} {Sampling} for {Linear} {Contextual} {Bandits}. \emph{Advances in Neural Information Processing Systems} 36:23630--23657, \urlprefix\url{https://proceedings.neurips.cc/paper_files/paper/2023/hash/4a6824f8f137e78f18e73d9cfc1d22ed-Abstract-Conference.html}.

\bibitem[{Yang et~al.(2021)Yang, Hu, Lee, \protect\BIBand{} Du}]{yang_impact_2021}
Yang J, Hu W, Lee JD, Du SS (2021) Impact of {Representation} {Learning} in {Linear} {Bandits}. \urlprefix\url{http://dx.doi.org/10.48550/arXiv.2010.06531}, arXiv:2010.06531 [cs].

\bibitem[{Zhou et~al.(2020)Zhou, Li, \protect\BIBand{} Gu}]{zhou_neural_2020}
Zhou D, Li L, Gu Q (2020) Neural {Contextual} {Bandits} with {UCB}-based {Exploration}. \emph{Proceedings of the 37th {International} {Conference} on {Machine} {Learning}}, 11492--11502 (PMLR), \urlprefix\url{https://proceedings.mlr.press/v119/zhou20a.html}, iSSN: 2640-3498.

\bibitem[{Zhou et~al.(2019)Zhou, Xu, \protect\BIBand{} Blanchet}]{zhou_learning_2019}
Zhou Z, Xu R, Blanchet J (2019) Learning in {Generalized} {Linear} {Contextual} {Bandits} with {Stochastic} {Delays}. \emph{Advances in {Neural} {Information} {Processing} {Systems}}, volume~32 (Curran Associates, Inc.), \urlprefix\url{https://papers.nips.cc/paper_files/paper/2019/hash/56cb94cb34617aeadff1e79b53f38354-Abstract.html}.

\end{thebibliography}

\clearpage

\begin{APPENDICES}

\section{Notation Summary}\label{app:notation}
\begin{table}[htbp]
\centering
\begin{tabular}{l|l}
\toprule
\textbf{Notation} & \textbf{Description} \\
\midrule
$a_{i,t}^{(k)} \in \mathcal{A}$ & Arm selected by node $i$ at time $t$; index $k=1,\ldots,K$; arm set $\mathcal{A}$\\
$b_{i,t}^k$ & Response vector in LinUCB parameter update \\
$c$ & Context diversity constant, lower bound on eigenvalue in LinUCB regret analysis \\
$d, d_i$ & Context dimension for node $i$; $d_i=d_c+d_{i,s}$\\
$d_c$ & Shared/common context dimension \\
$d_s, d_{i,s}$ & Node-specific context dimension for node $i$ \\
$g$ & Gradient used in SGD update \\
$h$ & Index for specific entry of vector or matrix \\
$i, j, q$ & Indices over nodes \\
$N$ & Total number of nodes, i.e., $i, j, q \in [N]$ \\
$\mathbf{x}_{i,t}$ & Full context vector for node $i$ at time $t$ \\
$\mathbf{x}_{i,c,t}$ & Shared context part for node $i$ at time $t$ \\
$\mathbf{x}_{i,s,t}$ & Node-specific context part for node $i$ at time $t$ \\
$\epsilon$ & Sub-Gaussian noise with zero mean and variance $\sigma^2$ \\
$r$ & Reward, modeled as a linear function of context \\
$\theta$ & Model parameters; include shared \(\theta_c\) and local \(\theta_{i,s}\) components \\
$t, \tau$ & Time step indices \\
$\mathbf{v}$ & Momentum vector in SGD \\
$G_{i,t}^k$ & Diagonal matrix accumulating squared gradients (SGD) for node $i$, arm $k$ \\
$W_{i,t}^k$ & Design matrix storing historical contexts in LinUCB\\
\( s_t \) & Confidence radius in standard LinUCB, \( s_t = \sqrt{\mathbf{x}_t^\top W_t^{-1} \mathbf{x}_t} \) \\
\( \tilde{s}_t \) & Weighted confidence radius in NetLinUCB, \( \tilde{s}_t = \sqrt{\sum_{j=1}^N (\omega_j \mathbf{x}_{c,t})^\top W_{j,t}^{-1} (\omega_j \mathbf{x}_{c,t})} \) \\
$\mathcal{G}$ & Fully connected network of nodes \\
$\Omega^k$ & Weight matrix for arm $k$ over network $\mathcal{G}$ \\
$\omega_{ji}^k$ & Influence of node $j$ on node $i$ for arm $k$ \\
$\gamma$ & EMA smoothing parameter \\
$\eta$ & Confidence level parameter in LinUCB regret analysis, $\eta = \sqrt{\frac{1}{2} \log(2T)}$ \\
$\alpha^{\text{ridge}}$ & Exploration parameter in NetLinUCB, scaling the confidence radius \\
$\alpha^{\text{sgd}}$ & Exploration parameter in Net-SGD-UCB, scaling the confidence radius\\
$\eta^{\text{sgd}}$ & Learning rate in SGD update \\
$\mu$ & Momentum coefficient in SGD \\
$n_{i,t}^k$ & Number of times arm \(k\) has been selected by node \(i\) before time \(t\) \\
$n_t^k$ & Total number of times arm \(k\) has been selected by all nodes before time \(t\) \\
$\Psi_{i,t}^k$ & Set of time steps before \(t\) when arm \(k\) was selected; \(|\Psi_{i,t}^k| = n_{i,t}^k\) \\
RIDGE-$\mathcal{U}_{i,t}^k$ & UCB expression in NetLinUCB using design matrix \(W\) \\
SGD-$\mathcal{U}_{i,t}^k$ & UCB expression in Net-SGD-UCB using gradient-based matrix \(G\) \\
\bottomrule
\end{tabular}
\caption{Summary of Notation}
\label{tab:notation}
\end{table}

\section{Supporting Results for Disjoint LinUCB}\label{app:disjoint_linucb}
This section complements the benchmark algorithms presented in Section~\ref{sec:benchmark}. The Disjoint LinUCB algorithm \citep{chu_contextual_2011, abbasi-yadkori_improved_2011} assumes that, on each node, the coefficients for each arm \(a^{(k)}\) are independent, and that context vectors \(\mathbf{x}_t\) are shared across all arms. 
\subsection{Formulation for Disjoint LinUCB}\label{sec:disjoint_formulation}
The expected reward of arm \(a^{(k)}\) is given by \(\mathbb{E}[r_t^k \mid \mathbf{x}_{t}] = \mathbf{x}_{t}^\top \mathbf{\theta}^k\). The observed reward of arm \(a^{(k)}\) is modeled as \(r_t^k = \mathbf{x}_t^\top \mathbf{\theta}^k + \epsilon_k\), where \(\mathbf{\theta}^k\) is the true coefficient vector for arm \(a^{(k)}\), and \(\epsilon_k \sim \mathcal{N}(0, \sigma_k^2)\) is Gaussian noise. At time \(t\), the optimal arm is \(a_t^{\star} = \arg\max_{a \in \mathcal{A}} \mathbf{x}_t^\top \mathbf{\theta}^{a}\), while the agent selects \(a_t = \arg\max_{a \in \mathcal{A}} \mathbf{x}_t^\top \hat{\mathbf{\theta}}^a\) based on estimated parameters. After receiving the corresponding reward, the agent updates its historical data. The cumulative regret over \(T\) rounds is defined as
\( R(T) = \mathbb{E}[\sum_{t=1}^T r_{t, a_t^{\star}}] - \mathbb{E}[\sum_{t=1}^T r_{t, a_t}] = \sum_{t=1}^T \mathbf{x}_t^\top \mathbf{\theta}^{a_t^{\star}} - \sum_{t=1}^T \mathbf{x}_t^\top \mathbf{\theta}^{a_t}\).

For each arm \(k\), let \(s_t = \sqrt{\mathbf{x}_t^\top W_{t}^{-1} \mathbf{x}_t} \in \mathbb{R}^+\) denote the standard deviation term at round \(t\). Let \(D_{t} = \left[\mathbf{x}_{\tau}^\top \right]_{\tau \in \Psi_t} \in \mathbb{R}^{|\Psi_t| \times d}\) and \(z_t = [r_{\tau}]_{\tau \in \Psi_t} \in \mathbb{R}^{|\Psi_t| \times 1}\) represent the context and reward histories respectively, where \(\Psi_t\) is the set of time steps \(\tau\) when arm \(k\) was selected before \(t\). We define the design matrix \(W_{t} = I_d + D_{t}^\top D_{t}\) and the response vector \(b_{t} = D_{t}^\top z_t\). LinUCB uses ridge regression to estimate the coefficient vector via the optimization \(\hat{\theta} = \min_{\left\| \theta \right\| \le 1} \left\| W\theta-b \right\|^2 + \left\|  I \theta \right\|^2\), where the regularization parameter is set to $\lambda=1$. It yields the closed-form solution,
\[
\hat{\theta} = (I_d + D_{t}^\top D_{t})^{-1}D_t^\top z_t = W_{t}^{-1} b_t.
\]

LinUCB estimates the reward for each arm by combining the context vector \(\mathbf{x}_t\) with the estimated parameter vector \(\hat{\mathbf{\theta}}^k_t\). To capture uncertainty, it incorporates a confidence term scaled by an exploration parameter \(\alpha^{\text{ridge}}\). At each time step, the algorithm selects the arm with the highest upper confidence bound:
\[\hat{r}_{t}^k = \mathbf{x}_t^\top \hat{\theta}_t^k + \alpha^{\text{ridge}} \sqrt{\mathbf{x}_t^\top W^{-1} \mathbf{x}_t}.\]
This approach balances exploration, selecting arms with greater uncertainty, and exploitation, choosing arms expected to yield higher rewards. LinUCB is computationally efficient and easy to implement, making it well-suited for settings where the linear reward assumption holds.

\begin{algorithm}[ht]
\caption{Disjoint LinUCB}
\label{alg:disjoint-linucb}
\textbf{Input}: Regularization parameter \(\lambda\), exploration parameter \(\alpha^{\text{ridge}}\), time horizon \(T\).\\
\textbf{Parameter}: Design matrix \(W_k = I_d\), and response vector \(b_k = \mathbf{0}_d\) for each arm \(k \in \mathcal{A}\).
% \STATE \textbf{Output}: Arm selections \(\{a_t\}_{t=1}^T\) and parameter estimates \(\{\hat{\theta}_t^k\}_{t=1}^T\) for each arm $k$
\begin{algorithmic}[1]
\FOR{\(t = 1\) to \(T\)}
    \STATE Observe context \(\mathbf{x}_t \in \mathbb{R}^d\).
    \FOR{each arm \(k \in \mathcal{A}\)}
        \STATE Compute estimate: \(\hat{\theta}_t^k \gets W_k^{-1} b_k\).
        \STATE Compute UCB: \(\hat{r}_t^k \gets \mathbf{x}_t^\top \hat{\theta}_t^k + \alpha^{\text{ridge}} \sqrt{\mathbf{x}_t^\top W_k^{-1} \mathbf{x}_t}\).
    \ENDFOR
    \STATE Select arm \(a_t = \arg\max_{k} \hat{r}_t^k\); observe reward \(r_t\).
    \STATE Update:
    \begin{align*}
    W_{a_t} &\gets W_{a_t} + \mathbf{x}_t \mathbf{x}_t^\top,\\
    b_{a_t} &\gets b_{a_t} + r_t \mathbf{x}_t.
    \end{align*}
\ENDFOR
\end{algorithmic}
\end{algorithm}

\subsection{Analysis for Disjoint LinUCB}\label{sec:disjoint_analysis}
We analyze the regret of the Disjoint LinUCB algorithm following the formulation and results from \citet{chu_contextual_2011, abbasi-yadkori_improved_2011}.

\begin{proposition}[Error bound for Disjoint LinUCB, \citet{chu_contextual_2011}, Lemma 1] \label{prop:disjoint1}
Suppose for each node \(i \in [N]\), the input index set \(\Psi_t\) in Disjoint LinUCB is constructed such that, for any fixed context vector \( \mathbf{x}_{i,\tau} \) with \( \tau \in \Psi_{i,t} \), the corresponding reward variables \( r_{i,\tau} \) are independent random variables with expectations \( \mathbb{E}[r_{i,\tau}] = \mathbf{x}_{i,\tau}^\top \theta_i \). Then, with probability at least \( 1 - \frac{1}{T} \), the following inequality holds uniformly over all arms \( a \in \mathcal{A} \) and nodes \(i \in [N]\):
\[
\left| \mathbf{x}_{i,t}^\top \hat{\theta}_i^a -  \mathbf{x}_{i,t}^\top \theta_i^a \right| \leq (\eta + 1) s_{i,t}.
\]
\end{proposition}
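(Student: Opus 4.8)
The plan is to reduce the uniform statement to a single node--arm pair and then close with a union bound. Because Disjoint LinUCB maintains fully independent estimators across nodes, and estimates each arm separately within a node, it suffices to fix one node $i$ and one arm $a$, prove $|\mathbf{x}_{i,t}^\top\hat\theta_i^a - \mathbf{x}_{i,t}^\top\theta_i^a| \le (\eta+1)s_{i,t}$ for that pair, and finally take a union bound over the at most $NK$ pairs. Throughout I would suppress the node/arm indices and work with the closed-form ridge estimator $\hat\theta = W_t^{-1}b_t$, where $W_t = I_d + D_t^\top D_t$ and $b_t = D_t^\top z_t$.

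First I would substitute the reward model $z_\tau = \mathbf{x}_\tau^\top\theta + \xi_\tau$, with $\xi_\tau := r_\tau - \mathbf{x}_\tau^\top\theta$ the zero-mean noise, into $b_t$. Writing $b_t = D_t^\top D_t\,\theta + D_t^\top\xi = (W_t - I_d)\theta + D_t^\top\xi$ and left-multiplying by $W_t^{-1}$ gives the exact decomposition
\[
\mathbf{x}_t^\top\hat\theta - \mathbf{x}_t^\top\theta = \underbrace{-\,\mathbf{x}_t^\top W_t^{-1}\theta}_{\text{regularization bias}} \;+\; \underbrace{\mathbf{x}_t^\top W_t^{-1}D_t^\top\xi}_{\text{noise}}.
\]
The bias term is handled deterministically: by Cauchy--Schwarz in the $W_t^{-1}$-inner product, $|\mathbf{x}_t^\top W_t^{-1}\theta| \le \sqrt{\mathbf{x}_t^\top W_t^{-1}\mathbf{x}_t}\,\sqrt{\theta^\top W_t^{-1}\theta}$, and since $W_t \succeq I_d$ forces $W_t^{-1}\preceq I_d$, Assumption~\ref{assump:bounded} ($\|\theta\|_2\le 1$) bounds this by $s_t\cdot 1 = s_t$. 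This accounts for the ``$+1$'' coefficient.

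Next I would control the noise term, which is the heart of the argument. Writing it as $\sum_{\tau\in\Psi_t} c_\tau\xi_\tau$ with weights $c_\tau = \mathbf{x}_t^\top W_t^{-1}\mathbf{x}_\tau$, the key identity is $\sum_{\tau}c_\tau^2 = \mathbf{x}_t^\top W_t^{-1}(W_t - I_d)W_t^{-1}\mathbf{x}_t \le \mathbf{x}_t^\top W_t^{-1}\mathbf{x}_t = s_t^2$, using $W_t - I_d \preceq W_t$. Under the proposition's hypothesis that, conditioned on the contexts, the rewards $\{r_\tau\}_{\tau\in\Psi_t}$ are independent with the stated means, the weights $c_\tau$ are fixed and the $\xi_\tau$ are independent zero-mean (here sub-Gaussian, since the noise is Gaussian) variables, so a Hoeffding-type concentration gives a tail for $|\sum_\tau c_\tau\xi_\tau|$ decaying like $2\exp(-\Theta(\eta^2 s_t^2/\sum_\tau c_\tau^2)) \le 2\exp(-\Theta(\eta^2))$. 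Choosing $\eta = \sqrt{\tfrac12\log(2T)}$ and applying the union bound over the $NK$ node--arm pairs yields the uniform guarantee at level $1-\tfrac1T$; adding the bias bound gives the stated $(\eta+1)s_t$.

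The step I expect to be the main obstacle is justifying the conditional independence that licenses the Hoeffding-type step. In an adaptive algorithm the index set $\Psi_t$, the matrix $W_t$, and hence the weights $c_\tau$ all depend on past rewards, so in general the collected rewards are neither independent nor independent of the weights. The hypothesis is precisely the abstraction of the phased ``master'' construction (the SupLinUCB analysis of \citet{chu_contextual_2011}) that decouples the rewards feeding each estimator from the context-selection history. I would therefore state explicitly that the bound is conditional on this independence being engineered by the outer algorithm, and remark that without it one must instead invoke a self-normalized martingale bound in the style of \citet{abbasi-yadkori_improved_2011}.
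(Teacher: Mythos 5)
Your proposal is correct and follows essentially the same route as the paper's proof: the identical bias--variance decomposition \(\mathbf{x}_t^\top\hat{\theta}-\mathbf{x}_t^\top\theta = \mathbf{x}_t^\top W_t^{-1}D_t^\top\xi - \mathbf{x}_t^\top W_t^{-1}\theta\), the same key inequality \(\sum_{\tau} c_\tau^2 = \|D_t W_t^{-1}\mathbf{x}_t\|^2 \le s_t^2\) feeding a Hoeffding bound with \(\eta=\sqrt{\log(2T)/2}\), the same deterministic bound of the bias term by \(s_t\) (your Cauchy--Schwarz step in the \(W_t^{-1}\)-inner product is equivalent to the paper's direct bound \(\|W_t^{-1}\mathbf{x}_t\|\le s_t\)), and a closing union bound. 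Your explicit caveat that the conditional independence of the rewards indexed by \(\Psi_t\) must be engineered by a phased construction in the style of SupLinUCB is a more careful articulation of the hypothesis the paper simply invokes, and does not alter the argument.
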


\proof{Proof of Proposition \ref{prop:disjoint1}.}
Using the standard notation from the Disjoint LinUCB algorithm, we can express the prediction error as
\begin{align*}
\mathbf{x}_{i,t}^\top\hat{\theta}_i -  \mathbf{x}_{i,t}^\top \theta_i &= \mathbf{x}_{i,t}^\top W_{i,t}^{-1} b_{i,t} - \mathbf{x}_{i,t}^\top W_{i,t}^{-1} \left(I_d + D_{i,t}^\top D_{i,t} \right) \theta_i \\
&= \mathbf{x}_{i,t}^\top W_{i,t}^{-1} D_{i,t}^\top r_{i,t} - \mathbf{x}_{i,t}^\top W_{i,t}^{-1} \left( \theta_i + D_{i,t}^\top D_{i,t} \theta_i \right)\\
&= \mathbf{x}_{i,t}^\top W_{i,t}^{-1} D_{i,t}^\top (r_{i,t} - D_{i,t} \theta_i) - \mathbf{x}_{i,t}^\top W_{i,t}^{-1} \theta_i.
\end{align*}
Because of \( \| \theta_i \| \leq 1 \) by Assumption~\ref{assump:bounded}, we have
\[
\left| \mathbf{x}_{i,t}^\top\hat{\theta}_i -  \mathbf{x}_{i,t}^\top \theta_i \right| \leq \left| \mathbf{x}_{i,t}^\top W_{i,t}^{-1} D_{i,t}^\top (r_{i,t} - D_{i,t} \theta_i) \right| + \left\| W_{i,t}^{-1} \mathbf{x}_{i,t} \right\|.
\]
The right-hand side above decomposes the prediction error into a variance term (first) and a bias term (second). Because the data indexed in \(\Psi_{i,t} \) are independent, \(E[r_{i,t} - D_{i,t} \theta_i] = 0\). Applying Hoeffding's inequality yields
\[
\Pr\left( \left| \mathbf{x}_{i,t}^\top W_{i,t}^{-1} D_{i,t}^\top (r_{i,t} - D_{i,t} \theta_i) \right| > \eta_i s_{i,t} \right) \leq 2 \exp \left( -\frac{2 \eta^2 s^2_{i,t}}{ \left\| D_{i,t} W_{i,t}^{-1} \mathbf{x}_{i,t} \right\|^2} \right) \leq 2 \exp(-2 \eta^2) =  \frac{1}{T},
\]
where the last inequality follows from the fact that
\[
s^2_{i,t} = \mathbf{x}_{i,t}^\top W_{i,t}^{-1} \mathbf{x}_{i,t} = \mathbf{x}_{i,t}^\top W_{i,t}^{-1} (I_d + D_{i,t}^\top D_{i,t}) W_{i,t}^{-1} \mathbf{x}_{i,t} 
\ge \mathbf{x}_{i,t}^\top W_{i,t}^{-1} D_{i,t}^\top D_{i,t} W_{i,t}^{-1} \mathbf{x}_{i,t} = \left( D_{i,t} W_{i,t}^{-1} \mathbf{x}_{i,t} \right)^2.
\]

Setting \( \eta_i = \sqrt{\frac{\log(2T)}{2}} \) ensures that the tail probability is bounded by \( \frac{1}{T} \), i.e.,
\[
2 \exp(-2 \eta_i^2) = \frac{1}{T}.
\]

Applying a union bound over all arms, we can guarantee, with probability at least \( 1 - 1 / T \), that for all arms \( a \in \mathcal{A} \),
\[
\left| \mathbf{x}_{i,t}^\top W_{i,t}^{-1} D_{i,t}^\top (r_{i,t} - D_{i,t} \theta_i) \right| \leq \eta_i s_{i,t}.
\]

We now turn to bounding the bias term, where
\[
\left\| W_{i,t}^{-1} \mathbf{x}_{i,t} \right\| = \sqrt{\mathbf{x}_{i,t}^\top W_{i,t}^{-1} I_d W_{i,t}^{-1} \mathbf{x}_{i,t}} 
\leq \sqrt{\mathbf{x}_{i,t}^\top W_{i,t}^{-1} (I_d + D_{i,t}^\top D_{i,t}) W_{i,t}^{-1} \mathbf{x}_{i,t}} 
= s_{i,t}.
\]
Combining the variance and bias bounds yields the final result.
\QED \endproof

\begin{proposition} [Determinant-Trace Inequality, \citet{auer_finite-time_2002}, Lemma 11] \label{prop:disjoint2} 
Let \( \mathbf{x}_1, \mathbf{x}_2, \dots, \mathbf{x}_t \in \mathbb{R}^d \) with \( \| \mathbf{x}_\tau \|_2 \leq L \) for all \( 1 \leq \tau \leq t \), and let \(\lambda >0\). Define \(W_t = \lambda I + \sum_{\tau=1}^T \mathbf{x}_{\tau} \mathbf{x}_{\tau}^\top\). Then,
\[
\det(W_t) \leq \left( \lambda + \frac{t L^2}{d} \right)^d.
\]
\end{proposition}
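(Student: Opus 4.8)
The plan is to exploit the fact that $W_t$ is a real symmetric positive definite matrix (since $\lambda > 0$ and each rank-one term $\mathbf{x}_\tau \mathbf{x}_\tau^\top$ is positive semidefinite), so it admits an orthonormal eigendecomposition with strictly positive eigenvalues $\lambda_1, \dots, \lambda_d > 0$. The determinant is the product of these eigenvalues and the trace is their sum, so the entire argument reduces to a single application of the arithmetic-geometric mean inequality to the eigenvalue spectrum, followed by an elementary bound on the trace.

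Concretely, I would first record the two spectral identities $\det(W_t) = \prod_{i=1}^d \lambda_i$ and $\operatorname{trace}(W_t) = \sum_{i=1}^d \lambda_i$. Because all $\lambda_i$ are nonnegative, AM-GM gives
\[
\det(W_t) = \prod_{i=1}^d \lambda_i \;\leq\; \left(\frac{1}{d}\sum_{i=1}^d \lambda_i\right)^{\!d} = \left(\frac{\operatorname{trace}(W_t)}{d}\right)^{\!d}.
\]
Next I would compute the trace directly from the definition of $W_t$. Using linearity of the trace and the cyclic identity $\operatorname{trace}(\mathbf{x}_\tau \mathbf{x}_\tau^\top) = \mathbf{x}_\tau^\top \mathbf{x}_\tau = \|\mathbf{x}_\tau\|_2^2$, together with $\operatorname{trace}(\lambda I) = \lambda d$, I obtain
\[
\operatorname{trace}(W_t) = \lambda d + \sum_{\tau=1}^t \|\mathbf{x}_\tau\|_2^2 \;\leq\; \lambda d + t L^2,
\]
where the inequality invokes the hypothesis $\|\mathbf{x}_\tau\|_2 \leq L$. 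Substituting this trace bound into the AM-GM estimate yields
\[
\det(W_t) \leq \left(\frac{\lambda d + t L^2}{d}\right)^{\!d} = \left(\lambda + \frac{t L^2}{d}\right)^{\!d},
\]
which is exactly the claimed bound.

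This proof is essentially routine, so there is no genuine obstacle; the only points requiring a moment's care are verifying that the eigenvalues are nonnegative (so that AM-GM applies legitimately), which follows from positive semidefiniteness of $W_t$, and correctly evaluating $\operatorname{trace}(\mathbf{x}_\tau \mathbf{x}_\tau^\top)$ as the squared Euclidean norm rather than mishandling the rank-one outer product. No martingale or concentration machinery is needed here, in contrast to the earlier error-bound propositions; this is a purely deterministic linear-algebraic estimate whose role is to later control the sum of confidence radii through a standard determinant-telescoping argument.
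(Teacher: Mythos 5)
Your proposal is correct and follows essentially the same route as the paper's proof: an orthonormal eigendecomposition of the symmetric positive definite matrix $W_t$, the AM--GM inequality applied to the eigenvalues to bound $\det(W_t)$ by $\left(\operatorname{trace}(W_t)/d\right)^d$, and the trace computation $\operatorname{trace}(W_t) = \lambda d + \sum_{\tau=1}^t \|\mathbf{x}_\tau\|_2^2 \leq \lambda d + tL^2$. If anything, your write-up is slightly cleaner than the paper's, which momentarily writes $\operatorname{tr}(I)$ where $\operatorname{tr}(\lambda I)$ is meant, whereas you handle the regularization term explicitly.
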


\proof{Proof of Proposition \ref{prop:disjoint2}.}
Let \( \lambda_1, \lambda_2, \dots, \lambda_d\) denote the eigenvalues of \( W_t \). Because \( W_t \) is symmetric positive definite, all eigenvalues are strictly positive. Note that \(\det(W_t) = \prod_{h=1}^d \lambda_h \), and \( \operatorname{tr}(W_t) = \sum_{h=1}^d \lambda_h \). By the inequality of arithmetic and geometric means, we have
\[
\prod_{h=1}^d \lambda_h \leq \left( \frac{1}{d} \sum_{h=1}^d \lambda_h\right)^d.
\]
We now bound the trace,
\[
\operatorname{tr}(W_t) = \operatorname{tr}(I) + \sum_{\tau=1}^T \operatorname{tr}(\mathbf{x}_{\tau} \mathbf{x}_{\tau}^\top) = d \lambda + \sum_{\tau=1}^T \| \mathbf{x}_{\tau} \|_2^2 \leq d \lambda + t L^2.
\]
Combining the bounds for the determinant and trace, we obtain
\[
\det(W_t) \leq \left( \lambda + \frac{t L^2}{d} \right)^d.
\]
\QED \endproof

\begin{proposition}[Log-Determinant growth, \citet{abbasi-yadkori_improved_2011}, Theorem 1] \label{prop:disjoint3} 
Let \(W_t =  \lambda I + \sum_{\tau=1}^{t} \mathbf{x}_{\tau} \mathbf{x}_{\tau}^\top\), then the following inequality holds:
\[
\log \left( \frac{\det(W_T)}{\det(W_0)} \right) \leq \sum_{t=1}^T \mathbf{x}_t^\top W_{t-1}^{-1} \mathbf{x}_t .
\]
Moreover, if \( \|\mathbf{x}_t\|^2 \leq L \) for all \( t \), then
\[
\sum_{t=1}^T \min \{ 1, \mathbf{x}_t^\top W_{t-1}^{-1} \mathbf{x}_t \} \leq 2 \log \left( \frac{\det(W_T)}{\det(W_0)} \right) \leq 2 \left( d \log \left( \lambda + \frac{TL^2}{d} \right) - d \log \lambda \right).
\]
Finally, if \( \lambda \geq \max(1, L^2) \), then
\[
\log \left( \frac{\det(W_T)}{\det(W_0)} \right) \leq \sum_{t=1}^T \mathbf{x}_t^\top W_{t-1}^{-1} \mathbf{x}_t \leq 2 \log \left( \frac{\det(W_T)}{\det(W_0)} \right) \leq 2d \log \left( 1 + \frac{TL^2}{d\lambda} \right).
\]
\end{proposition}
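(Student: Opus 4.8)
The plan is to exploit the rank-one recursion $W_t = W_{t-1} + \mathbf{x}_t\mathbf{x}_t^\top$ together with the matrix determinant lemma, which gives $\det(W_t) = \det(W_{t-1})\,(1 + \mathbf{x}_t^\top W_{t-1}^{-1}\mathbf{x}_t)$. Dividing by $\det(W_{t-1})$, taking logarithms, and telescoping over $t = 1,\dots,T$ yields the exact identity
\[
\log\!\left(\frac{\det(W_T)}{\det(W_0)}\right) = \sum_{t=1}^T \log\!\left(1 + \mathbf{x}_t^\top W_{t-1}^{-1}\mathbf{x}_t\right).
\]
This identity is the backbone of the argument: every inequality in the statement follows by sandwiching the summand $\log(1+u_t)$, where $u_t := \mathbf{x}_t^\top W_{t-1}^{-1}\mathbf{x}_t \ge 0$, between elementary bounds.

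For the first inequality I would apply $\log(1+u) \le u$, valid for all $u \ge 0$, termwise, which immediately gives $\log(\det W_T/\det W_0) \le \sum_t u_t$. For the reverse-direction bound I would use the elementary fact that $\min\{1,u\} \le 2\log(1+u)$ for every $u \ge 0$: when $u \le 1$ this reads $u \le 2\log(1+u)$, which holds because $1/(1+s) \ge 1/2$ on $[0,1]$ so that $\log(1+u) = \int_0^u (1+s)^{-1}\,ds \ge u/2$; and when $u > 1$ it reduces to $1 \le 2\log 2 \le 2\log(1+u)$. Summing termwise then gives $\sum_t \min\{1, u_t\} \le 2\log(\det W_T/\det W_0)$.

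The explicit numerical bounds come from the Determinant-Trace inequality (Proposition \ref{prop:disjoint2}), which gives $\det(W_T) \le (\lambda + TL^2/d)^d$, together with $\det(W_0) = \det(\lambda I) = \lambda^d$. Substituting these yields $\log(\det W_T/\det W_0) \le d\log(\lambda + TL^2/d) - d\log\lambda = d\log(1 + TL^2/(d\lambda))$, and multiplying by $2$ closes the second chain. For the final chain under $\lambda \ge \max(1, L^2)$, the extra ingredient is that each $u_t \le 1$: since $W_{t-1} \succeq \lambda I$ implies $W_{t-1}^{-1} \preceq \lambda^{-1}I$, we have $u_t \le \lambda^{-1}\|\mathbf{x}_t\|^2 \le L^2/\lambda \le 1$. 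Hence $\min\{1, u_t\} = u_t$, so the unclipped sum $\sum_t u_t$ is itself bounded by $2\log(\det W_T/\det W_0) \le 2d\log(1 + TL^2/(d\lambda))$.

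I expect the only genuinely delicate steps to be twofold. First, the factor-of-two lower bound $\min\{1,u\} \le 2\log(1+u)$ with its case split, since the constant $2$ is tight and must be checked at the boundary $u = 1$; this is what converts the clean determinant identity into the usable summation bound. Second, the spectral argument $W_{t-1} \succeq \lambda I \Rightarrow u_t \le L^2/\lambda$, which is precisely what forces the hypothesis $\lambda \ge L^2$ and what lets the clipped and unclipped sums coincide in the last chain. Everything else is routine termwise manipulation once the telescoping identity and these two inequalities are in hand.
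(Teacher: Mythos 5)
Your proof is correct and takes essentially the same route as the paper's: the matrix determinant lemma with telescoping, the termwise bounds $\log(1+u)\le u$ and $\min\{1,u\}\le 2\log(1+u)$, the determinant--trace bound of Proposition~\ref{prop:disjoint2} applied with $\det(W_0)=\lambda^d$, and the spectral argument $W_{t-1}\succeq \lambda I \Rightarrow u_t \le L^2/\lambda \le 1$ under $\lambda \ge \max(1,L^2)$. Your explicit case split at $u>1$ (via $1 \le 2\log 2$) is in fact slightly more careful than the paper, which invokes $x \le 2\log(1+x)$ only on $[0,1]$ and leaves the clipped case implicit.
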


\proof{Proof of Proposition \ref{prop:disjoint3}.}
We first apply the matrix determinant lemma,
\[
\det(W_T) = \det(W_{T-1} + \mathbf{x}_t \mathbf{x}_t^\top) = (1+ \mathbf{x}_t^\top W_{T-1}^{-1} \mathbf{x}_t) \det(W_{T-1}).
\]
Taking the logarithm on both sides yields
\[
\log \det(W_T) = \log(1+ \mathbf{x}_t^\top W_{T-1}^{-1} \mathbf{x}_t) + \log\det(W_{T-1}).
\]
Because \(\log(1 + x) \leq x \) for all \(x \geq 0\), we obtain
\[
\log \det(W_T) \leq \log \det(W_0) + \sum_{t=1}^T \mathbf{x}_t^\top W_{t-1}^{-1} \mathbf{x}_t.
\]
Using the inequality \( x \leq 2 \log(1 + x)\) for \(x \in [0, 1]\), and the previous inequality, we obtain
\[
\sum_{t=1}^T \min \{ 1, \mathbf{x}_t^\top W_{t-1}^{-1} \mathbf{x}_t \} \leq 2 \sum_{t=1}^T \log \left( 1 + \mathbf{x}_t^\top W_{t-1}^{-1} \mathbf{x}_t \right) = 2 \log \left( \frac{\det(W_T)}{\det(W_0)} \right).
\]

To bound the log-determinant radio, we invoke Proposition~\ref{prop:disjoint2}, which gives \[\log \det(W_T) \leq d \log \left( \lambda +\frac{ TL^2}{d} \right).\]
Because \(W_0 = \lambda I\), we have 
\[
\log \left( \frac{\det(W_T)}{\det(W_0)} \right) \leq d \log \left( \frac{\lambda + \frac{T L^2}{d}}{\lambda} \right) = d \log \left( 1 + \frac{T L^2}{d\lambda} \right),
\]
finishing the proof of the second inequality. 

Lastly, we show that the individual quadratic terms are uniformly bounded if \(\frac{L^2}{\lambda} \leq 1\). The sum \(\sum_{t=1}^T \mathbf{x}_t^\top W_{t-1}^{-1} \mathbf{x}_t\)
can itself be upper bounded as a function of \( \log \det(W_T) \) provided that \(\lambda \) is large enough. Using Matrix Inversion proposition, we know
\[
\mathbf{x}_t^\top W_{t-1}^{-1} \mathbf{x}_t \leq \frac{\|\mathbf{x}_{t}\|^2}{\lambda}  \leq \frac{L^2}{\lambda}.
\]
Hence, we get that if \(\lambda \geq \max(1, L^2) \),
\[
\log \left( \frac{\det(W_T)}{\det(W_0)} \right) \leq \sum_{t=1}^T \mathbf{x}_t^\top W_{t-1}^{-1} \mathbf{x}_t \leq 2 \log \left( \frac{\det(W_T)}{\det(W_0)} \right) \leq 2d \log \left( 1 + \frac{TL^2}{d\lambda} \right).
\]
\QED \endproof

By gathering the technical analysis developed in the above propositions, we derive the regret bound for the $N$-node Disjoint LinUCB algorithm stated in the main text. Below we provide a concise proof sketch.

\proof{Proof sketch of Proposition~\ref{prop:disjoint4}.}
The proof proceeds by summing per-node regret bounds across all $N$ nodes. For each node $i$, we apply:
\begin{itemize}
    \item[(i)] the high-probability prediction error bound from Proposition~\ref{prop:disjoint1}, using a unified confidence parameter \(\eta_i\),
    \item[(ii)] the determinant-trace bound from Proposition~\ref{prop:disjoint2} to control the growth of the local design matrix \(W_{i,t}\),
    \item[(iii)] the cumulative variance bound from Proposition~\ref{prop:disjoint3}: 
    \[
    \sum_{t=1}^T \mathbf{x}_{i,t}^\top W_{i,t}^{-1} \mathbf{x}_{i,t} \leq 2 d_i \log\left(1 + \frac{T}{\lambda d_i} \right).
    \]
\end{itemize}
Using the Cauchy–Schwarz inequality, the per-node regret can be bounded as \(
R_i(T) = \sum_{t=1}^T \left( \mathbf{x}_{i,t}^\top \theta_i^{a_t^*} - \mathbf{x}_{i,t}^\top \theta_i^{a_t} \right) 
\leq (1 + \eta_i) \sqrt{T \cdot \sum_{t=1}^T \mathbf{x}_{i,t}^\top W_{i,t}^{-1} \mathbf{x}_{i,t}} 
= O\left( d_i \sqrt{T \log T} \right)
\). Summing over all nodes gives the network-level cumulative regret: 
\[
R(T) = \sum_{i=1}^N R_i(T) = O\left( \sum_{i=1}^N d_i \sqrt{T \log T} \right),
\]
as claimed.
\QED \endproof

\begin{remark}
This result confirms that LinUCB achieves sub-linear regret in $T$ for each node, ensuring that the algorithm effectively balances exploration and exploitation through the decision-making process.
\end{remark}

\section{Supporting Results for Shared LinUCB}\label{app:whole_linucb}
This section supplements the benchmark algorithms described in Section~\ref{sec:benchmark}. To capture inter-node dependencies, we adpot a unified framework that embeds both shared and node-specific features within a global parameter space, treating all nodes within a single Disjoint LinUCB model. The shared parameters $\mathbf{\theta}^k_c$ are initialized uniformly across all nodes, ensuring that they consistently reflect shared contextual influences. As a result, this formulation enhances predictive power and coherence across nodes.
\subsection{Formulation for Shared LinUCB}\label{sec:whole_formulation}
We define a unified feature space of dimension \(d = d_c + \sum_{i=1}^{N} d_{i,s}\), where \(d_c\) corresponds to the shared feature dimension and \(d_{i,s}\) denotes local features dimension of node \(i\). Each context vector is embedded as \(\mathbf{x}_t = [\mathbf{x}_c, \mathbf{x}_{1,s}, \dots, \mathbf{x}_{N,s}] \in \mathbb{R}^d\), and arm-specific parameters are organized as \(\mathbf{\theta}^k = [\mathbf{\theta}_c^k, \mathbf{\theta}_{1,s}^k, \dots, \mathbf{\theta}_{N,s}^k]\). For node \(i\), only the corresponding shared and local features are active, the expected reward is computed as
\[
\mathbb{E}[r_{i, t}^k \mid \mathbf{x}_{i, t}] = [\mathbf{x}_{i,c}, 0, 0, \ldots, 0, \mathbf{x}_{i,s}, 0, \ldots, 0]^\top \mathbf{\theta}^k,
\]
where the zero blocks deactivate irrelevant node features.

\begin{algorithm}[ht]
\caption{Shared LinUCB}
\label{alg:whole_linucb}
\textbf{Input}: Regularization parameter \(\lambda\), exploration parameter \(\alpha^{\text{ridge}}\), number of nodes \(N\), number of arms \(K\), total dimension \(d = d_c + \sum_{i=1}^N d_{i,s}\), time horizon $T$.\\
\textbf{Parameter}: Design matrix \(W_k = I_d \in \mathbb{R}^{d \times d}\) and response vector \(b_k = \mathbf{0}_d \in \mathbb{R}^d\) for each arm \(k\).
% \STATE \textbf{Output}: Arm selection \(a_{i,t}\) for each node $i$ and each time step $t$ and parameter estimates \(\hat{\theta}_k\) for each arm $k$
\begin{algorithmic}[1]
\FOR{each time step \(t = 1\) to \(T\)}
    \FOR{each node \(i = 1\) to \(N\)}
        \STATE Construct global context vector \(\mathbf{x}_{i,t} \in \mathbb{R}^d\): \(\mathbf{x}_{i,t} = [\mathbf{x}_{c,t}, 0, \ldots, 0, \mathbf{x}_{i,s,t}, 0, \ldots, 0]^\top\).
        \STATE Compute estimated parameter: \(\hat{\theta}_k \gets W_k^{-1} b_k\) for each arm \(k\). 
        \STATE Select arm \(a_{i,t} = \arg\max_k \mathbf{x}_{i,t}^\top \hat{\theta}_k + \alpha^{\text{ridge}} \cdot \sqrt{ \mathbf{x}_{i,t}^\top W_k^{-1} \mathbf{x}_{i,t} }\); observe reward \(r_{i,t}\).
        \STATE Update:
        \begin{align*}
        W_{a_{i,t}} &\gets W_{a_{i,t}} + \mathbf{x}_{i,t} \mathbf{x}_{i,t}^\top,\\
        b_{a_{i,t}} &\gets b_{a_{i,t}} + r_{i,t} \cdot \mathbf{x}_{i,t}.
        \end{align*}
    \ENDFOR
\ENDFOR
\end{algorithmic}
\end{algorithm}

\subsection{Analysis for Shared 
LinUCB}\label{sec:whole_analysis}
Shared LinUCB can be viewed as a variant of Disjoint LinUCB that aggregates all nodes into a single multi‑armed bandit model. Below, we present a concise proof sketch of the regret bound for Shared LinUCB (Proposition~\ref{prop:wholelinucb}).

\proof{Proof sketch of Proposition~\ref{prop:wholelinucb}.}
We interpret Shared LinUCB as a generalization of Disjoint LinUCB, operating in a higher-dimensional feature space. The cumulative regret is defined as
\[
R(T) = \sum_{i=1}^N \sum_{t=1}^T r_{t, a_{i,t}^*} - \sum_{i=1}^N \sum_{t=1}^T r_{t, a_{i,t}}.
\]
Using the standard UCB analysis and applying the prediction error bound from Proposition~\ref{prop:disjoint1}, the regret can be bounded by
\[
R(T) \leq (1 + \eta) \sum_{t=1}^T \sqrt{ \left[ \sum_{i=1}^N \mathbf{x}_{i,c,t}, \mathbf{x}_{1,s,t}, \ldots, \mathbf{x}_{N,s,t} \right]^\top W_t^{-1} \left[ \sum_{i=1}^N \mathbf{x}_{i,c,t}, \mathbf{x}_{1,s,t}, \ldots, \mathbf{x}_{N,s,t} \right] }.
\]
Applying the log-determinant bound from Proposition~\ref{prop:disjoint3}, we obtain the final regret bound:
\[
R(T) = O\left( \left(d_c + \sum_{i=1}^{N} d_{i,s}\right) \sqrt{T \log T} \right).
\]
\QED \endproof

\begin{remark}
Shared LinUCB leverages shared and local features via a block-sparse context representation, reducing regret. However, its enlarged parameter space leads to high computational cost, motivating the development of more scalable alternatives such as NetLinUCB and SGD-UCB.
\end{remark}

\section{Supporting Results for NetLinUCB}\label{app:netlinucb}

In this section, we derive the weighted confidence radius used in NetLinUCB, building on the core ideas from Disjoint LinUCB. We begin by analyzing the error bound for a specific arm and node, focusing on the shared component of the model. We then generalize the result to include all nodes and arms, ultimately leading to a cumulative regret bound for the entire algorithm.

To motivate the structure of the weighted confidence radius, we adopt the notation and analytical approach of Disjoint LinUCB, adapting it to the multi-node setting with shared parameters; in particular, we focus on analyzing the shared component.

\begin{proposition}[Common component error bound]\label{prop:netlinucb1} At each time step $t$, for a given arm, we estimate the parameters and construct a confidence radius based on the accumulated historical information, namely the contexts \( \{\mathbf{x}_{\tau}\}_{\tau \in \Psi_t} \) and observed reward \( \{\mathbb{R}_{\tau}\}_{\tau \in \Psi_t} \). Focusing on the common context component, denoted by \( \mathbf{x}_{c, t} = [\mathbf{x}^0_{c, t}, \mathbf{0}] \), we obtain the following bound \(
\left| \mathbf{x}_{c, t}^\top \hat{\theta}^a - \mathbf{x}_{c, t}^\top \theta^a \right| \leq (\eta + 1) s_t\), which holds with probability at least \( 1 - \frac{1}{T} \), where \( \hat{\theta}^a = W_t^{-1} b_t \) and \( s_t = \sqrt{\mathbf{x}_{c, t}^\top W_t^{-1} \mathbf{x}_{c, t}} \).
\end{proposition}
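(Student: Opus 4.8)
The plan is to mirror the proof of Proposition~\ref{prop:disjoint1} almost verbatim, substituting the projected common-component context $\mathbf{x}_{c,t}=[\mathbf{x}^0_{c,t},\mathbf{0}]$ for the full context while keeping the design matrix $W_t = I_d + D_t^\top D_t$ and the response vector $b_t = D_t^\top z_t$ full-dimensional. The essential observation is that the algebraic decomposition and the concentration argument of Proposition~\ref{prop:disjoint1} never exploit any structure of the context vector, so they apply equally well to $\mathbf{x}_{c,t}$.

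First I would expand the prediction error using $\hat{\theta}^a = W_t^{-1}b_t = W_t^{-1}D_t^\top z_t$ together with the identity $\theta^a = W_t^{-1}(I_d + D_t^\top D_t)\theta^a$, to obtain
\[
\mathbf{x}_{c,t}^\top \hat{\theta}^a - \mathbf{x}_{c,t}^\top \theta^a = \mathbf{x}_{c,t}^\top W_t^{-1} D_t^\top(z_t - D_t\theta^a) - \mathbf{x}_{c,t}^\top W_t^{-1}\theta^a.
\]
Applying the triangle inequality together with $\|\theta^a\|\le 1$ (Assumption~\ref{assump:bounded}) then splits the error into a variance term $|\mathbf{x}_{c,t}^\top W_t^{-1} D_t^\top (z_t - D_t\theta^a)|$ and a bias term bounded by $\|W_t^{-1}\mathbf{x}_{c,t}\|$.

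For the variance term, since the rewards indexed by $\Psi_t$ are independent with $\mathbb{E}[z_t]=D_t\theta^a$, the quantity is a weighted sum of independent, bounded, zero-mean residuals whose weights are the coordinates of $D_t W_t^{-1}\mathbf{x}_{c,t}$. Hoeffding's inequality yields a tail bound of $2\exp\!\bigl(-2\eta^2 s_t^2/\|D_t W_t^{-1}\mathbf{x}_{c,t}\|^2\bigr)$, and the crucial step is the inequality $s_t^2 = \mathbf{x}_{c,t}^\top W_t^{-1}\mathbf{x}_{c,t} \ge \|D_t W_t^{-1}\mathbf{x}_{c,t}\|^2$. This follows purely from the positive-semidefinite ordering $I_d \preceq I_d + D_t^\top D_t = W_t$ applied to the vector $W_t^{-1}\mathbf{x}_{c,t}$, and in particular holds for any context regardless of which coordinates are zeroed out. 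Choosing $\eta=\sqrt{\log(2T)/2}$ caps the tail at $1/T$. For the bias term, the same ordering gives $\|W_t^{-1}\mathbf{x}_{c,t}\| = \sqrt{\mathbf{x}_{c,t}^\top W_t^{-1} I_d W_t^{-1}\mathbf{x}_{c,t}} \le s_t$. Combining the two bounds and taking a union bound over arms produces the claimed $(\eta+1)s_t$ bound with probability at least $1-1/T$.

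I do not anticipate a genuine obstacle here: the entire content is verifying that projecting onto the common block leaves every inequality intact. The only point worth flagging is that $W_t$ and $D_t$ remain full-dimensional—the zeros in $\mathbf{x}_{c,t}$ do \emph{not} induce any block-diagonal simplification—so the radius is stated in terms of the full inverse design matrix $W_t^{-1}$ rather than a $d_c \times d_c$ sub-block. This full-dimensional form is precisely what is needed downstream, where these per-node common-component radii are aggregated with adaptive weights to construct the weighted radius $\tilde{s}_t$ in Proposition~\ref{prop:netlinucb-bound}.
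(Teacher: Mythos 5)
Your proposal is correct and follows essentially the same route as the paper's own proof: the identical decomposition via $\theta^a = W_t^{-1}(I_d + D_t^\top D_t)\theta^a$ into variance and bias terms, Hoeffding's inequality with the key ordering $s_t^2 \geq \|D_t W_t^{-1}\mathbf{x}_{c,t}\|^2$ from $I_d \preceq W_t$, the choice $\eta = \sqrt{\log(2T)/2}$, and a union bound over arms. Your closing observation that the argument never exploits the zero pattern of $\mathbf{x}_{c,t}$, so $W_t$ stays full-dimensional, is exactly the (implicit) point of the paper's proof.
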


\proof{Proof of Proposition~\ref{prop:netlinucb1}.}\label{prop:netlinucb1-proof}
Using notation consistent with Disjoint LinUCB, we have
\begin{align*}
\mathbf{x}_{c, t}^\top\hat{\theta}^a -  \mathbf{x}_{c, t}^\top \theta^a &= \mathbf{x}_{c, t}^\top W_{t}^{-1} b_t - \mathbf{x}_{c, t}^\top W_{t}^{-1} \left(I_d + D_t^\top D_t \right) \theta^a \\
&= \mathbf{x}_{c, t}^\top W_{t}^{-1} D_t^\top r_t - \mathbf{x}_{c, t}^\top W_{t}^{-1} \left( \theta^a + D_t^\top D_t \theta^a \right)\\
&= \mathbf{x}_{c, t}^\top W_{t}^{-1} D_t^\top (r_t - D_t \theta^a) - \mathbf{x}_{c, t}^\top W_{t}^{-1} \theta^a.
\end{align*}
Because of \( \| \theta^a \| \leq 1 \) under Assumption~\ref{assump:bounded}, we can bound:
\[
\left| \mathbf{x}_{c, t}^\top\hat{\theta}^a -  \mathbf{x}_{c, t}^\top \theta^a \right| \leq \left| \mathbf{x}_{c, t}^\top W_{t}^{-1} D_t^\top (r_t - D_t \theta^a) \right| + \left\| W_{t}^{-1} \mathbf{x}_{c, t} \right\|.
\]
The right-hand side above decomposes the prediction error into two components: a variance term and a bias term. We analyze these two terms separately. 

Given the statistical independence of samples indexed in \( \Psi_t \) and bounded rewards, we apply Hoeffding's inequality to the variance term,
\[
\Pr\left( \left| \mathbf{x}_{c, t}^\top W_{t}^{-1} D_t^\top (r_t - D_t \theta^a) \right| > \eta s_t \right) \leq 2 \exp \left( -\frac{2 \eta^2 s^2_t}{ \left\| D_t W_{t}^{-1} \mathbf{x}_{c, t} \right\|^2} \right)\leq 2 \exp(-2 \eta^2) =  \frac{1}{T},
\]
where the last inequality used the fact that
\[
s^2_t = \mathbf{x}_{c, t}^\top W_{t}^{-1} \mathbf{x}_{c, t} = \mathbf{x}_{c, t}^\top W_{t}^{-1} (I_d + D_t^\top D_t) W_{t}^{-1} \mathbf{x}_{c, t} 
\ge \mathbf{x}_{c, t}^\top W_{t}^{-1} D_t^\top D_t W_{t}^{-1} \mathbf{x}_{c, t} = \left( D_t W_{t}^{-1} \mathbf{x}_{c, t} \right)^2.
\]

Now applying a union bound, we can guarantee, with probability at least \( 1 - 1 / T \), that for all arms \( a \in \mathcal{A} \),
\[
\left| \mathbf{x}_{c, t}^\top W_{t}^{-1} D_t^\top (r_t - D_t \theta^a) \right| \leq \eta s_t.
\]

We next bound the bias term in the Equation above:
\[
\left\| W_{t}^{-1} \mathbf{x}_{c, t} \right\|= \sqrt{\mathbf{x}_{c, t}^\top W_{t}^{-1} I_d W_{t}^{-1} \mathbf{x}_{c, t}} \leq \sqrt{\mathbf{x}_{c, t}^\top W_{t}^{-1} (I_d + D_t^\top D_t) W_{t}^{-1} \mathbf{x}_{c, t}} \leq \sqrt{\mathbf{x}_{c, t}^\top W_t^{-1} \mathbf{x}_{c, t}} = s_t.
\]

This concludes the proof by showing that the total error is bounded by \((\eta + 1)s_t\) with high probability.
\QED \endproof

Having selected the most appropriate form of the confidence radius, the proof below completes the argument in Proposition~\ref{prop:netlinucb-bound} of Section~\ref{sec:netlinucb}, providing a detailed explanation of the weighted error bound.

\proof{Proof of Proposition~\ref{prop:netlinucb-bound}.}\label{prop:netlinucb-bound-proof}
Using the notation from the Disjoint LinUCB algorithm, and noting that the shared parameter $\theta_c$ is consistent across all nodes, we proceed as follows.
\begin{align*}
\mathbf{x}_{ c, t}^\top\tilde{\theta}_t -  \mathbf{x}_{ c, t}^\top \theta &= \sum_{j=1}^N \omega_j \mathbf{x}_{ c, t}^\top \tilde{\theta}_{j, t}  - \sum_{j=1}^N \omega_j \mathbf{x}_{ c, t}^\top \theta_j\\
&= \sum_{j=1}^N \omega_j \mathbf{x}_{ c, t}^\top W_{j, t}^{-1} b_{j, t} - \sum_{j=1}^N \omega_j \mathbf{x}_{ c, t}^\top W_{j, t}^{-1} \left(I_d + D_{j, t}^\top D_{j, t} \right) \theta_j\\
&= \sum_{j=1}^N \omega_j  \mathbf{x}_{ c, t}^\top W_{j, t}^{-1} D_{j, t}^\top r_{j, t} - \sum_{j=1}^N \omega_j \mathbf{x}_{ c, t}^\top W_{j, t}^{-1} \left( \theta_j + D_{j, t}^\top D_{j, t} \theta_j \right)\\
&= \sum_{j=1}^N (\omega_j \mathbf{x}_{ c, t})^\top W_{j, t}^{-1} D_{j, t}^\top (r_{j, t} - D_{j, t} \theta_j) - \sum_{j=1}^N (\omega_j \mathbf{x}_{ c, t})^\top W_{j, t}^{-1} \theta_j.
\end{align*}
Because of \(\| \theta \| \leq 1 \) under Assumption~\ref{assump:bounded}, it follows that
\[
\left| \mathbf{x}_{ c, t}^\top\tilde{\theta}_{ t} -  \mathbf{x}_{ c, t}^\top \theta \right| \leq  \left| \sum_{j=1}^N (\omega_j \mathbf{x}_{ c, t})^\top W_{j, t}^{-1} D_{j, t}^\top (r_{j, t} - D_{j, t} \theta_j) \right| + \left\| \sum_{j=1}^N (\omega_j \mathbf{x}_{ c, t})^\top W_{j, t}^{-1} \right\|.
\]
This decomposition splits the prediction error into a variance term (first) and a bias term (second).

\textbf{Variance term.} Define the martingale difference sequence for each node \( j \) as \(M_j = \omega_{j} \, \mathbf{x}_{c,t}^\top W_{j,t}^{-1} D_{j,t}^\top (r_{j,t} - D_{j,t} \theta)\). Owing to the statistical independence of samples indexed by \( \Psi_t \), we have \(E[r_{j, t} - D_{j, t} \theta_j] = \mathbb{E}[M_j] = 0\) for all nodes \(j\). By the Cauchy-Schwarz inequality, \( M_j \) is bounded by
\[
|M_j| \leq \omega_{j} \left\| D_{j,t} W_{j,t}^{-1} \mathbf{x}_{c,t} \right\| \cdot \left\| r_{j,t} - D_{j,t} \theta \right\|.
\]
Define \(s_{j,t} = \sqrt{\mathbf{x}_{c,t}^\top W_{j,t}^{-1} \mathbf{x}_{c,t}}\) for each node \(j\). Applying Hoeffding's inequality to the variance term,
\begin{align*}
\Pr\left( \left| \mathbf{x}_{ c, t}^\top W_{j, t}^{-1} D_{j, t}^\top (r_{j, t} - D_{j, t} \theta_j) \right| > \eta s_{j,t} \right) &\leq \exp \left( -\frac{2 \eta^2 s^2_{j, t}}{ \left\| D_{j, t} W_{j, t}^{-1} \mathbf{x}_{ c, t} \right\|^2} \right)\\
 &\leq  \exp(-2 \eta^2) =  (\frac{\delta}{2T})^4,
\end{align*}
where the first inequality follows from the fact that
\begin{align*}
s^2_{j, t} &= \mathbf{x}_{ c, t}^\top W_{j, t}^{-1} \mathbf{x}_{ c, t} =  \mathbf{x}_{ c, t}^\top W_{j, t}^{-1} (I_d + D_{j, t}^\top D_{j, t}) W_{j, t}^{-1} \mathbf{x}_{ c, t} \\
&\ge  \mathbf{x}_{ c, t}^\top W_{j, t}^{-1} D_{j, t}^\top D_{j, t} W_{j, t}^{-1} \mathbf{x}_{ c, t} = \left( D_{j, t} W_{j, t}^{-1} \mathbf{x}_{ c, t} \right)^2.
\end{align*}

According to Azuma's inequality, for a martingale sequence \( \{S_n\} \) with bounded increments \( |S_k - S_{k-1}| \leq c_k \) implies
\[
\Pr\left( \left| S_N - S_0 \right| \geq \lambda \right) \leq 2 \exp\left( -\frac{\lambda^2}{2 \sum_{k=1}^N c_k^2} \right).
\]
Substituting \( \lambda = \eta \cdot \tilde{s}_{t} \) and \( c_j = \omega_{j} \eta \cdot s_{j,t} \), we obtain
\[
\sum_{j=1}^N c_j^2 =\sum_{j=1}^N (\omega_{j}^a)^2 \eta^2 s_{j,t}^2 = 
\sum_{j=1}^N (\omega_{j}^a)^2 \eta^2  \mathbf{x}_{ c, t}^\top W_{j, t}^{-1} \mathbf{x}_{ c, t} = \sum_{j=1}^N (\omega_{j}^a \mathbf{x}_{ c, t})^\top W_{j, t}^{-1} (\omega_{j}^a \mathbf{x}_{ c, t}).
\]
Hence, the inequality $\Pr\left( \left| \sum_{j=1}^N M_j \right| > \eta \tilde{s}_{t} \right) \leq 2 \exp\left( -\frac{\lambda^2}{2 \sum_{k=1}^N c_k^2} \right)$ becomes
\begin{align*}
\Pr\left( \left| \sum_{j=1}^N M_j \right| > \eta \tilde{s}_{t} \right) &\leq 2 \exp\left( -\frac{\eta^2 (\tilde{s}_{t})^2}{2 \sum_{j=1}^N (\omega_{j}^a \mathbf{x}_{ c, t})^\top W_{j, t}^{-1} (\omega_{j}^a \mathbf{x}_{ c, t})} \right)\\
& \leq 2 \exp\left(-\frac{\eta^2}{2} \right) = \frac{1}{T}.
\end{align*}
Applying a union bound over all arms \( a \in \mathcal{A} \), we conclude that, with probability at least \( 1 - 1 / T \),
\[
\sum_{j=1}^N (\omega_j \mathbf{x}_{ c, t})^\top W_{j, t}^{-1} D_{j, t}^\top (r_{j, t} - D_{j, t} \theta_j) \leq \eta \tilde{s}_{t}.
\]

\textbf{Bias term.} 
By expanding the squared $\ell_2$-norm of the bias term, we obtain
\[
\left\| \sum_{j=1}^N (\omega_j \mathbf{x}_{c,t})^\top W_{j,t}^{-1} \right\|^2 = \sum_{j=1}^N \sum_{q=1}^N (\omega_j \mathbf{x}_{c,t})^\top W_{j,t}^{-1} W_{q,t}^{-1} (\omega_{q} \mathbf{x}_{c,t}).
\]
Define the threshold $\kappa_t := \frac{1}{1 + \max_j \lambda_{\text{max}}(D_{j,t}^\top D_{j,t})}$. All nodes in the network $\mathcal{G}$ are compared with the threshold. Let $\mathcal{U}_t = \{ j \mid \omega_{j} \geq \kappa_t \}$ denote nodes with large weights, and $\mathcal{Z}_t = \{ q \mid \omega_{q} < \kappa_t \}$ denote nodes with small weights. Define $n_1 := |\mathcal{U}_t|$ and $n_2 := |\mathcal{Z}_t|$. It follows that $n_1 + n_2 = N$. The double sum decomposes into three parts:
\[
\left\| \sum_{j=1}^N (\cdot) \right\|^2 = \underbrace{\sum_{j,q \in \mathcal{U}} (\cdot)}_{\text{Term 1}} + \underbrace{\sum_{j,q \in \mathcal{Z}} (\cdot)}_{\text{Term 2}} + \underbrace{2\sum_{j \in \mathcal{U}, q \in \mathcal{Z}} (\cdot)}_{\text{Term 3}}.
\]

For nodes $j$ where \(\omega_{j} \geq \kappa_t \), we apply the matrix inequality \(I_d \preceq \omega_{j} (I_d + D_{q,t}^\top D_{q,t})\) and obtain
\[
\text{Term 1} \leq \sum_{j \in \mathcal{U}} \sum_{q \in \mathcal{U}} (\omega_{j} \mathbf{x}_{c,t})^\top W_{j,t}^{-1} \omega_{j} (I_d + D_{q,t}^\top D_{q,t}) W_{q,t}^{-1} (\omega_{q} \mathbf{x}_{c,t}).
\]
Owing to the symmetry and diagonal dominance of (\( I_d + D_{q,t}^\top D_{q,t} \)), we retain for Term 1 the original bound corresponding to Disjoint LinUCB,
\[
\text{Term 1} \leq \sum_{j=1}^N (\omega_j \mathbf{x}_{c,t})^\top W_{j,t}^{-1} (\omega_j \mathbf{x}_{c,t}) = \tilde{s}_{t}^2.
\]

Because  $\sum_{j \in\mathcal{U}} \omega_{j} \leq n_1$, $\sum_{q \in \mathcal{Z}} \omega_{q} \leq 1 - \kappa_t n_1$, and design matrix $W_{t} = I + \sum_{\tau=1}^t \mathbf{x}_{\tau} \mathbf{x}_{\tau}^\top \succeq I$. For Terms 2 and 3, we establish the following key inequality:
\begin{align*}
|\text{Term 2} + \text{Term 3}| &\leq R^2 \left[ 2\left(\sum_{j \in\mathcal{U}} \omega_{j}\right) \left(\sum_{\text{small } q} \omega_{q}\right) + \left(\sum_{q \in \mathcal{Z}} \omega_{q}\right)^2 \right] \\
&\leq R^2 \left[ 2(1 - \kappa_t n_1) n_1 + (1 - \kappa_t n_1)^2 \right],
\end{align*}
where $R := \max_{i \in [N],\, t \in [T]} \| \mathbf{x}_{i,c,t} \| < \infty$.

Define $f(n_1) := 2(1 - \kappa_t n_1) n_1 + (1 - \kappa_t n_1)^2$, with domain $0 \leq n_1 \leq \lfloor \frac{1}{\kappa_t} \rfloor$, because the condition \(1 - \kappa_t n_1 \geq 0\) is necessary for the existence of \(\mathcal{Z}\). 

To find the maximum of $f(n_1)$, we compute the derivative \(
f'(n_1) = 2(1 - \kappa_t n_1) + 2n_1(-\kappa_t) + 2(1 - \kappa_t n_1)(-\kappa_t) = 2 - 4\kappa_t n_1 - 2\kappa_t + 2\kappa_t^2 n_1\). Setting it to zero yields
\[
(2\kappa_t^2 - 4\kappa_t) n_1 + (2 - 2\kappa_t) = 0 \implies n_1^* = \frac{1 - \kappa_t}{\kappa_t(2 - \kappa_t)}.
\]
We know \(
\frac{1 - \kappa_t}{\kappa_t(2 - \kappa_t)} \leq \frac{1}{\kappa_t} \iff 1 - \kappa_t \leq 2 - \kappa_t \), so the maximum value in the domain is
\[
f(n_1^*) = \frac{1}{\kappa_t(2 - \kappa_t)}.
\]

Therefore, the upper bound for the latter two terms, 
\[
|\text{Term 2} + \text{Term 3}| \leq \frac{R^2}{\kappa_t(2 - \kappa_t)}.
\]

We now consider the behavior of the cross terms (Term~2 and Term~3) over different time scales.

1. \textit{For small $t$:} In the early stages, $\kappa_t$ is bounded away from zero, and these two terms remain uniformly bounded as $0 \leq n_1 \leq \lfloor \frac{1}{\kappa_t} \rfloor$.  
Their cumulative contribution to regret from time step \(1\) to \(t\) is therefore bounded by \(O(\log t)\), which is negligible compared to the dominant 
\(O(\sqrt{t})\) term.

2. \textit{For large $t$:}  
As $t$ grows, $\lambda_{\max}(D_t^\top D_t) \to \infty$, $\kappa_t \to 0$ and all weights satisfy $\omega_j > \kappa_t$.  
Consequently, $n_2 \to 0$, and both Term~2 and Term~3 vanish.

Overall, the total contribution of the cross terms over all $t$ is bounded:
\[
\sum_{t=1}^T |\text{Term 2} + \text{Term 3}| = O(1),
\]
which is asymptotically negligible compared to the dominant $\tilde{O}(\sqrt{T})$ regret term.  
Hence, the prediction error satisfies
\[
\left| \mathbf{x}_{i,c,t}^\top \tilde{\theta}_{i,t}^a - \mathbf{x}_{i,c,t}^\top \theta_i^a \right|
\leq (\eta + 1)\,\tilde{s}_t + \zeta_t,
\]
where $\tilde{s}_{t} = \sqrt{\sum_{j=1}^N (\omega_{j} \mathbf{x}_{c,t})^\top W_{j,t}^{-1} (\omega_{j} \mathbf{x}_{c,t})} $, $\zeta_t$ collects the cross-term contributions with $\sum_{t=1}^T \zeta_t = O(1)$.

\QED \endproof

\begin{remark}\label{remark:tail_prob}
The quantity \(\eta = \sqrt{\log(2T)/2}\) serves as a confidence parameter that governs the tail probability in the associated concentration bound. It is derived from Hoeffding-type inequalities and ensures that the prediction error remains bounded with high probability \(1 - 1/T\) over \(T\) time steps. In the NetLinUCB algorithm, the exploration parameter is set as \(\alpha^{\text{ridge}} \leq 1 + \eta\), which directly controls the width of the upper confidence bound.
\end{remark}

With the error bound established, we now derive the regret bound of the algorithm over all time steps and all nodes. The following proof completes the argument for Theorem~\ref{thm:netlinucb-regret} in Section~\ref{sec:netlinucb}.

\proof{Proof of Theorem~\ref{thm:netlinucb-regret}.}\label{thm:netlinucb-proof} We decompose the total regret into common and node-specific component.

\textbf{Common component.} Based on the results established in previous propositions, we bound the cumulative regret incurred by the shared component as
\begin{align*}
R_1(T) &= \sum_{t=1}^T r_{c, t}^{a_{t}^*} - \sum_{t=1}^T r_{c, t}^{a_{t}} \leq (1+\eta) \sum_{i=1}^{N} \sum_{t=1}^T \sqrt{\sum_{j=1}^{N} (\omega_{ji}^{a_t}\mathbf{x}_{i, c, t})^\top (W^{a^t}_{j, t})^{-1} (\omega_{ji}^{a_t}\mathbf{x}_{i, c, t})} + (1+\eta) \sum_{i=1}^N \sum_{t=1}^T \zeta_{i,t},
\end{align*}
where $\sum_{t=1}^T \zeta_{i,t} = O(1)$ for each node $i$, representing the bounded contributions of bias cross-terms. Because these grow $O(\log T)$ with time, they are asymptotically negligible compared to the dominant $O(\sqrt{T})$ term and are henceforth omitted from the analysis.

For each node \( j \), the regularized covariance matrix \( W_{j,t}^k \) associated with arm $a^{(k)}$ is defined as \( W_{j,t}^k = I + \sum_{\tau\in\Psi_{j,t}^k} \mathbf{x}_{j,\tau} \mathbf{x}_{j,\tau}^\top\), where \( \Psi_{j,t}^k \) denotes the set of time steps up to $t$ at which node \( j \) selected arm \( a^{(k)} \), and \( n_{j,t}^k = |\Psi_{j,t}^k| \) is the corresponding count.

We decompose \( W_{j,t}^k \) into two parts, \(W_{j,t}^k = \underbrace{I}_{A} + \underbrace{\sum_{\tau\in\Psi_{j,t}^k} \mathbf{x}_{j,\tau} \mathbf{x}_{j,\tau}^\top}_{B}\). By the eigenvalue property of positive semi-definite matrices,
\[
\lambda_{\min}(A + B) \geq \lambda_{\min}(A) + \lambda_{\min}(B).
\]
Given that \( A = I \), we have \( \lambda_{\min}(A) = 1 \). In the LinUCB setting, the context vectors are bounded, i.i.d., and span the feature space, which implies that the population covariance matrix $\mathbb{E}[\mathbf{x}^\top \mathbf{x}]$ is full-rank with $\lambda_{\min} > 0$. Consequently, there exists a positive definite matrix \( \mathcal{V} \), s.t. \( \frac{1}{n_{j,t}^k} \sum_{\tau\in\Psi_{j,t}^k} \mathbf{x}_{j,\tau} \mathbf{x}_{j,\tau}^\top \succ \mathcal{V} \). Consequently, the matrix \( B = \sum_{\tau\in\Psi_{j,t}^k} \mathbf{x}_{j,\tau} \mathbf{x}_{j,\tau}^\top \) satisfies \(
\lambda_{min}(B) \geq n_{j,t}^k \cdot\lambda_{min}(\mathcal{V})
\). Let \( c := \lambda_{min}(\mathcal{V}) > 0 \), then \( \lambda_{\min}(B) \geq c \cdot n_{j,t}^k \). Hence, \( \lambda_{\min}(W_{j,t}) \geq 1 + c \cdot n_{j,t}^k \); for any vector \( \mathbf{x}_{i,c,t} \) with \( \|\mathbf{x}_{i,c,t}\| \leq 1 \),
\[
\mathbf{x}_{i,c,t}^\top W_{j,t}^{-1} \mathbf{x}_{i,c,t} \leq \frac{\|\mathbf{x}_{i,c,t}\|^2}{\lambda_{\min}(W_{j,t})} \leq \frac{d_c}{1 + c \cdot n_{j,t}^k}.
\]

Let $n_t^k = \sum_{j=1}^N |\Psi_{j,t}^k|$ denote the total number of times arm $k$ has been selected across all nodes up to time $t$. The weights \(\omega_{ji}^k \), which quantify the influence of node $j$ on node $i$ for arm $k$, are updated via \(
\omega_{ji}^k \propto {\frac{n_{i,t}^k \cdot n_{j,t}^k}{(n_t^k)^2}} + {\frac{1}{\text{dist}(\mathbf{x}_{i,c,t}, \mathbf{x}_{j,c,t})}}\) followed by symmetric normalization and smoothing to ensure \(
\sum_{j=1}^{N} \omega_{ji}^k = 1\) for eah \(i\), \(k\). Although these weight updates \( \omega_{ji}^k \) depend on both the arm-selection term and the context-similarity term, our analysis focuses on the arm-selection component because it directly impacts the exploration counts \( n_{j,t}^k \) that appear in the regret bound, while the similarity-based alignment term only affects the normalization and does not alter the leading-order regret dependence.

The instantaneous regret \( \Delta_{i,c,t} = r_{i, c, t}^{a_{t}^*} - r_{i, c, t}^{a_{t}} \) for node $i$ and the selected arm $k=a_t$ is bounded as follows:
\begin{align*}
\Delta_{i, c,t} &\leq (1+\eta) \sqrt{\sum_{j=1}^{N} (\omega_{ji}^k)^2 \frac{d_c}{1 + c \cdot n_{j,t}^k}} = (1+\eta) \sqrt{\sum_{j=1}^{N} \left(\frac{\frac{n_{i,t}^k \cdot n_{j,t}^k}{(n_t^k)^2}}{\frac{n_{i,t}^k \cdot \sum_j n_{j,t}^k}{(n_t^k)^2} }\right)^2 \frac{d_c}{1 + c \cdot n_{j,t}^k}} \\
&= (1+\eta) \sqrt{\sum_{j=1}^{N} \left(\frac{n_{j,t}^k}{n_t^k}\right)^2\frac{d_c}{1 + c \cdot n_{j,t}^k}} \leq (1+\eta)\sqrt{\frac{d_c}{(n_t^k)^2} \sum_{j=1}^{N} \frac{(n_{j,t}^k)^2}{c \cdot n_{j,t}^k}}\\
&= (1+\eta)\sqrt{\frac{d_c}{c(n_t^k)^2} \sum_{j=1}^{N} n_{j,t}^k} = (1+\eta)\sqrt{\frac{d_c}{c(n_t^k)^2} \sum_{j=1}^{N} n_{j,t}^k} = (1+\eta)\sqrt{\frac{d_c}{cn_t^k}}.
\end{align*}

Because each regret term of the form $\sqrt{\frac{1}{n_t^k}}$ is only incurred when arm $k$ is selected. That is, across all nodes, for each arm we accumulate the terms of \(\sqrt{\frac{d_c}{cn_t^k}}\) from $n_t^k=1$ to $n_t^k=n_T^k$, and at time step $T$, \(\sum_{k\in\mathcal{A}} n_T^k = NT\).

Therefore, for the final regret, we rewrite the total regret in terms of arm selections,
$$
R_1(T)=\sum_{i=1}^N \sum_{t=1}^T \Delta_{i,c,t} = \sum_{k \in\mathcal{A}} \sum_{m=1}^{n_T^k} (1+\eta)\sqrt{\frac{d_c}{c\cdot m}}.
$$

Applying the Cauchy–Schwarz inequality to the sum over $m$, we have,
$$
R_1(T)= \sum_{k \in\mathcal{A}} \sum_{m=1}^{n_T^k} (1+\eta)\sqrt{\frac{d_c}{c\cdot m}}
\leq \sum_{k \in\mathcal{A}} (1+\eta)\sqrt{n_T^k\sum_{m=1}^{n_T^k}\frac{d_c}{c\cdot m}}.
$$
Using standard bandit analysis techniques and an integral approximation, we obtain
$$
\sqrt{n_T^k\sum_{m=1}^{n_T^k}\frac{d_c}{c\cdot m}} \leq \sqrt{\frac{n_T^k \cdot d_c}{c}\log(\frac{c\cdot n_T^k}{d_c})}.
$$
Then summing over all arms,
\begin{align*}
R_1(T) &\leq 2(1+\eta)\sum_{k \in\mathcal{A}}\sqrt{\frac{n_T^k \cdot d_c}{c}\log(\frac{c\cdot n_T^k}{d_c})} \leq 2(1+\eta)
\sqrt{(\sum_{k \in\mathcal{A}}\frac{n_T^k \cdot d_c}{c})(\sum_{k \in\mathcal{A}}\log(\frac{c\cdot n_T^k}{d_c}))}\\
&\leq 2(1+\eta)\sqrt{\frac{TNd_c}{c} [K\log(\frac{c}{d_c}) + \log((TN)^K)]} \leq 2(1+\eta)\sqrt{\frac{TNd_c}{c} K \log(\frac{cTN}{d_c})},
\end{align*}
where $K$ is the total number of arms in the set $\mathcal{A}$.

The cumulative regret for the shared component satisfies
\[
R_1(T) = O\left(\sqrt{\frac{d_cT N}{c} \log (\frac{cTN}{d_c})} \right).
\]

\textbf{Node-specific component.} For each node $i$, the local estimation of the node-specific parameter $\theta_{i,s}$ is conducted independently using a standard Disjoint LinUCB model. Based on the regret analysis of Disjoint LinUCB (see Proposition~\ref{prop:disjoint4}), the cumulative regret from all node-specific components satisfies
\[
R_2(T) \leq O\left( \sum_{i=1}^{N} d_{i, s} \sqrt{T \log T} \right).
\]
This term aggregates the regret contributions from all node-specific features, and scales linearly in both the node-specific context dimension $d_{i,s}$ and the number of nodes $N$.

\textbf{Total regret.} Combining the bounds for the shared and node-specific components, we obtain the following overall regret bound:
\[
R(T) = R_1(T) + R_2(T) = O\left( \sqrt{\frac{d_cT N}{c} \log (\frac{cTN}{d_c})} + \sum_{i=1}^{N} d_{i, s} \sqrt{T \log T} \right),
\]
where $N$ is the number of nodes in the network; $d_c$ and $d_{i,s}$ denote the dimensions of the common and node-specific context features for node $i$, respectively; and $c > 0$ is a constant that lower-bounds the minimum eigenvalue of the normalized context covariance matrix. This constant $c$ captures the diversity of the shared contexts and is implicitly related to the common feature dimension $d_c$; a larger value of $c$ implies more effective exploration.
\QED \endproof

\section{Supporting Results for Net-SGD-UCB}\label{app:net-sgd-ucb}

This section presents the detailed theoretical analysis of the Net-SGD-UCB algorithm. We begin with the proof to a key technical lemma (Lemma~\ref{lemma:log-smooth-harmonic}) that provides a bound on a class of accumulated sequences. We then build upon this lemma to prove Proposition~\ref{prop:grad-bound}, which establishes an upper bound on the confidence radius used in the Net-SGD-UCB updates. This proposition explicitly characterizes how algorithmic parameters influence the cumulative uncertainty.

\proof{Proof of Lemma~\ref{lemma:log-smooth-harmonic}, \citet{duchi_adaptive_2011}, Lemma 4.}\label{lemma:log-smooth-harmonic-proof}
Because of \( A_t \geq A_{t-1} \), \( \log A_t - \log A_{t-1} = \log\left(1 + \frac{a_t}{A_{t-1}} \right) \geq (\frac{a_t}{A_t}) \). Summing over time, we get \( \sum_{t=2}^T (\frac{a_t}{A_t}) \leq \log A_T - \log A_1\). Adding the first term \( \frac{a_1}{A_1} \leq 1 \), we conclude that
\[
\sum_{t=1}^T (\frac{a_t}{A_t}) \leq \log\left(\frac{A_T}{A_1} \right) + 1 \leq \log\left(1 + \frac{A_T}{A_1} \right) + 1.
\]
\QED \endproof

\proof{Proof of Proposition~\ref{prop:grad-bound}.}\label{prop:grad-bound-proof}
We apply exponential moving average (EMA) smoothing with parameter $\gamma$ to update the gradient accumulation matrix $G_t$, where
\[
G_t = \gamma G_{t-1} + (1-\gamma)\,\mathrm{diag}\bigl((\nabla_\theta \mathcal{L}_t)^{\odot 2}\bigr).
\]
Accumulating along time steps, we obtain
\[
G_t = (1 - \gamma) \sum_{\tau=1}^{t} \gamma^{t-\tau} \mathrm{diag}\bigl((\nabla_\theta \mathcal{L}_\tau)^{\odot 2}\bigr) + \gamma^{t-1} I.\]
Focusing on each diagonal entry, the $h$‑th component is given by
\[
(G_t)_{hh} = (1 - \gamma) \sum_{\tau=1}^{t} \gamma^{t-\tau} \left(\frac{\partial \mathcal{L}_{\tau}}{\partial \theta_h}\right)^2 + \gamma^{t-1}.
\]
Because the loss function $\mathcal{L}_{t}(\theta)$ is the squared loss and the reward function is linear with noise $\epsilon$ and a fixed unknown parameter $\theta$, we have
\[
\frac{\partial \mathcal{L}_{\tau}}{\partial \theta_h} = - (r_\tau - \mathbf{x}_\tau^\top \theta)(\mathbf{x}_\tau)_h = - \epsilon_\tau (\mathbf{x}_\tau)_h,
\]
where $(G_t)_{hh}$ denotes the $h$‑th diagonal entry and $(\mathbf{x}_s)_h$ denotes the $h$‑th component of the vector. So $(G_t)_{hh} = (1 - \gamma) \sum_{\tau=1}^{t} \gamma^{t-\tau} \epsilon_\tau^2 (\mathbf{x}_\tau)_h^2 + \gamma^{t-1}$.

Next, consider the sum of $\mathbf{x}_t^\top G^{-1} \mathbf{x}_t$ over time:
\begin{align*}
\sum_{t=1}^T \mathbf{x}_t^\top G^{-1} \mathbf{x}_t &= \sum_{t=1}^T \sum_{h=1}^d \frac{(\mathbf{x}_t)_h^2}{(G_t)_{hh}}\\
&= \sum_{h=1}^d \sum_{t=1}^T \frac{(\mathbf{x}_t)_h^2}{(1 - \gamma) \sum_{\tau=1}^{t} \gamma^{t-\tau} \epsilon_\tau^2 (\mathbf{x}_\tau)_h^2 + \gamma^{t-1}} \\
& \leq \sum_{h=1}^d \sum_{t=1}^T \frac{(\mathbf{x}_t)_h^2}{(1 - \gamma) \sum_{\tau=1}^{t} \gamma^{t-\tau} \epsilon_\tau^2 (\mathbf{x}_\tau)_h^2}\\
& = \sum_{h=1}^d \sum_{t=1}^T \frac{1}{(1-\gamma) \epsilon_t^2} \cdot \frac{\epsilon_t^2 (\mathbf{x}_t)_h^2}{\sum_{\tau=1}^{t} \gamma^{t-\tau} \epsilon_\tau^2 (\mathbf{x}_\tau)_h^2} \\
& \leq \sum_{h=1}^d \sum_{t=1}^T \frac{1}{(1-\gamma)^2 \epsilon_t^2} \cdot \frac{\epsilon_t^2 (\mathbf{x}_t)_h^2}{\sum_{\tau=1}^{t} \epsilon_\tau^2 (\mathbf{x}_\tau)_h^2},
\end{align*}
where $d=d_c+d_s$ is the dimension of the contexts. The last inequality holds because in our algorithm $\gamma \in [0,1]$ is chosen close to $1$ to ensure more stable parameter updates.

By applying Lemma~\ref{lemma:log-smooth-harmonic}, which states \(\sum_{t=1}^T (\frac{a_t}{A_t}) \leq 2 \log\left(1 + \frac{A_T}{A_1}\right)\), and using the boundedness of the context $\mathbf{x}_s$, we obtain
$$\sum_{t=1}^T \frac{\epsilon_t^2 (\mathbf{x}_t)_h^2}{\sum_{\tau=1}^{t} \epsilon_\tau^2 (\mathbf{x}_\tau)_h^2} 
\leq 2 \log ( 1+ \sum_{\tau=1}^{T} \epsilon_\tau^2 (\mathbf{x}_\tau)_h^2) \leq 2\log (\sigma^2 T).$$

We then apply Jensen’s inequality to handle the term, $\sum_{h=1}^d \sum_{t=1}^T \frac{1}{(1-\gamma)^2 \epsilon_t^2} \cdot \frac{\epsilon_t^2 (\mathbf{x}_t)_h^2}{\sum_{\tau=1}^{t} \epsilon_\tau^2 (\mathbf{x}_\tau)_h^2}$. Because $\epsilon_t$ is independent of $\mathcal{F}_{t-1}$ and is a noise term with variance $\sigma^2$, we calculate the expectation of this term to facilitate analysis of the final result. We have
\begin{align*}
\mathbb{E}\left[ \frac{1}{\epsilon_t^2} \cdot w_t \Big| \mathcal{F}_{t-1} \right] 
\leq \frac{1}{\mathbb{E}[\epsilon_t^2 | \mathcal{F}_{t-1}]} \mathbb{E}[w_t | \mathcal{F}_{t-1}] = \frac{1}{\sigma^2} \mathbb{E}[w_t | \mathcal{F}_{t-1}],
\end{align*}
where $w_t = \frac{\epsilon_t^2 (\mathbf{x}_t)_h^2}{\sum_{\tau=1}^{t} \epsilon_\tau^2 (\mathbf{x}_\tau)_h^2}$. Therefore, we have
\begin{align*}
\mathbb{E}\left[\sum_{t=1}^T \frac{1}{(1-\gamma)^2 \epsilon_t^2} \cdot \frac{\epsilon_t^2 (\mathbf{x}_t)_h^2}{\sum_{\tau=1}^{t} \epsilon_\tau^2 (\mathbf{x}_\tau)_h^2} \right] 
&\leq \frac{1}{(1-\gamma)^2\sigma^2} \mathbb{E}\left[ \sum_{t=1}^T \frac{\epsilon_t^2 (\mathbf{x}_t)_h^2}{\sum_{\tau=1}^{t} \epsilon_\tau^2 (\mathbf{x}_\tau)_h^2} \right] \\
&\leq \frac{2\log (\sigma^2 T)}{(1-\gamma)^2\sigma^2}.
\end{align*}
Summing over $h = 1,\dots,d$ yields the final result.

Next, we analyze the variance introduced by momentum update, which is
\begin{align*}
v_{i,k}^{(t)} &= \mu v_{i,k}^{(t-1)} + (1-\mu)\nabla\mathcal{L}^{(t)} \\
&= (1-\mu)\sum_{\tau=1}^{t} \mu^{t-\tau-1} \nabla\mathcal{L}^{(\tau)}.
\end{align*}
Because the arm selection and reward at each time step are independent, the expectation becomes
\begin{align*}
\mathbb{E}\left[\|v_{i,k}^{(t)}\|^2\right] &= (1-\mu)^2 \mathbb{E}\left[\left\|\sum_{\tau=1}^{t} \mu^{t-\tau-1} \nabla\mathcal{L}^{(\tau)}\right\|^2\right] \\
&= (1-\mu)^2 \sum_{\tau=1}^{t} \mu^{2(t-\tau)} \mathbb{E}\left[\|\nabla\mathcal{L}^{(\tau)}\|^2\right].
\end{align*}
Geometric series summation satisfies\(\sum_{\tau=1}^{t} \mu^{2(t-\tau)} = \frac{1 - \mu^{2t}}{1 - \mu^2} \leq \frac{1}{1 - \mu^2}\), we obtain the temporal cumulative variance bound:
\begin{align*}
\sum_{t=1}^T \mathbb{E}\left[\|v_{i,k}^{(t)}\|^2\right] &\leq (1-\mu)^2 \sum_{t=1}^T \sum_{\tau=1}^{t} \mu^{2(t-\tau)}\mathbb{E}\left[\|\nabla\mathcal{L}^{(\tau)}\|^2\right] \\
&= (1-\mu)^2 \sum_{\tau=1}^{t} \mathbb{E}\left[\|\nabla\mathcal{L}^{(\tau)}\|^2\right] \sum_{t=\tau}^T \mu^{2(t-\tau)} \\
&\leq \frac{(1-\mu)^2}{1 - \mu^2} \sum_{\tau=1}^{t} \mathbb{E}\left[\|\nabla\mathcal{L}^{(\tau)}\|^2\right].
\end{align*}
According to Lemma 3 of~\citet{reddi_convergence_2019}, the cumulative effect of momentum is upper bounded by \(O(\log T)\). Then we obtain \(\sum_{\tau=1}^{t}\mathbb{E}\left[\|\nabla\mathcal{L}^{(\tau)}\|^2\right] \leq \sigma^2 L^2 \log T\). Thus,
\begin{align*}
\sum_{t=1}^T \mathbb{E}\left[\|v_{i,k}^{(t)}\|^2\right] &\leq \frac{(1-\mu)^2 \sigma^2 L^2 \log T}{1 - \mu^2} = \frac{(1-\mu) \sigma^2 L^2 \log T}{1 + \mu}.
\end{align*}
Combining both terms, we obtain
\[
 \mathbf{x}_t^\top G^{-1} \mathbf{x}_t \leq \sum_{t=1}^T \frac{2d\log (\sigma^2 T)}{(1-\gamma)^2\sigma^2}+ \frac{(1-\mu)\sigma^2\log T}{(1+\mu)(1-\gamma)},
\]
where $d$ is the dimension of the contexts, $\sigma^2$ is the variance of the noise of our reward function, $\mu$ is the momentum parameter, $\gamma \in [0,1]$ is the smoothing parameter.
\QED \endproof

Building upon the confidence radius bound derived in Proposition~\ref{prop:grad-bound}, we now analyze the cumulative regret incurred by the Net-SGD-UCB algorithm at each node. The complete proof of Proposition~\ref{prop:sgd-ucb-regret-bound} is provided below.

\proof{Proof of Proposition~\ref{prop:sgd-ucb-regret-bound}.}\label{prop:sgd-ucb-regret-bound-proof}
We first establish the high-probability condition required for the analysis. 
The algorithm maintains a global accumulation matrix $G_{t}$ for each node $i$, which is updated using the gradients of the selected arms. 
For arm $k$ at time $t$, the corresponding confidence interval is given by
\[
\hat{\mu}_{t} \pm \alpha^{\text{sgd}} \sqrt{\mathbf{x}_{t}^{k\top} G_{t}^{-1} \mathbf{x}_{t}}.
\]
We aim to ensure that this confidence bound holds with high probability.
This guarantee depends on the stochastic noise in the reward observations. The noise \(\epsilon_t\) in the reward propagates through the matrix \( G_t \), introducing randomness in the estimated gradients.
By controlling the cumulative effect of these noise terms over all coordinates and time steps, we ensure that the true mean reward lies within the stated confidence interval with probability at least $1-1/T$.

We define the normalized noise component for the $h$-th coordinate as
\[
z_{t,h} = \frac{(\mathbf{x}_{t})_h \epsilon_t}{(G_t)_{hh}},
\]
where $(G_t)_{hh}$ denotes the $h$-th diagonal entry, and $(\mathbf{x}_s)_h$ denotes the $h$-th entry of the context vector. 
Given that $\epsilon_t \sim \text{sub-Gaussian}(\sigma^2)$ and $\mathbf{x}_t$ is bounded, each $z_{t,h}$ is also sub-Gaussian. 
Applying the sub-Gaussian tail inequality, we obtain
\[
\mathbf{P}\left(\left|\sum_{t=1}^T z_{t,h}\right| \geq \beta \mid \mathcal{F}_{T-1}\right) \leq 2\exp\left(-\frac{a^2}{2\sigma^2}\right).
\]
By choosing $\beta = \sigma\sqrt{2\log(2dT)}$ and applying a union bound over $h=1,\dots,d$, we conclude that with probability at least $1- \frac{1}{T}$,
\[
\left|\sum_{h=1}^d \sum_{t=1}^T \frac{(\mathbf{x}_t)_h \epsilon_t}{\sqrt{(G_t)_{hh}}}\right| \leq \sigma\sqrt{2dT\log(2dT)}.
\]
This concentration result guarantees that, with high probability, the confidence interval
\[
\hat{\mu}_t \pm \alpha^{\text{sgd}} \sqrt{\mathbf{x}_t^\top G_t^{-1} \mathbf{x}_t}
\]
contains the true expected reward. Because the sub-Gaussian noise contribute is of the same order as the confidence radius. Here, the exploration parameter \( \alpha^{\text{sgd}} \) is chosen such that it scales with \( 1 + \sigma^2 \), thereby ensuring that the noise-induced deviation is absorbed into the radius and the bound remains valid. 

We now turn to the regret analysis. Under the above high-probability event, the instantaneous regret incurred from selecting suboptimal arm $k_t$ instead of the optimal arm $k^\star$ is bounded by
\[
\Delta_{t} \leq 2\alpha^{\text{sgd}} \sqrt{\mathbf{x}_{t}^{k_t\top} G_{t}^{-1} \mathbf{x}_{t}^{k_t}}.
\]
Summing over time and applying the Cauchy--Schwarz inequality, we get
\[
R(T) \leq 2\alpha^{\text{sgd}} \sqrt{T \sum_{t=1}^T \mathbf{x}_t^\top G^{-1} \mathbf{x}_t}.
\]

From Proposition~\ref{prop:grad-bound}, we know
\[
\sum_{t=1}^T \mathbf{x}_t^\top G^{-1} \mathbf{x}_t \leq \frac{2d\log (\sigma^2 T)}{(1-\gamma)^2\sigma^2} + \frac{(1-\mu)\sigma^2 \log T}{(1+\mu)(1-\gamma)}.
\]
Substituting this bound into the regret expression yields the final bound:
\[
R(T) \leq 2\alpha^{\text{sgd}}\left( \frac{\sqrt{2dT \log (\sigma^2 T)}}{(1 - \gamma)\sigma} + \sigma \sqrt{\frac{(1 - \mu)T\log T}{(1 + \mu)(1 - \gamma)}} \right),
\]
where $d$ is the dimension of the contexts, $\sigma^2$ is the variance of the noise in the reward function, $\mu$ is the momentum parameter, and $\gamma$ is the smoothing parameter.
\QED \endproof

After establishing the regret bound for each individual node, we now extend the analysis to the entire network by incorporating the weighted influence among nodes through the inter-node weight matrix. We provide the full proof of Theorem~\ref{thm:sgd-ucb-network-regret} in Section~\ref{sec:sgd-ucb} below.

\proof{Proof of Theorem~\ref{thm:sgd-ucb-network-regret}.}\label{thm:sgd-ucb-network-regret-proof}
We decompose the cumulative regret into two parts: the contribution fro the shared parameter, denoted \( R_1(T) \), and the node-specific regret, denoted \( R_2(T) \).

When a suboptimal arm $a_t$ is selected, the regret arising from the shared parameter $\theta_c^{a_t}$ is analyzed in four steps: decomposition of the estimation error, control of the variance, bias analysis, and evaluation of the weighted confidence bound.

Each node \(i\) maintains an estimate \(\hat\theta_{c,i}^{a_t} \in \mathbb{R}^{d_c}\) of the true shared parameter \(\theta_c^{a_t}\), and defines the estimation error as \(\Delta_{i,c,t} = \hat\theta_{c,i}^{a_t} - \theta_c^{a_t}\). At each time step \(t\), node \(i\) updates its estimate by aggregating its neighbors' estimates and performing a stochastic gradient step, which is
\[
\hat\theta_{c,i,t}^{a_t} = \sum_{j=1}^{N} \omega_{ji}^{a_t} \hat\theta_{c,j,t-1}^{a_t} + \eta^{\text{sgd}}\,\nabla \mathcal{L}_{i,t}.
\]
Subtracting $\theta_c^{a_t}$ yields the recursive form of the estimation error:
\[
\Delta_{i,c,t} = \sum_{j=1}^{N} \omega_{ji}^{a_t} \Delta_{j,c,t-1} + \eta^{\text{sgd}}\,\nabla \mathcal{L}_{i,t}.
\]
This recursion separates the aggregation dynamics from the stochastic gradient error. Decomposing the error gives
\[
\Delta_{i,c,t} = \underbrace{\left( \Delta_{i,c,t} - \mathbb{E}[\Delta_{i,c,t}] \right)}_{\text{Variance}} + \underbrace{\left( \mathbb{E}[\Delta_{i,c,t}] - \theta_c^{a_t} \right)}_{\text{Bias}}.
\]

\textbf{Variance term.} By applying EMA smoothing with factor \(\gamma\), and following results from decentralized SGD~\citep{lian_can_2017}, the variance term is bound as $O(\frac{\sigma^2}{(1-\gamma)m})$, where $m$ denotes the number of SGD updates. Combining this with global updates and arm selection structure, we obtain the bound
\[
\mathbb{E}\left[ \| \Delta_{i,c,t} - \mathbb{E}[\Delta_{i,c,t}] \|^2 \right] \leq \frac{\sigma^2}{(1-\gamma) n_t^k},
\]
where \(\sigma^2\) is the noise variance and \(n_{t}^k\) denotes selected counts before time $t$ among all nodes for arm $k$.

Because updates are only triggered when arm $k$ is selected, we split the sum accordingly. Following the same analytical approach as in NetLinUCB, we aggregate over all time steps and nodes:
\begin{align*}
\sum_{i=1}^N \sum_{t=1}^T \frac{\sigma^2}{(1-\gamma) n_t^k}
&\leq \sum_{k\in \mathcal{A}} \sum_{m=1}^{n_T^k} \frac{\sigma^2}{(1-\gamma) m} \leq \sum_{k\in \mathcal{A}} \frac{1-\gamma}{\sigma^2} \log(\frac{\sigma^2}{(1-\gamma) n_T^k})\\
& \leq \frac{1-\gamma}{\sigma^2} \cdot K \log(\frac{\sigma^2}{(1-\gamma) \sum_{k \in \mathcal{A}}n_T^k}) = \frac{(1-\gamma)K}{\sigma^2}  \log(\frac{\sigma^2}{(1-\gamma) NT}),
\end{align*}
which becomes negligible as $T \to \infty$.

\textbf{Bias term.} Because all nodes share the same true parameter \(\theta_c^{a_t}\), the expected estimation bias vanishes:
\[
\sum_{t=1}^T \| \mathbb{E}[\Delta_{i,c,t}] - \theta_c^{a_t} \|^2 \leq C^2 T \max_j \| \theta_c^{a_t} - \theta_{c,j}^* \|^2 = 0,
\]
where \(C = \mathcal{O}(1)\) is a bounded constant.

\textbf{Confidence radius term.} As $T$ increases, both the variance and bias components become negligible. We now focus on bounding the contribution from the weighted confidence radius. The regret contribution becomes
$$\sum_{i=1}^N \sum_{t=1}^T \sqrt{\sum_{j=1}^n (\omega_{ji})^2 \mathbf{x}_{c, t}^\top G_{c,t}^{-1} \mathbf{x}_{c,t}}.$$

According to Proposition~\ref{prop:grad-bound}, $\sum_{t=1}^T \mathbf{x}_{t}^\top G_{t}^{-1} \mathbf{x}_{t} \leq \frac{\sqrt{2d_{i,s} \log (\sigma^2 T)}}{(1 - \gamma)\sigma} + \sigma \sqrt{\frac{(1 - \mu)\log T}{(1 + \mu)(1 - \gamma)}}$, and noting that the dominant component in $\omega_{ji}$ primarily arises from arm selection statistics,
\begin{align*}
\sum_{i=1}^N \sum_{t=1}^T \sqrt{\sum_{j=1}^n (\omega_{ji})^2 \mathbf{x}_{i, c, t}^\top G_{j,c,t}^{-1} \mathbf{x}_{i, c,t}} &= \sum_{i=1}^N \sum_{t=1}^T \sqrt{\sum_{j=1}^n (\frac{n_{j,t}^k}{n_t^k})^2 \mathbf{x}_{i, c, t}^\top G_{j,c,t}^{-1} \mathbf{x}_{i,c,t}}.
\end{align*}

\textbf{Final bound for shared component \(R_1\).} Applying the Cauchy–Schwarz inequality, we derive
\begin{align*}
\sum_{t=1}^T \sum_{i=1}^N r_{i,c,t}
&\leq \sqrt{ NT \cdot \sum_{t=1}^T \sum_{i=1}^N \sum_{j=1}^N (\frac{n_{j,t}^k}{n_t^k})^2 \mathbf{x}_{i, c, t}^\top G_{j,c,t}^{-1} \mathbf{x}_{i, c,t}} \\
&= \sqrt{ NT \cdot \sum_{t=1}^T \sum_{j=1}^N  (\frac{n_{j,t}^k}{n_t^k})^2 (\sum_{i=1}^N \mathbf{x}_{i, c, t})^\top G_{j,c,t}^{-1} (\sum_{i=1}^N \mathbf{x}_{i, c, t})}\\
&\leq \sqrt{ NT \cdot \sum_{t=1}^T (\sum_{i=1}^N \mathbf{x}_{i, c, t})^\top G_{j,c,t}^{-1} (\sum_{i=1}^N \mathbf{x}_{i, c, t})}\\
& = \mathcal{O}\left( \frac{\sqrt{ N d_c T \log (\sigma^2 T})}{1 - \gamma} \right),
\end{align*}
where the weighting factor satisfies $\sum_{j=1}^N  (\frac{n_{j,t}^k}{n_t^k})^2 \leq \frac{(\sum_{j=1}^Nn_{j,t}^k) \cdot (\sum_{j=1}^N n_{j,t}^k)}{(n_t^k)^2} = 1$. The final inequality holds because each matrix $G_{j,c,t}$ aggregates information from all nodes through the corresponding context vectors $\sum_{i=1}^N \mathbf{x}_{i, c, t}$. Thus, the bound from Proposition~\ref{prop:grad-bound} applies. The residual contributions from inner products involving shared contexts are absorbed into the analysis of the node-specific component $R_2(T)$.

\textbf{Final bound for node-specific component \(R_2\).} We now bound the regret arising from node-specific parameters, corresponding to \(N\) independent instances of SGD-UCB. Using the regret bound from Proposition~\ref{prop:sgd-ucb-regret-bound}, each node contributes
\[
O\left( \frac{\sqrt{2d_{i,s}T \log (\sigma^2 T)}}{(1 - \gamma)\sigma} + \sigma \sqrt{\frac{(1 - \mu)T\log T}{(1 + \mu)(1 - \gamma)}} \right). 
\]
Summing over all nodes yields
\[
R_2(T) =  O\left(\sum_{i=1}^{N} \frac{\sqrt{2d_{i,s}T \log (\sigma^2 T)}}{(1 - \gamma)\sigma} + N \sigma \sqrt{\frac{(1 - \mu)T\log T}{(1 + \mu)(1 - \gamma)}} \right).
\]
This completes the proof by combining both the shared and node-specific regret components.
\QED \endproof

\section{Supplementary Numerical Experiment Results}\label{app:numerical}

We present detailed results for additional simulation instances in Figures~\ref{fig:exp2} and~\ref{fig:exp3to5}. These experiments, conducted with a total of $N=12$ cities, complement the analysis in the main text (Section~\ref{sec:numerical}) and explore a range of challenging conditions. Specifically, we vary: (i) the proportion of shared versus local context dimensions, (ii) the variance of the context distribution, (iii) the number of arms, and (iv) the reward gap between the optimal and second-best actions. Each setting emphasizes different aspects of model generalization and exploration efficiency, and different algorithms exhibit varying levels of performance depending on the contextual structure and reward pattern.

\begin{figure*}[ht]
\centering
\includegraphics[width=0.7\linewidth]{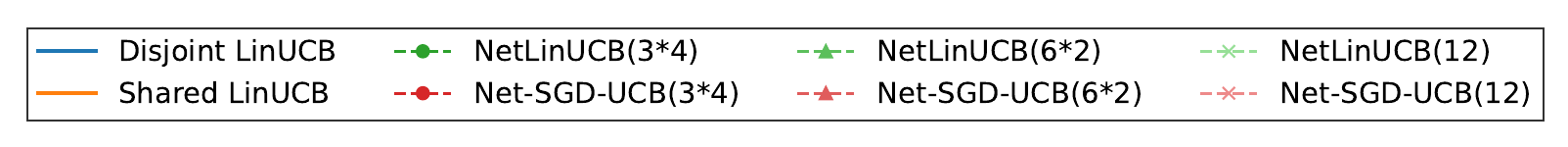}
\caption{Numerical legend. In all figures below, blue and orange denote Disjoint and Shared LinUCB, respectively. Red and green curves represent NetLinUCB and Net-SGD-UCB under varying simulation environment settings. Each label denotes a sub-network structure: \textbf{(3*4)} and \textbf{(6*2)} correspond to 3 or 2 fully connected sub-networks with 4 or 6 nodes each, respectively, while \textbf{(12)} denotes a single fully connected network. In Net-based models, information is shared within each sub-network but not across sub-networks. Disjoint LinUCB corresponds to the degenerate case \textbf{(1*12)} with no sharing, and Shared LinUCB to the extreme case \textbf{(12)} with full sharing.}
\end{figure*}

\begin{figure}[htb]
  \centering
  \begin{minipage}[b]{0.48\linewidth}
    \centering
    \includegraphics[width=\linewidth]{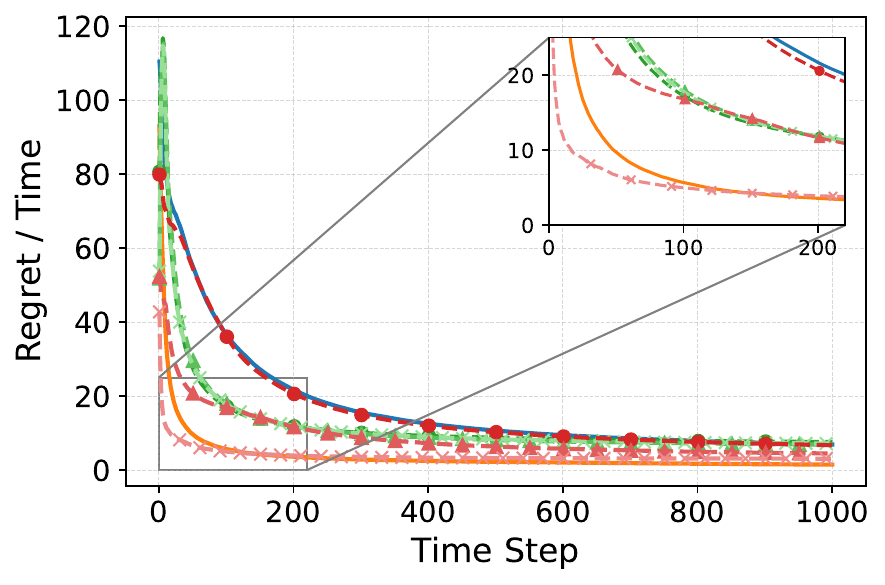}
    \caption*{(a) Low shared-to-specific context ratio, $(t, \tfrac{R(t)}{t})$}
  \end{minipage}
  \hfill
  \begin{minipage}[b]{0.48\linewidth}
    \centering
    \includegraphics[width=\linewidth]{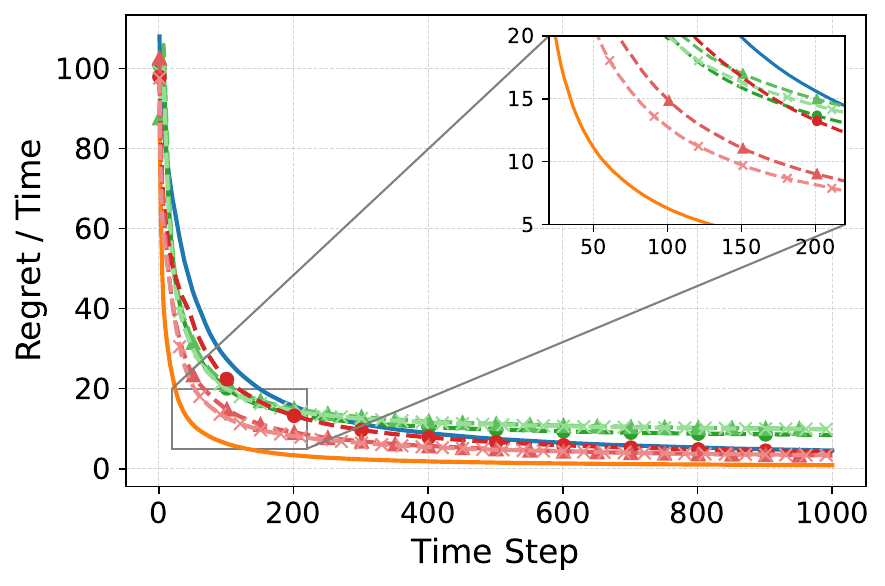}
    \caption*{(b) High shared-to-specific context ratio, $(t, \tfrac{R(t)}{t})$}
  \end{minipage}
  \caption{Performance under different shared-to-specific context ratios. When shared context dominates, Shared LinUCB outperforms others by leveraging transferable structure; in contrast, when local context dominates, NetLinUCB converges faster, and Disjoint LinUCB becomes more competitive due to its independence.}
  \label{fig:exp2}
\end{figure}

\begin{figure}[ht]
\centering

% ----------- First row -----------
\begin{minipage}[c]{0.45\linewidth}
  \includegraphics[width=\linewidth]{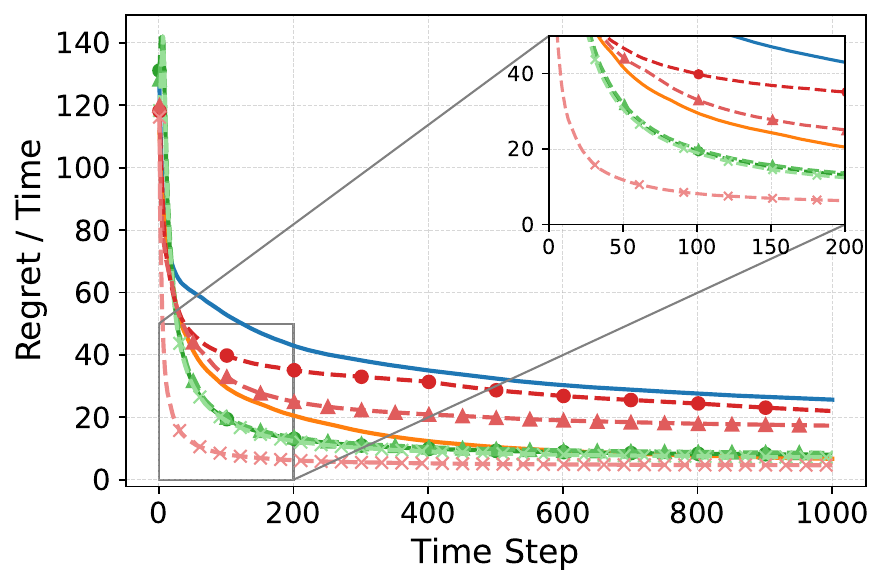}
  \centering
  \small (a) Default, $(t, \tfrac{R(t)}{t})$
\end{minipage}
\hspace{0.05\linewidth}
\begin{minipage}[c]{0.45\linewidth}
  \includegraphics[width=\linewidth]{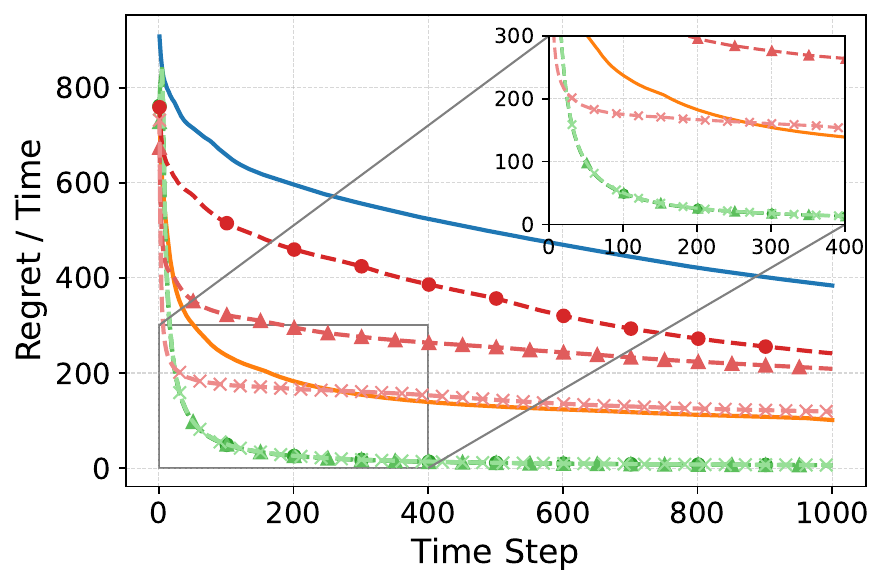}
  \centering
  \small (b) Large context outlier, $(t, \tfrac{R(t)}{t})$
\end{minipage}

% ----------- Second row -----------
\begin{minipage}[c]{0.45\linewidth}
  \includegraphics[width=\linewidth]{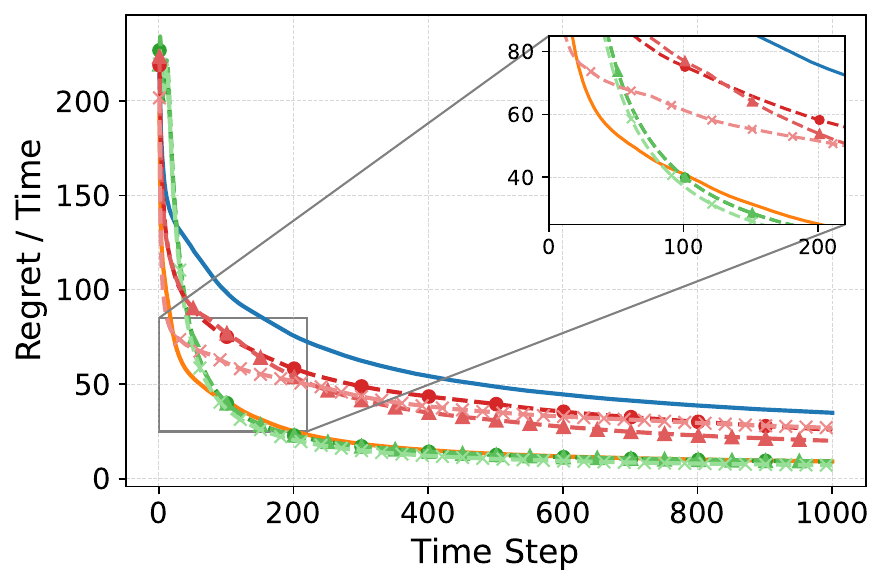}
  \centering
  \small (c) Rich action space, $(t, \tfrac{R(t)}{t})$
\end{minipage}
\hspace{0.05\linewidth}
\begin{minipage}[c]{0.45\linewidth}
  \includegraphics[width=\linewidth]{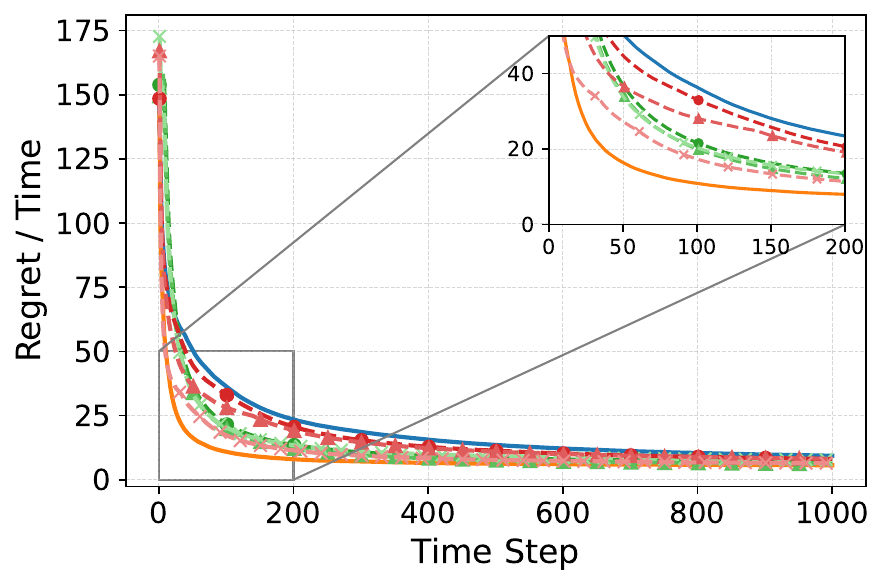}
  \centering
  \small (d) Large context-reward gap, $(t, \tfrac{R(t)}{t})$
\end{minipage}

\caption{
Performance across challenging scenarios. (a) shows the default configuration, serving as a baseline for comparison with the following three cases. (b) introduces a large contextual outlier, where Net-SGD-UCB remains robust due to adaptive variance tracking. (c) increases the number of actions, under which Net-SGD-UCB performs best by efficiently scaling with arm space. (d) features a large reward gap, where Shared LinUCB achieves the lowest regret by rapidly distinguishing optimal arms through aggressive generalization. In contrast, NetLinUCB performs best under the small reward gap in (a), benefiting from fine-grained estimation via network-based inference.
}
\label{fig:exp3to5}
\end{figure}

Table~\ref{tab:radius_decrease_comparison} quantifies the impact of network structure on uncertainty reduction. NetLinUCB achieves up to a 95\% decrease in confidence radius under fully connected settings, highlighting the benefits of structured parameter aggregation. In contrast, Net-SGD-UCB maintains consistent reductions across all connectivity levels, reflecting its robustness through adaptive variance tracking. This distinction underscores the complementary strengths of the two methods: NetLinUCB improves significantly with denser graphs, while Net-SGD-UCB performs reliably under both sparse and dense settings.

\begin{table}[hbt]
\centering
\begin{tabular}{c|c|c|c}
\textbf{Method} & \textbf{3×4} & \textbf{6×2} & \textbf{12} \\
\
Net-SGD-UCB & 21.72\% & 22.40\% & 22.66\% \\
NetLinUCB   & 39.97\% & 84.75\% & 94.82\% \\
\end{tabular}
\caption{Confidence interval reduction ratio compared to Disjoint LinUCB, under different network connectivity.}
\label{tab:radius_decrease_comparison}
\end{table}

We further analyze the evolution of confidence intervals produced by different algorithms. As shown in Figure~\ref{fig:city_estimation_band}, all methods gradually converge toward the true revenue curve. Disjoint LinUCB exhibits slower convergence due to its independent learning per node. Shared LinUCB and NetLinUCB demonstrate faster reduction in uncertainty by exploiting cross-node parameter sharing. Notably, Net-SGD-UCB begins with wider intervals due to its adaptive gradient-based updates, but achieves consistent narrowing over time, thanks to its EMA-based variance control. This highlights the trade-off between stability and adaptivity in decentralized high-dimensional environments.

\begin{figure}[ht]
  \centering
  \includegraphics[width=0.8\linewidth]{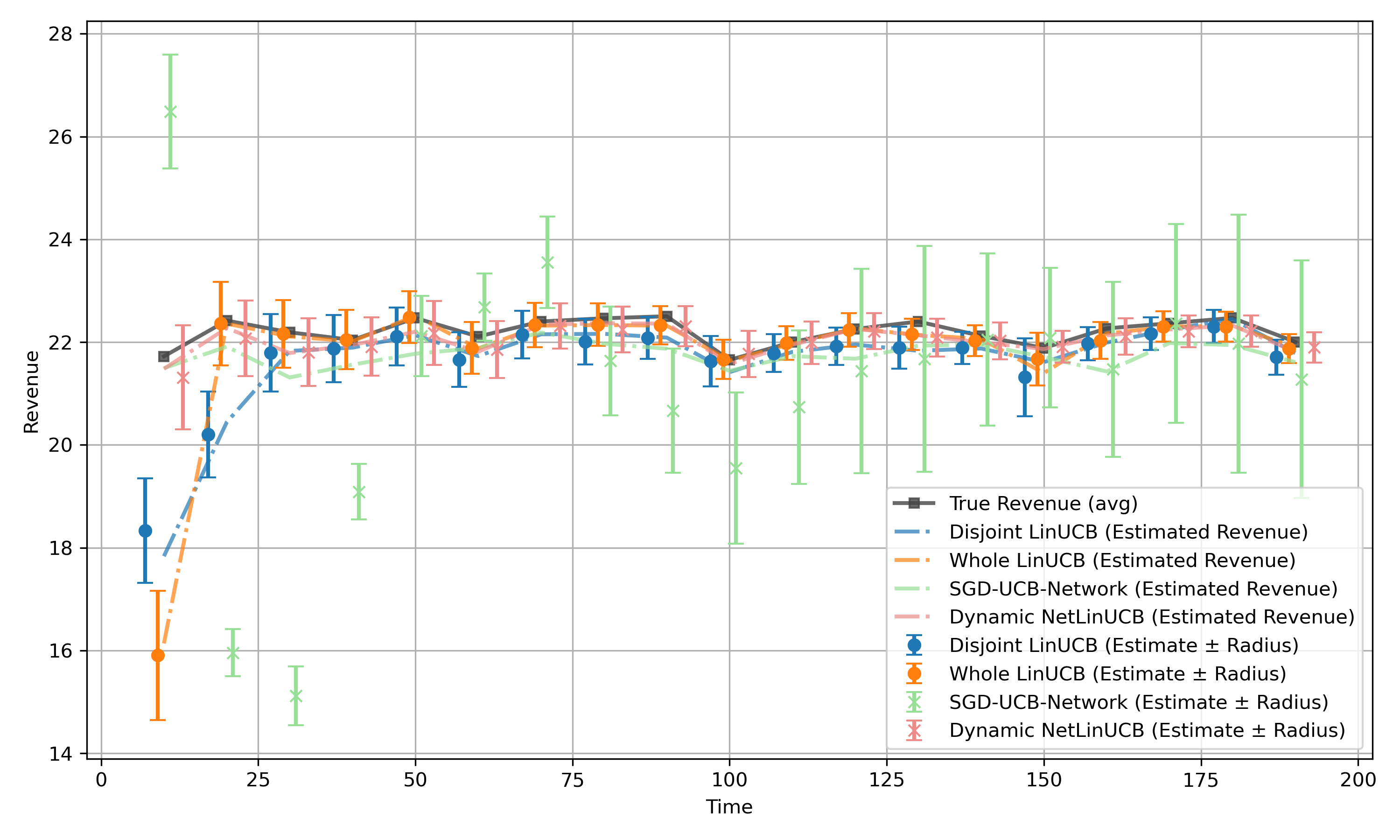}
  \caption{True revenue curve and estimated confidence intervals across all cities under different algorithms.}
  \label{fig:city_estimation_band}
\end{figure}

\end{APPENDICES}

%%%%%%%%%%%%%%%%%
\end{document}